\documentclass{article}
\usepackage[preprint]{neurips_2026}  

\usepackage{bbm}
\usepackage[utf8]{inputenc}
\usepackage[T5,T1]{fontenc}
\usepackage{hyperref}
\usepackage{url}
\usepackage{titletoc}  
\titlecontents{section}
  [2.5em]                          
  {\addvspace{0.4em}\bfseries}     
  {\contentslabel{2.5em}}          
  {}
  {\hfill\contentspage}            

\titlecontents{subsection}
  [5.8em]                          
  {}
  {\contentslabel{3.3em}}          
  {}
  {\titlerule*[0.5em]{.}\contentspage}  

\usepackage{booktabs}
\usepackage{tabularx}
\usepackage{array}
\usepackage{amsfonts}
\usepackage{amsmath}
\usepackage{amssymb}
\usepackage{nicefrac}
\usepackage{xcolor}
\usepackage{graphicx}
\graphicspath{{figures/}}
\usepackage{float}
\usepackage{subcaption}
\usepackage{algorithm}
\usepackage{algpseudocode}
\usepackage{multirow}
\usepackage{makecell}
\usepackage{bbm}
\usepackage{amsthm}
\usepackage{rotating}   
\usepackage{booktabs}   
\usepackage{graphicx}   
\newtheorem{theorem}{Theorem}
\newtheorem{proposition}{Proposition}
\newtheorem{lemma}{Lemma}
\newtheorem{corollary}[theorem]{Corollary}
\usepackage{tikz}
\usetikzlibrary{positioning,arrows.meta,calc,fit,backgrounds,decorations.pathreplacing,shapes.geometric}

\definecolor{improvegreen}{RGB}{0,128,0}
\definecolor{worsered}{RGB}{180,0,0}
\providecommand{\gain}{}\renewcommand{\gain}[1]{\textcolor{improvegreen}{#1}}
\providecommand{\loss}{}\renewcommand{\loss}[1]{\textcolor{worsered}{#1}}
\title{%
  Training-Free Cultural Alignment of 
  Large Language Models via Persona Disagreement%
}

\author{%
  Dao Sy Duy Minh$^{1,\dagger}$ \\
  \texttt{23122041@student.hcmus.edu.vn}
  \And
  Huynh Trung Kiet$^{1,\dagger}$ \\
  \texttt{23122039@student.hcmus.edu.vn}
  \And
  Tuan Nguyen$^{2,\dagger}$ \\
  \texttt{tuan.nguyen.1@warwick.ac.uk}
  \AND
  Chi-Nguyen Tran$^{1}$ \\
  \texttt{23122044@student.hcmus.edu.vn}
  \And
  Phu-Hoa Pham$^{1}$ \\
  \texttt{23122030@student.hcmus.edu.vn}
  \And
  Nguyen Lam Phu Quy$^{1}$ \\
  \texttt{23122048@student.hcmus.edu.vn}
  \AND
  The Anh Han$^{3}$ \\
  \texttt{t.a.han@tees.ac.uk}
  \And
  Long Tran-Thanh$^{2,*}$ \\
  \texttt{long.tran-thanh@warwick.ac.uk}
}

\begin{document}

\maketitle

{\let\thefootnote\relax\footnotetext{%
  $^{1}$Faculty of Information and Technology, University of Science,
  Vietnam National University, Ho Chi Minh City, Vietnam.\quad
  $^{2}$Department of Computer Science, University of Warwick, Coventry, United Kingdom.\quad
  $^{3}$School of Computing, Engineering and Digital Technologies, Teesside University, United Kingdom.\\
  $^{\dagger}$Equal contribution.\quad $^{*}$Lead senior author.
}}

\begin{abstract}

LLMs are increasingly deployed in decisions that require culture-dependent moral judgements, yet they answer as if the whole world thinks with a Western mindset. The Moral Machine experiment \citep{awad2018moral} showed this is wrong at scale: 40 million judgments across 233 countries reveal that moral preferences are systematically structured by culture, and a model that ignores this variation does not merely underperform, but also imposes one society's intuitions on all others. Existing fixes do not scale to global deployment, as fine-tuning needs per-country preference data and GPU budgets, reward-guided decoding needs per-country reward models, and activation steering needs access to model internals that black-box APIs do not expose. In this work, we focus on this realistic inference time regime, with no weight updates, no training data, and no internal access. The key observation is that within-country demographic \emph{disagreement}, not consensus, is the steering signal. When culturally grounded personas agree, the base model is already calibrated. But when they disagree, the spread tells us what to fix and how. We propose DISCA,  which instantiates each country as a panel of four World-Values-Survey-grounded persona agents \citep{haerpfer2022wvs}. It converts their disagreement into a bounded, loss-averse correction, whose magnitude is set by the panel's variance, and shrinks the correction toward zero when the estimate is unreliable. Across 20 countries and 7 open-weight backbones  from five model families, DISCA reduces cultural misalignment on MultiTP \citep{jin2025multitp} by 10--24\% on the six $\geq$3.8B backbones, 3.4\% on the smallest 2B model on binary moral dilemmas, and 2--7\% on open-ended scenarios. Furthermore, a 14B backbone with DISCA reaches lower absolute misalignment than a vanilla 70B model.

\end{abstract}


\vspace{-0.4cm}
\section{Introduction}
\label{sec:intro}
\vspace{-0.2cm}

Large language models increasingly mediate decisions that turn on moral judgement: content moderation policies, clinical prioritisation guidelines, autonomous-vehicle behaviour specifications, and high-stakes recommendation pipelines. In each of these settings, the model's implicit moral preferences become the system's moral preferences. Yet a growing body of evidence shows that these implicit preferences are not culturally neutral. The Moral Machine experiment collected 40 million judgments across 233 countries and showed that moral preferences are systematically structured by culture~\citep{awad2018moral}; large language model (LLM) outputs correlate most strongly with respondents from Western, educated, industrialised, rich, and democratic populations~\citep{henrich2010weirdest, santurkar2023opinions}; and post-training alignment introduces further unintended cultural shifts~\citep{ryan2024unintended, zewail2025moral}. A model that ignores this variation does not merely underperform on a benchmark, but also imposes one society's intuitions on every user it serves.

The cultural gap is now well-documented at the model level. The MultiTP benchmark~\citep{jin2025multitp} extends the Moral Machine protocol to 107 languages with country-specific Average Marginal Component Effects (AMCEs), and reports consistently weak country-level alignment across all 19 evaluated LLMs. The asymmetries are substantial, e.g., Japan ranks first among countries on preference for sparing pedestrians, while China ranks 116th, and current models do not capture either extreme. Concurrent work shows that sociodemographic persona prompts shift LLM moral decisions two to four times more than they shift human decisions~\citep{kim2025persona}, indicating that the models are not insensitive to cultural framing, but that the framing they receive at deployment time does not match the cultural distribution they are asked to serve.

Closing this gap is harder than it looks. Three families of methods have been proposed, and each demands resources that do not scale to the long tail of countries served by a single deployed model. Fine-tuning and culture-aware adapters require per-country preference datasets and GPU budgets for every target population~\citep{zhang2026culturemanager, yao2024caredio}. Reward-guided decoding requires a separate trained reward model per country~\citep{khanov2024args, mudgal2024controlled}, and activation steering requires write-access to model internals~\citep{arditi2025refusal, turner2023activation} that black-box APIs do not expose. Any deployable solution must therefore work at inference time, must not assume per-country reward models or labelled preference data, and must not assume write access to model internals beyond what an API exposes. We refer to this as the \emph{black-box, public-data-only regime}: the method we propose requires only API-level access to decision-token log-probabilities (exposed by every open-weight backbone we evaluate and by major commercial APIs that return \texttt{logprobs}), and we argue this is the \emph{only regime where cultural alignment can scale from a handful of curated countries to the hundreds a globally-deployed model actually serves}.

We observe that when certain culturally grounded persona prompts representing the same country face the same moral dilemma, the dispersion of their decoding-time logit gaps is itself informative. If they agree, the base model is already calibrated for that country and no correction is needed. But if they disagree, the spread of their gaps encodes both the direction and the magnitude of the needed correction. Our method, DISCA (Disagreement-Informed Steering for Cultural Alignment), instantiates each country as a panel of four persona agents grounded in World Values Survey microdata~\citep{haerpfer2022wvs}, evaluates them in a single batched forward pass, and converts the panel's variance into a bounded, loss-averse logit correction whose magnitude is automatically attenuated when the underlying estimate is unreliable. We show that this attenuation is principled: the underlying oracle correction is the MSE-optimal scalar shrinkage of the consensus, whose weight depends only on the within-panel variance (Proposition~\ref{prop:shrinkage}), and we approximate it with an empirical variance-aware shrinkage heuristic.

Across 20 countries spanning four continents and seven open-weight backbones (2B--70B parameters) from five model families, DISCA reduces cultural misalignment on MultiTP by 10--24\% on the six $\geq$3.8B backbones (and 3.4\% on the smallest 2B model) on binary moral dilemmas, and 2--7\% on open-ended scenarios. The strongest absolute alignment is achieved by Phi-4 (14B), which surpasses a vanilla Llama-3.3-70B backbone despite using one fifth of the parameters; this suggests that, in the regime we study, calibration competes with scale rather than merely complementing it. An evaluation on the BLEnD factual cultural QA benchmark~\citep{myung2024blend} reveals a clean scope boundary: DISCA steers values, where the decision reduces to a scalar logit gap, but does not transfer to factual recall, where the decision is a single token in a large vocabulary. Failure modes are architecturally rather than culturally determined and are diagnosable from the base model's vanilla logit conditioning alone. Primary metrics (MIS and diagnostics) are defined in \S\ref{sec:experiments}.
Our contributions are summarised as follows:
\vspace{-0.1cm}
\begin{itemize}
    \item An inference-time cultural alignment method that requires no weight updates, no per-country reward models, and no white-box access.
    \item A formal characterisation of disagreement-driven shrinkage. DISCA is the first inference-time alignment method to use independent-run disagreement as a reliability signal.
    \item An empirical evaluation across 20 countries, four continents, seven open-weight backbones, and three evaluation formats (binary moral dilemmas, open-ended scenarios, factual QA).
\end{itemize}

\vspace{-0.4cm}
\section{Related Work}
\label{sec:related}
\vspace{-0.2cm}

\paragraph{Benchmarks and cultural evaluation.}
The Moral Machine experiment~\citep{awad2018moral} established that moral preferences are culturally structured, and MultiTP~\citep{jin2025multitp} extends it to 107 languages with country-specific AMCEs; LLM alignment to these AMCEs varies widely across model families~\citep{takemoto2024moral, ahmad2025largescale} and correlates most with WEIRD respondents~\citep{santurkar2023opinions, henrich2010weirdest}, with RLHF adding unintended cultural shifts~\citep{ryan2024unintended, zewail2025moral}. \citet{kim2025persona} further show that sociodemographic personas shift LLM moral decisions 2--4$\times$ more than they shift humans (a ``partisan sorting'' phenomenon absent in human respondents). This last finding directly motivates our design: rather than using personas to steer toward a single viewpoint, DISCA uses them to \emph{measure} within-country disagreement and converts that into a bounded correction. 

\paragraph{Inference-time alignment.}
The pluralistic alignment vision~\citep{sorensen2024roadmap} argues that inference-time methods are a natural vehicle for cultural adaptation, but existing approaches each sacrifice one of the three constraints we target: activation steering~\citep{arditi2025refusal, turner2023activation} requires white-box access; reward-guided decoding~\citep{khanov2024args, mudgal2024controlled} needs per-country reward models; culture-aware adapters~\citep{zhang2026culturemanager, yao2024caredio} need per-culture training. Training-time prospect-theoretic alignment~\citep{ethayarajh2024kto} uses the same PT value shape we adopt, but at training time; DISCA applies it at \emph{test time}, per scenario, aggregating heterogeneous personas rather than shaping a trained policy.

\paragraph{Distinction from superficially similar paradigms.}
DISCA may resemble ensemble methods architecturally, but not algorithmically. In self-consistency~\citep{wang2023selfconsistency} and LLM debate~\citep{du2023improving, liang2024encouraging}, diversity is noise to be eliminated via majority vote; standard ensembles average outputs to reduce \emph{prediction} variance. In DISCA, diversity \emph{is} the signal: the \emph{spread} of persona logit gaps, not the mode, sets the correction magnitude. We show (Proposition~\ref{prop:shrinkage}) that the MSE-optimal scalar shrinkage of the consensus correction is a closed-form function of within-panel variance, making within-panel variance a \emph{sufficient statistic for correction reliability}. Standard calibration (temperature scaling, MC-Dropout~\citep{gal2016dropout, kwon2025dropouts}) adjusts confidence isotropically; DISCA applies anisotropic, loss-averse corrections. DISCA is the first training-free method targeting country-level moral calibration; the method is grounded in control-as-inference, Prospect Theory, and importance-sampling variance control.

\vspace{-0.4cm}
\section{Disagreement-Informed Steering for Cultural Alignment}
\label{sec:method}
\vspace{-0.2cm}

DISCA aligns the model by creating an adjustment derived from the 
agents' consensus, and the agreement among the agents controls 
how much of it actually applies. The magnitude of the adjustment 
is not fixed: when the agents agree on what the preference, we apply the adjustment in full; when they 
disagree, we apply only a fraction of it, scaled by how much the 
agents disagree. This inverts the standard combination structure used by 
self-consistency~\citep{wang2023selfconsistency}, multi-agent 
debate~\citep{du2023improving, liang2024encouraging}, and ensembles, 
where $N$ candidate outputs are aggregated by majority or averaging 
and disagreement among them is treated as noise to be suppressed. 
DISCA instead treats disagreement as the quantity that controls how 
much adjustment to apply.

\begin{figure}[h!]
  \vspace{-0.3cm}
  \centering
  \includegraphics[width=0.75\linewidth]{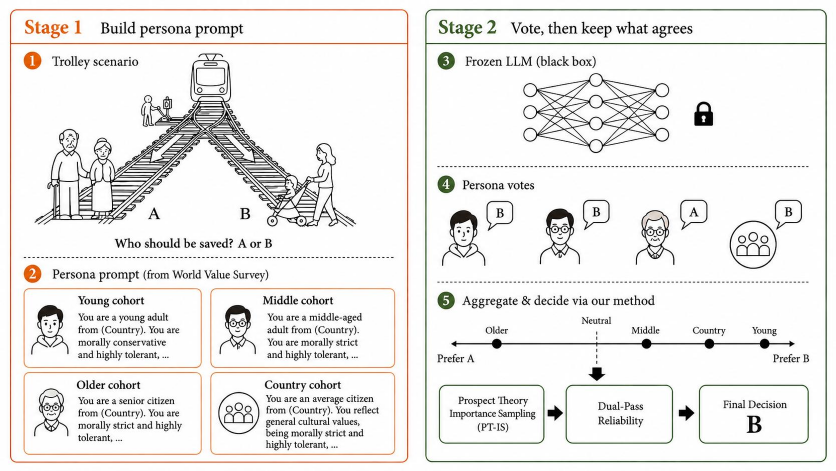}
  \vspace{-0.1cm}
  \caption{\textbf{DISCA overview.} Stage~1 builds WVS-grounded persona prompts for a trolley scenario in country~$c$; Stage~2 runs a frozen large language model (LLM) on the base prompt and each persona, aggregates persona-level signals in logit space, and applies \emph{Prospect-Theory importance sampling} (PT--IS) together with a dual-pass reliability gate to obtain the final sparing probability. Pseudocode and the six MultiTP attribute--temperature pairs provided in App.~\ref{app:disca_overview}.}
  \label{fig:disca_overall}
\vspace{-0.4cm}
\end{figure}

\vspace{-0.2cm}
\subsection{Setup}
\label{sec:method:setup}
\vspace{-0.1cm}

We consider a frozen language model $f_\theta$ on a forced-choice 
task: each input $x$ asks the model to choose between two options 
via decision tokens $A$ and $B$. Let $z(x) = [z_a, z_b]$ denote the 
model's logits at the decision-token position, and let 
$\delta(x) = z_b - z_a$ be the resulting decision gap. For target 
country $c$, we assume access only to $f_\theta$ as black-box and a public 
characterisation of country $c$'s population (in our case, World 
Values Survey microdata~\citep{haerpfer2022wvs}); we do not assume 
per-country preference labels, reward models, or write access to 
$\theta$. Our goal is to compute, from $f_\theta$ alone, a 
per-scenario logit adjustment $\delta^\star(x, c)$ such that the 
adjusted gap $\delta(x) + \delta^\star(x, c)$ produces decisions 
better aligned with country $c$'s preferences. Alignment is measured 
against a country-level reference vector; the benchmark and metric 
are described in \S\ref{sec:experiments}. The remainder of Section 3 focuses on how $\delta^\star$ should depend on the disagreement, what formal property this dependence has, and how to estimate the correction.

\vspace{-0.1cm}
\subsection{The disagreement principle}
\label{sec:method:principle}
\vspace{-0.1cm}

Suppose we evaluate the same scenario under $N$ culturally grounded 
prompts representing the same target country, each producing a logit 
gap $\delta_i$. Each $\delta_i$ is a noisy estimate of what 
country~$c$'s population-level preference would imply for this 
scenario. If the gaps cluster tightly, the $N$ estimates 
converge on the same answer and the consensus is a reliable signal. 
If they are dispersed, the consensus is averaging over disagreeing 
estimates and is unreliable as a steering target. The principle is 
that within-agent dispersion, not consensus magnitude, controls how 
much of the consensus adjustment we should actually apply.

Let $\delta_1, \ldots, \delta_N$ be the logit gaps produced by $N$ agents on a single scenario, and let $\delta_h$ denote the unobserved logit gap 
consistent with country~$c$'s population-level preference on that 
scenario. We model the agents as
$\delta_i = \delta_h + \eta_i, \;\mathbb{E}[\eta_i] = 0, \;
\mathrm{Var}(\eta_i) = \tau^2$,
with the $\eta_i$ assumed independent across agents and identically 
distributed with common variance $\tau^2$. This is a standard 
hierarchical setup: $\delta_h$ is a fixed-but-unknown population 
parameter, and each agent is a noisy view of it. The i.i.d.\ 
structure is a modelling simplification, since all agents share base model $f_\theta$ and the scenario.
Define the consensus $\bar{\delta} = \frac{1}{N}\sum_i \delta_i$, 
the within-agent variance 
$D^2 = \frac{1}{N-1}\sum_i (\delta_i - \bar{\delta})^2$, the 
consensus adjustment $\Delta = \bar{\delta} - \delta_{\text{base}}$, 
and the oracle adjustment $\Delta_h = \delta_h - \delta_{\text{base}}$, where $\delta_{\text{base}}$ is the logit gap produced by the
unconditioned base prompt (without persona or country-related prompts) on the same scenario.
We will apply a scaled version of $\Delta$ as our adjustment: 
$\hat\Delta(\gamma) = \gamma \cdot \Delta$ for some 
$\gamma \in [0,1]$. We choose $\gamma$ by minimising the mean 
squared error against the unobserved oracle:
$\mathrm{MSE}(\gamma) = \mathbb{E}\big[\big(\hat\Delta(\gamma) - 
\Delta_h\big)^2\big]$.

\begin{proposition}[Variance-aware shrinkage - Proof in App.~\ref{app:bias_variance}]
\label{prop:shrinkage}
Under the agent model above:
\begin{itemize}
    \item[\textup{(i)}] $D^2$ is unbiased for $\tau^2$, and the 
    variance of the consensus adjustment is 
    $\mathrm{Var}(\Delta) = \tau^2 / N$.
    \item[\textup{(ii)}] Among shrinkage estimators of the form 
    $\hat{\Delta}(\gamma) = \gamma \cdot \Delta$ with 
    $\gamma \in [0, 1]$, the MSE-minimising one has
    $\gamma^\star = \frac{\Delta_h^2}{\Delta_h^2 + \tau^2 / N}$,
    which lies in $[0, 1]$ and is monotone-decreasing in $\tau^2$.
\end{itemize}
\end{proposition}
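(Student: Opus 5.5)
The plan is to treat $\delta_h$ and $\delta_{\text{base}}$ as fixed. The base prompt is deterministic given the scenario, so only the $\eta_i$ are random, and $\Delta_h$ is a constant. Writing $\bar\eta = \frac1N\sum_i \eta_i$, the consensus adjustment decomposes as
\[
\Delta = \bar\delta - \delta_{\text{base}} = \Delta_h + \bar\eta .
\]
Hence $\mathbb{E}[\Delta] = \Delta_h$. By independence and the common variance $\tau^2$, $\mathrm{Var}(\Delta) = \mathrm{Var}(\bar\eta) = \tau^2/N$, which is the second half of (i).

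For the unbiasedness of $D^2$, note that $\delta_i - \bar\delta = \eta_i - \bar\eta$, so $D^2$ is the ordinary sample variance of the i.i.d.\ noises. I will use the identity
\[
\sum_i (\eta_i - \bar\eta)^2 = \sum_i \eta_i^2 - N\bar\eta^2 .
\]
Taking expectations gives $N\tau^2 - N\cdot\tau^2/N = (N-1)\tau^2$, and dividing by $N-1$ yields $\mathbb{E}[D^2] = \tau^2$. This only needs zero means, pairwise uncorrelatedness and a common variance, so full independence is more than enough.

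For (ii), I will expand the MSE using the decomposition above:
\[
\hat\Delta(\gamma) - \Delta_h = (\gamma - 1)\Delta_h + \gamma\bar\eta .
\]
The cross term vanishes because $\mathbb{E}[\bar\eta] = 0$, so
\[
\mathrm{MSE}(\gamma) = (1-\gamma)^2\Delta_h^2 + \gamma^2\tau^2/N .
\]
This is a convex quadratic in $\gamma$, and setting its derivative to zero gives $\gamma^\star = \Delta_h^2/(\Delta_h^2 + \tau^2/N)$.

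This unconstrained minimiser already lies in $[0,1]$, since numerator and denominator are nonnegative and the numerator is no larger than the denominator. Therefore it is also the constrained minimiser over $[0,1]$.

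Monotonicity follows by differentiating in $\tau^2$:
\[
\frac{\partial \gamma^\star}{\partial \tau^2} = -\frac{\Delta_h^2/N}{\left(\Delta_h^2 + \tau^2/N\right)^2} \le 0,
\]
with strict decrease whenever $\Delta_h \neq 0$.

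The main obstacle is not computational but a matter of stating things carefully. The degenerate case $\Delta_h^2 + \tau^2/N = 0$ must be handled separately: then $\Delta = 0$ almost surely, any $\gamma$ is optimal, and I will adopt a convention such as $\gamma^\star = 0$ or simply exclude the case. I also need to make explicit that $\delta_{\text{base}}$ is non-random; if it were random and correlated with the persona gaps, the cross term would not vanish. Finally, I will remark that $\gamma^\star$ depends on the unknown $\Delta_h$, which is why it is an oracle quantity. Part (i) is what justifies plugging in $D^2$ for $\tau^2$ in the practical heuristic, while $\Delta_h^2$ must still be estimated, for instance by $\Delta^2$, which is biased upward by $\tau^2/N$.
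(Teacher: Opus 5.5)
Your proposal is correct and follows essentially the same route as the paper's proof: the identity $\sum_i(\eta_i-\bar\eta)^2=\sum_i\eta_i^2-N\bar\eta^2$ for (i), and the bias--variance decomposition $\mathrm{MSE}(\gamma)=(1-\gamma)^2\Delta_h^2+\gamma^2\tau^2/N$ minimised by the first-order condition for (ii). Your explicit remarks that the unconstrained minimiser already lies in $[0,1]$, and on the degenerate case $\Delta_h^2+\tau^2/N=0$, are small tidy-ups that the paper leaves implicit; it only notes that the second derivative is positive when $\tau^2>0$.
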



\vspace{-0.3cm}
\subsection{The DISCA estimator}
\label{sec:method:estimator}

Proposition~\ref{prop:shrinkage} tells us what shape the shrinkage 
should have but not how to compute it: $\gamma^\star$ depends on 
the unobservable $\Delta_h$. 
In particular, the proposition's two parts work together to motivate the method. Part~(ii) gives the optimal 
$\gamma^\star$ in closed form, but it is not directly computable: 
$\gamma^\star$ depends on $\tau^2$ and on $\Delta_h$, neither of 
which we observe. Part~(i) gives us a way to estimate $\tau^2$ 
from data we already have: the within-agent variance $D^2$ has expected 
value equal to $\tau^2$. As $\Delta_h$ has no analogous estimator from the agents' outputs, we cannot compute $\gamma^\star$ exactly. 
Instead, we construct an estimator 
$\delta^\star$ that is computable from the agents' outputs 
alone, matches the shape of $\gamma^\star$, bounded in $(0,1)$, \& monotone-decreasing in the variance estimate.


We use $N{=}4$ agents per target country: three age cohorts (young, middle-aged, older) plus a country-wide aggregate, each instantiated as a native-language system prompt built from cohort- and country-level aggregates of the WVS cultural profile of \citet{greco2026personas} (full prompts in App.~\ref{app:personas}). The choice $N{=}4$ is the smallest panel that simultaneously covers the three demographic axes documented to carry most within-country variance~\citep{inglehart2005modernization} and includes a country-wide aggregate that anchors the ensemble against cohort-specific sampling noise; the marginal precision gain from each additional persona decays as $\Theta(N^{-2})$ (Corollary~\ref{cor:persona-count}, App.~\ref{app:bias_variance}), placing $N{=}4$ on the flat side of the diminishing-returns curve. The empirical sweep over $N\in\{2,3,4,5,6\}$ on a three-country panel (Table~\ref{tab:r3_persona_count}) confirms this: $N{=}4$ minimises macro MIS, $N{=}2$ leaves the consensus under-determined, and $N{\geq}5$ duplicates demographic coverage without adding new variance. To remove the order bias of forced-choice decoding, we evaluate every scenario under both $(A,B)$ and $(B,A)$ orderings and take the order-symmetrised gap $\delta_i = \tfrac{1}{2}(\delta_i^{(AB)} - \delta_i^{(BA)})$ as the agent's output.

\subsubsection{Loss-averse importance sampling}
\label{sec:method:estimator:pt}

The plain consensus adjustment $\Delta = \bar\delta - 
\delta_{\text{base}}$ moves the base model toward the average direction because $\bar\delta$ is the average consensus. The problem is that an average direction can hide a cohort that strongly disagrees: three cohorts mildly in favour and one cohort strongly against will average out to "mildly in favour," and the consensus will move the model in a direction 
that one cohort actively rejects. We want a steering rule that 
avoids this. One that prefers a direction nobody strongly 
dislikes over a direction with a higher average but a strong 
dissenter.

We frame this as a cooperative-bargaining problem among the 
cohorts~\citep{nash1950bargaining, kalai1975other}. Each cohort 
is a player with its own utility over correction directions,
each is loss-averse, which means a small misalignment with one's preferred
direction is felt more than a same-sized improvement, and we
want to pick a direction that all players can collectively
accept. This framing organises three design choices: where to
look for candidates, how each player scores a candidate, and how all scores combine into a group verdict.

Rather than committing to $\Delta$, we draw $K$ candidate
perturbations $\epsilon_k \sim \mathcal{N}(0, \sigma^2)$ in a
neighbourhood of the consensus, defining the candidate state
$\tilde\delta_k = \bar\delta + \epsilon_k$. Each candidate is scored
by how much closer to cohort $i$'s preferred direction it lands,
in absolute $\ell_1$ distance, against two reference points: the
cohort gap $\delta_i$ and the panel consensus $\bar\delta$,
\begin{equation}
g_{i,k} = |\delta_{\text{base}} - \delta_i| - |\tilde\delta_k - \delta_i|, \qquad
g_{\text{cons},k} = |\delta_{\text{base}} - \bar\delta| - |\tilde\delta_k - \bar\delta|,
\label{eq:gains}
\end{equation}
so positive $g$ means the candidate moved closer to the relevant
target than the base prompt did, negative means it moved the
model away.

We need a scoring rule $v(\cdot)$ that is concave on the gain 
side so additional gain beyond what's already aligned is worth 
less and steeper on the loss side so any cohort loss matters 
more than the same-size gain elsewhere. The canonical such rule 
in the decision-theory literature is the 
Kahneman--Tversky value function~\citep{kahneman1979prospect, 
tversky1992advances}:
\begin{equation}
v(z) \;=\; 
\begin{cases} z^{\alpha} & z \geq 0, \\ -\kappa\,(-z)^{\alpha} & z < 0, \end{cases}
\label{eq:pt-utility}
\end{equation}
with curvature $\alpha \in (0, 1]$ and loss-aversion 
$\kappa \geq 1$. The asymmetric kernel $v$ acts as an aggregator that penalises misalignment more than rewarding 
incremental alignment.

Cooperative bargaining sits between two extremes: a purely 
\emph{utilitarian} verdict averages the players' utilities and 
ignores how the gains are distributed, while a purely 
\emph{collective} verdict asks how the deal serves the group as 
a whole. We use a blend of the two:
\begin{equation}
U_{\text{total}}(\epsilon_k) 
\;=\; (1 - \lambda_{\text{coop}})\,\frac{1}{N}\sum_{i=1}^{N} 
       v\!\left(\frac{g_{i,k}}{\sigma}\right) 
\;+\; \lambda_{\text{coop}}\,
       v\!\left(\frac{g_{\text{cons},k}}{\sigma}\right),
\label{eq:util_total}
\end{equation}
where $g_{\text{cons},k}$ measures the deal's gain against the 
consensus direction $\bar\delta$ rather than against any single cohort. The first term is the utilitarian verdict over loss-averse players; 
the second is the collective verdict scored by the same 
loss-averse rule. The coefficient $\lambda_{\text{coop}} \in 
[0,1]$ controls the bargaining position, with 
$\lambda_{\text{coop}} = 0$ purely utilitarian and 
$\lambda_{\text{coop}} = 1$ purely collective. We use
$\lambda_{\text{coop}} = 0.7$, on the collective side; a sensitivity sweep over $\lambda_{\text{coop}} \in [0,1]$ is in
App.~\ref{app:r2_hparam_sensitivity}. The
division by $\sigma$ inside $v$ scores each gain in units of the
perturbation scale, so the search radius $\sigma$ and the
value-function shape $(\alpha, \kappa)$ can be set independently.

The natural choice, returning the highest-utility candidate, is unsafe here. The utility $U_{\text{total}}(\epsilon_k)$ is itself a noisy quantity. It depends on the random draw $\epsilon_k$ and on the agent 
gains $g_{i,k}$, both of which carry stochastic error. Picking 
the argmax over $K$ noisy scores selects the candidate that drew 
favourable noise on this sample, not the candidate with the 
highest true utility. We therefore aggregate all candidates with weights 
proportional to their utility:
\begin{equation}
\delta_{PT-IS} \;=\; \frac{\sum_{k=1}^{K} w_k\,\epsilon_k}
                         {\sum_{k=1}^{K} w_k},
\qquad
w_k \propto \exp\!\bigl(U_{\text{total}}(\epsilon_k)/\eta\bigr),
\label{eq:pt-is-aggregation}
\end{equation}
where $\eta>0$ is an IS softmax temperature ($\eta{=}0.5$ default; Table~\ref{tab:hyperparams}). This form is not ad hoc: it is the path-integral solution to 
the stochastic optimal control problem of selecting a correction 
direction under a noisy utility 
function~\citep{williams2017mppi, levine2018reinforcement}. The 
high-utility candidate dominates the average, but near-winners 
contribute as well, and stochastic error in any one candidate's 
utility is suppressed by the contributions of the others. We 
refer to the full procedure as 
\emph{loss-averse importance sampling}, or PT-IS for short.


\vspace{-0.1cm}
\subsubsection{Dual-pass reliability gate}
\label{sec:method:estimator:gate}

PT-IS gives us a single correction $\delta_{\text{PT-IS}}$ for 
each scenario. The natural next step is to apply it: add 
$\delta_{\text{PT-IS}}$ to the model's base logits and read off 
the steered decision. The problem is that $\delta_{\text{PT-IS}}$ 
is a single number combining cohort opinions. When all cohorts 
agree, that number reflects a unanimously backed direction. When 
the cohorts disagree, the same number can come out small even 
though no cohort actually wants the small push it represents.

Concretely, imagine two cohorts prefer direction $+0.5$ and two 
prefer $-0.5$. PT-IS with loss-aversion will not pick either 
side; it returns a compromise, say $\delta_{\text{PT-IS}} = 
+0.05$, close to zero because pushing in either direction would 
hurt half the cohorts. Applying $+0.05$ to the model's logits 
nudges the model in a direction no cohort actually asked for. This is the failure mode Proposition~\ref{prop:shrinkage} warned 
us about: $\gamma^\star$ contracts toward zero when cohort 
disagreement is high, because the appropriate response to 
disagreement is to apply less correction, not a more carefully 
averaged correction. We need a way to identify contested 
scenarios at run time and contract $\delta_{\text{PT-IS}}$ 
accordingly.

The obvious approach is to increase $K$. With infinitely many 
candidates, $\delta_{\text{PT-IS}}$ converges to a deterministic 
limit and run-to-run variation vanishes. But this does not solve 
the problem: the $+0.05$ compromise becomes a more precise 
compromise. More samples sharpen the average; they do not 
change whether the average deserves to be applied at full 
strength.

We run PT-IS twice on the same scenario, on disjoint sub-budgets 
of size $K/2$, producing independent corrections 
$\delta^{(1)}_{\text{PT-IS}}$ and $\delta^{(2)}_{\text{PT-IS}}$. 
When the cohorts agree, both runs find the same compromise and 
the two corrections are close. When the cohorts disagree, the 
runs are pulled by competing cohort camps and land on different 
compromises. The squared inter-run gap
\begin{equation*}
V_r \;=\; (\delta^{(1)}_{\text{PT-IS}} - 
\delta^{(2)}_{\text{PT-IS}})^2
\end{equation*}
is the diagnostic: zero when the runs agree, growing as they 
diverge. $V_r$ is the half-sample variance estimator: a single sample is 
split in two, and variance is estimated from the squared 
inter-half gap. The technique originates in 
Mahalanobis~\citep{mahalanobis1946} and was generalised as 
balanced repeated 
replication~\citep{mccarthy1969pseudo, wolter2007variance}; it 
is canonical in settings that need a qualitative variance signal 
on a constrained compute budget. Two runs also keep total 
compute matched to a single full PT-IS run.

We multiplicatively shrink the correction by
\begin{equation}
r \;=\; \exp\!\bigl(-V_r / s\bigr) \;\in\; (0, 1],
\label{eq:gate}
\end{equation}
with bandwidth $s > 0$ controlling the decay rate. The 
exponential is the simplest function satisfying the three shape 
properties Proposition~\ref{prop:shrinkage} requires: bounded 
in $(0, 1]$, monotone-decreasing, and smooth. The final DISCA 
correction is the gated average of the two runs:
\begin{equation}
\delta^\star \;=\; r \cdot \frac{\delta^{(1)}_{\text{PT-IS}} + 
\delta^{(2)}_{\text{PT-IS}}}{2}.
\label{eq:disca}
\end{equation}
$\delta^\star$ matches the qualitative shape of $\gamma^\star 
\cdot \Delta$: bounded by the average run magnitude, 
monotone-decreasing in inter-run disagreement (mirroring 
$\gamma^\star$ in $\tau^2$), and smooth in all inputs. The 
empirical contribution of the gate is bundled inside the 
full-method ablation in 
Table~\ref{tab:ablation_backbones_20c}.

\vspace{-0.1cm}
\subsection{Extension to Open-Ended Ethical Scenarios}
\label{sec:method:openended}

To extend DISCA to open-ended scenarios, the model generates free-form responses under both the original prompt and four persona-specific prompts. A separate large language model (LLM) judge then parses these outputs to return a \{choice, confidence\} pair. The continuous decision signal is represented by a pseudo-logit gap (the A/B logit distance) extracted by that judge. The dual-pass Prospect-Theory importance sampling (PT--IS) mechanism is applied similarly to the binary case.


\vspace{-0.3cm}
\section{Experimental Results}
\label{sec:experiments}
\label{sec:results}
\vspace{-0.2cm}


We test DISCA on MultiTP~\citep{jin2025multitp}, the multilingual extension of the Moral Machine trolley problem~\citep{awad2018moral}. Each scenario shows the model two groups of people that an autonomous vehicle could spare, and the model has to choose one of them by emitting a single decision token, $A$ or $B$. Every scenario isolates one of six moral attributes (species, gender, age, fitness, social value, and the number of lives saved) while matching the two groups on everything else. An age scenario, for example, pits a young group against an older one and holds the rest of the description constant; the rate at which the model spares the younger group, averaged across many such scenarios, is then read off as its preference on age. Following the Moral Machine convention we always assign decision token $B$ to the side the AMCE measures toward; this is a measurement convention from conjoint analysis that fixes the sign of the preference, not a normative claim about which side ought to be saved. The frozen model emits logits $z(x)=[z_a,z_b]$ at the decision position, the decision gap is $\delta(x)=z_b-z_a$, and the sparing probability under decoding temperature $T_{\text{dec}}$ is $p_{\text{spare}}(x)=\sigma(\delta(x)/T_{\text{dec}})$. Averaging the sparing probability across the scenarios for one attribute and one country gives that country's Average Marginal Component Effect (AMCE):
\begin{equation}
\hat{m}^{(c)}_d = \frac{1}{|\mathcal{S}^{(c)}_d|}
\sum_{x \in \mathcal{S}^{(c)}_d} p_{\text{spare}}(x).
\label{eq:amce-estimate}
\end{equation}
Stacking the six attributes gives a six-dimensional vector $\hat{m}^{(c)}$ that we compare to the human AMCE vector $h^{(c)}$ aggregated from real Moral Machine judgments~\citep{awad2018moral, jin2025multitp}. Our headline metric is the \emph{misalignment score} $\mathrm{MIS}(c) = \|\hat{m}^{(c)} - h^{(c)}\|_2$, which is just the straight-line distance between the model's preference vector and the country's human one; lower is better. We report it macro-averaged over a 20-country panel that spans four continents. The country list, the preprocessing pipeline (deduplication, quality filter, per-attribute caps, oversampling, and deterministic shuffling), the per-country slice sizes, and a cap-disabled sensitivity rerun all live in App.~\ref{app:dataset}. Every scenario is presented in the country's native language; only the decision tokens $A$ and $B$ stay in English so that token-ID extraction is consistent across model families. We also report Pearson $r$ for shape agreement and Jensen--Shannon divergence (JSD) as secondary diagnostics.

We tested 28 model--method combinations across 12 architectures from 270M to 70B parameters, with the full landscape in App.~\ref{app:model_landscape}, and report seven open-weight backbones with the most robust gains: Llama-3.3-70B, Magistral-24B, Phi-4 (14B), Qwen3-VL-8B, Qwen2.5-7B, Phi-3.5-mini (3.8B), and Gemma-4-E2B (2B). The few models that degrade share a single pattern: their vanilla MIS is already low enough that there is no headroom left for any correction (we return to this in \S\ref{sec:discussion}). The released hyperparameter configuration was validated on a small three-backbone, five-country prototyping panel before being frozen for the headline sweep; how we elicit the $A$ and $B$ tokens consistently across model families is described in App.~\ref{app:token_elicitation}.

We compare DISCA against two tiers of baselines on the same 20-country Phi-4 grid (Table~\ref{tab:main_baseline_sanity}; full implementations in App.~\ref{app:baselines}). The first tier is training-free and never sees the human AMCE: vanilla decoding, a WVS Profile Prompt that summarises the country's WVS values, a PRISM-style cultural framing prompt~\citep{kirk2024prism}, a fixed logit offset, activation steering~\citep{arditi2025refusal, zou2023representation}, and MC-Dropout~\citep{gal2016dropout, kwon2025dropouts}. We exclude reward-guided decoding~\citep{khanov2024args, mudgal2024controlled} because it assumes per-country reward models, which simply do not exist for the long tail of countries we care about. The second tier is the opposite: oracle baselines that fit one per-country scalar (temperature or additive margin) directly on the country's full human AMCE~\citep{guo2017calibration}, serving as upper bounds on what any 1-d-per-country oracle correction can do.

\vspace{-0.1cm}
\subsection{Main results}
\label{sec:percountry_main}
\label{sec:robustness}
\vspace{-0.1cm}

A single decoding-time intervention, with no weight changes and no per-country labels, materially closes the gap between large language models and human moral preferences. Across seven backbones from five model families, DISCA reduces macro MIS by roughly ten to twenty-four percent relative to vanilla decoding (Table~\ref{tab:main_macro_summary}), and two of the seven backbones (Llama-3.3-70B and Qwen3-VL-8B) improve on every one of the twenty countries we test. A 2D PCA of the AMCE vectors tells the same story visually (Figure~\ref{fig:amce_pca}, App.~\ref{app:scaling_figures}): every country point migrates from the vanilla cloud toward the human cloud, none in the wrong direction. No simpler intervention comes close. On the same Phi-4 grid (Table~\ref{tab:main_baseline_sanity}) DISCA outperforms every training-free baseline by a clear margin, and even the two oracle baselines that get to peek at each country's full human AMCE during calibration end up worse than DISCA, because a single per-country scalar (whether a temperature or an additive margin) cannot capture all six AMCE dimensions simultaneously.

\begin{table*}[t]
\centering
\begin{minipage}[t]{0.48\textwidth}
\caption{Inference-time baseline comparison on the Phi-4 20-country grid (macro MIS, lower is better). Oracle baselines (below dashed line) use human AMCE during calibration; even so they fail to match DISCA.}
\label{tab:main_baseline_sanity}
\centering\scriptsize
\setlength{\tabcolsep}{5pt}
\begin{tabular}{lcc}
\toprule
Method & AMCE? & MIS $\downarrow$ \\
\midrule
\textbf{DISCA (ours)} & No & \textbf{0.346} \\
PRISM-Style Prompt~\citep{kirk2024prism} & No & 0.384 \\
MC-Dropout~\citep{gal2016dropout} & No & 0.403 \\
Activation steering~\citep{arditi2025refusal} & No & 0.430 \\
Fixed logit offset & No & 0.439 \\
WVS Profile Prompt & No & 0.453 \\
Vanilla decoding & No & 0.454 \\
\midrule
Margin scaling~\citep{guo2017calibration} & Yes & 0.506 \\
Temp. scaling~\citep{guo2017calibration} & Yes & 0.513 \\
\bottomrule
\end{tabular}
\end{minipage}
\hfill
\begin{minipage}[t]{0.48\textwidth}
\caption{Macro results on the 20-country MultiTP slice (equal weight per country). MIS is mean $\ell_2$ AMCE distance ($\pm$ std over three seeds $\{42,101,2026\}$ chosen under a fixed compute budget; per-cell std median $0.008$, full multi-seed analysis in \S\ref{sec:multiseed}); ``Win'' = number of countries with lower DISCA MIS than vanilla.}
\label{tab:main_macro_summary}
\centering\scriptsize
\setlength{\tabcolsep}{3pt}
\begin{tabular}{@{}lcccc@{}}
\toprule
Model & MIS $\downarrow$ & Gain (\%) & Win/20 & $r$ \\
\midrule
Llama-3.3-70B & $.668{\scriptstyle\pm.006}$ & +21.3${\scriptstyle\pm0.8}$ & \textbf{20} & $-$0.01 \\
Magistral-24B & $.350{\scriptstyle\pm.005}$ & +13.0${\scriptstyle\pm0.7}$ & 16 & +0.62 \\
Phi-4 (14B) & $\mathbf{.346}{\scriptstyle\pm.004}$ & \textbf{+23.6}${\scriptstyle\pm0.6}$ & 18 & +0.56 \\
Qwen3-VL-8B & $.466{\scriptstyle\pm.006}$ & +18.8${\scriptstyle\pm0.9}$ & \textbf{20} & +0.22 \\
Qwen2.5-7B & $.362{\scriptstyle\pm.005}$ & +20.0${\scriptstyle\pm0.8}$ & 17 & +0.45 \\
Phi-3.5-mini & $.571{\scriptstyle\pm.006}$ & +10.3${\scriptstyle\pm1.0}$ & 17 & $-$0.35 \\
Gemma-4-E2B (2B) & .479  & +3.4 & 14 & +0.46 \\
\bottomrule
\end{tabular}
\end{minipage}
\vspace{-0.5cm}
\end{table*}

The per-country breakdown (Table~\ref{tab:percountry_p1}, App.~\ref{app:percountry_full}) confirms that the macro gains are not concentrated in a handful of lucky cells. Geographic patterns, large single-country swings, and the harder regions (such as those with sparser WVS coverage) are all detailed there. To rule out artefacts, we also ran nine independent robustness checks and three random seeds; every conclusion sits comfortably within the bootstrap noise floor (Appendices~\ref{app:robustness_summary} and~\ref{sec:multiseed}).

\vspace{-0.2cm}
\paragraph{Calibration competes with scale}
\vspace{-0.2cm}

The single most striking finding is a crossing in the curve of MIS against model size (App.~\ref{app:scaling_figures}): a 14B model with DISCA (Phi-4) reaches lower absolute misalignment than a 70B model without it (Llama-3.3-70B). Per-dimension error analysis (Table~\ref{tab:perdim_summary}, App.~\ref{app:perdim}) explains why. Different models hit a wall on different attributes. Weaker models are dominated by Utilitarianism errors that are simply too large for any persona-based correction to close. Better-calibrated models have already resolved Utilitarianism and Species, so their bottleneck shifts to Social Value and Age, two attributes that DISCA's importance-sampling stage actually moves the needle on. Phi-4 turns out to be the most balanced of the seven across all dimensions, and this, rather than raw parameter count, brings its top performance despite being five times smaller than the baseline.

\vspace{-0.2cm}
\paragraph{Generalisation beyond binary dilemmas}
\label{sec:openended_main}
\vspace{-0.2cm}

The disagreement signal extends to open-ended ethical generation (\S\ref{sec:method:openended}). All four backbones improve on average, with gains up to ${\sim}7\%$ (Table~\ref{tab:openended_summary}); only six of eighty cells regress (Table~\ref{tab:safe_disca_results_openended_combined}), all with $|\Delta|\!\le\!3.1\%$---the gate bounds but cannot reverse them. A logit-noise stress test (App.~\ref{app:openended_robustness}) shows the gated pipeline retains a 10--12\% MIS advantage over an ungated variant (the same PT--IS stack without agreement-based shrinkage) across $\sigma \in [0,2]$, with the gap widening as noise grows.

\begin{table}[t]

\centering\small
\caption{Open-ended track summary (20 countries, 310 scenarios each). \textbf{VAN} = vanilla decoding (no DISCA). \textbf{DISCA} = full pipeline. \textbf{MIS} = misalignment score (\S\ref{sec:experiments}); \textbf{lower MIS is better}. $\Delta\%$ reports mean relative MIS reduction across four backbones, so \textbf{higher $\Delta\%$ is better} (positive = improvement); per-country breakdown in 
App.~\ref{app:openended_full}.}
\label{tab:openended_summary}
\vspace{0.1cm}
\setlength{\tabcolsep}{5pt}
\begin{tabular}{@{}lccc@{}}
\toprule
\textbf{Model} & \textbf{Vanilla} & \textbf{DISCA} & \boldmath$\Delta\%$ \\
\midrule
Llama-3.3-70B   & 0.471 & 0.439 & \gain{+6.85\%} \\
Phi-4 (14B)     & 0.324 & 0.302 & \gain{+6.67\%} \\
Qwen2.5-7B      & 0.321 & 0.306 & \gain{+4.55\%} \\
Phi-3.5-mini    & 0.521 & 0.510 & \gain{+2.13\%} \\
\bottomrule
\end{tabular}
\vspace{-0.3cm}
\end{table}

\vspace{-0.2cm}
\subsection{Ablation and Discussion}
\label{sec:discussion}
\label{sec:ablation}

DISCA works across families, geographies, and formats. The remaining questions are why, where it stops, and what it cannot do.
Ablating four design choices on three backbones (Table~\ref{tab:ablation_backbones_20c})\footnote{vLLM backend, one $K$-sample IS batch per scenario. \textbf{Full DISCA} keeps the dual-pass gate; the gate is disabled only for rows downstream of it (\textit{Always-on PT--IS}, \textit{No-IS (consensus)}), so those rows quantify what the gate hides without altering the headline.} gives a consistent ranking. Persona diversity is the irreplaceable core: removing the WVS personas hurts 16--18 of 20 countries per backbone---the largest single-component loss---confirming that within-country \emph{disagreement} carries most of the alignment value. PT--IS adds a stable second-order gain; the reliability gate matters more than raw correction strength; positional debiasing has the smallest macro effect but remains necessary as a conditioning step.

\begin{table*}[!htb]
\vspace{-0.2cm}
\caption{Cross-backbone ablation on all 20 paper countries. \textbf{Lower MIS is better} ($\downarrow$ in column headers); $\Delta$ vs Full reports the change in MIS relative to Full DISCA, so positive values indicate \emph{worse} alignment. ``$n$ hurt'' = number of countries with strictly higher MIS than Full DISCA.}
\label{tab:ablation_backbones_20c}
\centering\scriptsize
\setlength{\tabcolsep}{3.5pt}
\begin{tabular}{l ccc ccc ccc}
\toprule
& \multicolumn{3}{c}{\textbf{Qwen2.5-7B (BF16)}} & \multicolumn{3}{c}{\textbf{Phi-3.5-mini-Instruct}} & \multicolumn{3}{c}{\textbf{Magistral-Sml (24B)}} \\
\cmidrule(lr){2-4}\cmidrule(lr){5-7}\cmidrule(lr){8-10}
Configuration & MIS $\downarrow$ & $\Delta$ vs Full & $n$ hurt & MIS $\downarrow$ & $\Delta$ vs Full & $n$ hurt & MIS $\downarrow$ & $\Delta$ vs Full & $n$ hurt \\
\midrule
\textbf{Full DISCA}   & \textbf{0.362} & --            & -- & \textbf{0.571} & --            & -- & \textbf{0.334} & --            & -- \\
\midrule
Without persona       & 0.417          & \loss{+.055}  & 16 & 0.618          & \loss{+.047}  & 17 & 0.395          & \loss{+.061}  & 18 \\
Always-on PT--IS      & 0.379          & \loss{+.017}  & 14 & 0.586          & \loss{+.015}  & 14 & 0.355          & \loss{+.021}  & 15 \\
No-IS (consensus)     & 0.371          & \loss{+.009}  & 12 & 0.579          & \loss{+.008}  & 13 & 0.345          & \loss{+.011}  & 13 \\
No debiasing          & 0.363          & \loss{+.001}  & 12 & 0.573          & \loss{+.002}  & 11 & 0.338          & \loss{+.004}  & 10 \\
\bottomrule
\end{tabular}
\end{table*}

DISCA also knows when not to act. The dual-pass gate passes most corrections through unattenuated and shrinks aggressively when the two passes disagree (audit in App.~\ref{app:r2_reliability_audit}). Replacing the bounded, loss-averse aggregation with simple averaging barely moves the mean but nearly quadruples the harmed cells and triples worst-case degradation (Table~\ref{tab:tail_safety}, App.~\ref{sec:tail_safety}): the bounded form is a safety mechanism, not an average-case booster.

Failures are architectural, not cultural. The 28-model sweep (App.~\ref{app:model_landscape}) shows success tracks logit conditioning, not parameter count: Phi-4 leads at 14B while several 30B+ models degrade~\citep{chand2026nofree}. Per-scenario decision margin, entropy, and absolute logit gap (App.~\ref{app:r2_logit_conditioning}) flag likely failures from the vanilla pass alone, so a deployment-time check can skip poorly-conditioned models. The price is roughly $4\times$ vanilla latency (App.~\ref{app:trigger}).

The signal has a clear scope boundary: on BLEnD factual cultural QA~\citep{myung2024blend} (App.~\ref{app:blend}) it does not help. MultiTP value alignment lives in a scalar logit gap that importance sampling navigates well; factual QA picks one token from a $\sim$32k-entry vocabulary, where perturbations tuned for a binary gap become noise on a wide softmax. Drawing this line between value steering and fact retrieval is itself a contribution.

Three caveats bound generalisation. We optimise against crowdsourced AMCE vectors~\citep{awad2018moral, jin2025multitp} without a separate study of perceived appropriateness~\citep{atari2023humans, khan2025randomness}, so we measure alignment to a survey statistic, not legitimacy. The Prospect-Theory function is an aggregation kernel, not a cognitive model. DISCA needs decision-token logits, which black-box text-only APIs do not always expose (App.~\ref{app:limitations}). An inference-time controller can also encode harmful majorities if deployed naively~\citep{zewail2025moral}; a per-persona utility floor caps how far any persona's post-correction utility may drop below vanilla, and sweeping the floor leaves macro MIS flat within the bootstrap noise (Table~\ref{tab:r2_persona_floor}, App.~\ref{app:hyperparams}), so the safeguard is essentially free.
\vspace{-0.5cm}
\section{Conclusion}
\label{sec:conclusion}
\vspace{-0.2cm}

In this paper we have shown that LLMs need not be retrained to respect moral diversity. To do so, we have introduced \emph{disagreement-driven steering}, a paradigm that treats within-group variance of grounded agents as the primary alignment signal rather than their consensus. 
DISCA instantiates this paradigm for cultural alignment, reducing misalignment by 10--24\% on the six headline backbones ($\geq$3.8B), and 2--7\% on open-ended scenarios, across twenty countries without changing a single weight. The formal result changes how we think about inference-time alignment: disagreement is not noise to be averaged away but an \emph{optimal sufficient statistic} for correction reliability, and the gap between ensembles and shrinkage estimators explains why naive averaging fails where DISCA succeeds. 

{\sloppy
\emergencystretch=4em
\hbadness=10000
\vbadness=10000
\bibliographystyle{abbrvnat}
\bibliography{references}
}

\newpage
\appendix
\setcounter{section}{0}
\renewcommand{\thesection}{A\arabic{section}}

\startcontents[appendix]
\printcontents[appendix]{}{0}{\section*{Appendix Contents}\setcounter{tocdepth}{2}}
\vspace{0.5em}
\hrule
\vspace{1em}
\newpage

\section{MultiTP Dimensions and DISCA Inference Pseudocode}
\label{app:disca_overview}

The pipeline schematic is Figure~\ref{fig:disca_overall} in \S\ref{sec:method}. Table~\ref{tab:categories} lists the six MultiTP moral dimensions and the per-attribute logit temperatures $T_c$ used by DISCA (validated in App.~\ref{app:hyperparams}). Algorithm~\ref{alg:disca} states the exact inference routine.

\begin{algorithm}[H]
\caption{DISCA Inference: Scenario $\mathbf{x}$ in Country $c$}
\label{alg:disca}
\begin{algorithmic}[1]
\Require Frozen LLM $f_\theta$; personas $\{\mathbf{s}_i\}_{i=1}^{N}$;
         hyperparameters $K_{\text{half}},\sigma,\lambda,\eta,T_{\text{dec}},s$
\Ensure $p_{\text{spare}} \in [0,1]$
\For{$\text{order} \in \{\text{orig}, \text{swap}\}$}
  \State Batched forward: $f_\theta$ on base + $N$ personas
  \State Extract logit pairs; compute $\delta_i^{(\text{ord})}$
\EndFor
\State $\delta_i \leftarrow (\delta_i^{(\text{orig})} - \delta_i^{(\text{swap})})/2$
\State Compute consensus $\bar{\delta} \leftarrow \tfrac{1}{N}\sum_i \delta_i$
\For{$m \in \{1,2\}$}
  \For{$k = 1$ \textbf{to} $K_{\text{half}}$}
    \State Sample $\epsilon_k \sim \mathcal{N}(0, \sigma^2)$; compute $U_{\text{total}}(\epsilon_k)$ via Eqs.~\ref{eq:gains},~\ref{eq:util_total}
  \EndFor
  \State $\delta^{\star}_m \leftarrow \sum_k w_k^{(m)} \epsilon_k$ if $\text{ESS}_m > \rho_{\mathrm{eff}}$ else $0$
\EndFor
\State $r \leftarrow \exp(-(\delta^{\star}_1-\delta^{\star}_2)^2/s)$
\State $\delta^{\star} \leftarrow r\,(\delta^{\star}_1+\delta^{\star}_2)/2$
\State $\delta_{\text{final}} \leftarrow \alpha_{\text{ess}}\,\bar{\delta} + (1-\alpha_{\text{ess}})\,\delta_{\text{base}} + \delta^{\star}$ \hfill (with $\alpha_{\text{ess}} = \min(1, \overline{\text{ESS}}/\rho_{\mathrm{eff}})$)
\State \Return $p_{\text{spare}} = \sigma(\delta_{\text{final}} / T_{\text{dec}})$
\end{algorithmic}
\end{algorithm}

\begin{table}[H]
\caption{The six moral dimensions in MultiTP. $T_c$ is the per-attribute logit temperature used by our method; values reflect empirical logit magnitudes and are validated in App.~\ref{app:hyperparams}.}
\label{tab:categories}
\centering\scriptsize
\setlength{\tabcolsep}{4pt}
\begin{tabular}{lllc}
\toprule
Dimension & Preferred & Contra & $T_c$ \\
\midrule
Species & Human & Animal & 4.0 \\
Gender & Female & Male & 3.5 \\
Age & Young & Elderly & 1.5 \\
Fitness & Fit & Unfit & 1.5 \\
Social Value & High (exec.) & Low (homeless) & 1.5 \\
Utilitarianism & More lives & Fewer lives & 1.5 \\
\bottomrule
\end{tabular}
\end{table}

\section{Proof of Proposition~\ref{prop:shrinkage}}
\label{app:bias_variance}

Recall the agent model from~ \S\ref{sec:method:principle}: for a single 
scenario,
\begin{equation*}
\delta_i = \delta_h + \eta_i, \qquad i = 1, \ldots, N,
\end{equation*}
where $\delta_h$ is a fixed (but unknown) population-preference 
logit gap, and the noise terms $\eta_i$ are i.i.d.\ with 
$\mathbb{E}[\eta_i] = 0$ and $\mathrm{Var}(\eta_i) = \tau^2$. We 
also fix $\delta_{\text{base}}$, the gap from the base prompt 
(deterministic given the scenario), and define
\begin{equation*}
\bar\delta = \frac{1}{N}\sum_{i=1}^{N} \delta_i, \quad
D^2 = \frac{1}{N-1}\sum_{i=1}^{N}(\delta_i - \bar\delta)^2, \quad
\Delta = \bar\delta - \delta_{\text{base}}, \quad
\Delta_h = \delta_h - \delta_{\text{base}}.
\end{equation*}
Throughout, expectations and variances are taken over the noise 
$\eta_1, \ldots, \eta_N$.

\subsection*{Part (i): $\mathbb{E}[D^2] = \tau^2$ and 
$\mathrm{Var}(\Delta) = \tau^2/N$}

We first rewrite $D^2$ in terms of the noise variables. For each $i$,
\begin{equation*}
\delta_i - \bar\delta 
= (\delta_h + \eta_i) - \Big(\delta_h + \tfrac{1}{N}\sum_j \eta_j\Big)
= \eta_i - \bar\eta,
\end{equation*}
where $\bar\eta = \tfrac{1}{N}\sum_j \eta_j$. The population term 
$\delta_h$ contributes equally to every $\delta_i$ and cancels in 
the differences, so
\begin{equation}
D^2 = \frac{1}{N-1}\sum_{i=1}^{N} (\eta_i - \bar\eta)^2.
\label{eq:D2-noise}
\end{equation}

Expanding the square and using $\sum_i \eta_i = N\bar\eta$,
\begin{equation*}
\sum_{i=1}^{N}(\eta_i - \bar\eta)^2 
= \sum_{i=1}^{N} \eta_i^2 - 2\bar\eta \sum_{i=1}^{N}\eta_i 
+ \sum_{i=1}^{N}\bar\eta^2
= \sum_{i=1}^{N} \eta_i^2 - 2N\bar\eta^2 + N\bar\eta^2 
= \sum_{i=1}^{N} \eta_i^2 - N\bar\eta^2.
\end{equation*}
Taking expectations on both sides,
\begin{equation*}
\mathbb{E}\!\left[\sum_i (\eta_i - \bar\eta)^2\right] 
= \sum_i \mathbb{E}[\eta_i^2] - N\,\mathbb{E}[\bar\eta^2].
\end{equation*}

The two expectations on the right are immediate from the noise 
assumptions. For each $i$, $\mathbb{E}[\eta_i^2] = 
\mathrm{Var}(\eta_i) + (\mathbb{E}[\eta_i])^2 = \tau^2$, so 
$\sum_i \mathbb{E}[\eta_i^2] = N\tau^2$. For $\bar\eta$, linearity 
of expectation gives $\mathbb{E}[\bar\eta] = \tfrac{1}{N}\sum_i 
\mathbb{E}[\eta_i] = 0$, and independence of the $\eta_i$ gives
\begin{equation*}
\mathrm{Var}(\bar\eta) 
= \frac{1}{N^2}\sum_i \mathrm{Var}(\eta_i)
= \frac{1}{N^2} \cdot N\tau^2 = \frac{\tau^2}{N},
\end{equation*}
so $\mathbb{E}[\bar\eta^2] = \mathrm{Var}(\bar\eta) + 
(\mathbb{E}[\bar\eta])^2 = \tau^2/N$. Substituting,
\begin{equation*}
\mathbb{E}\!\left[\sum_i (\eta_i - \bar\eta)^2\right] 
= N\tau^2 - N \cdot \frac{\tau^2}{N} = (N-1)\tau^2,
\end{equation*}
and dividing by $N-1$ gives
\begin{equation}
\mathbb{E}[D^2] = \tau^2.
\label{eq:D2-unbiased}
\end{equation}

The variance of the consensus adjustment follows directly. Since 
$\delta_{\text{base}}$ is deterministic,
\begin{equation*}
\mathrm{Var}(\Delta) = \mathrm{Var}(\bar\delta - \delta_{\text{base}}) 
= \mathrm{Var}(\delta_h + \bar\eta) 
= \mathrm{Var}(\bar\eta) = \frac{\tau^2}{N}.
\end{equation*}
The two quantities just established play distinct roles: $D^2$ has 
expected value $\tau^2$, the variance of one agent's noise (this 
is what we estimate from data), whereas $\mathrm{Var}(\Delta) = 
\tau^2/N$ is the variance of the mean of $N$ agents' noise (this 
is what enters $\gamma^\star$ in Part (ii)).

\subsection*{Part (ii): MSE-minimising shrinkage}

We seek the value of $\gamma \in [0,1]$ that minimises
\begin{equation*}
\mathrm{MSE}(\gamma) = \mathbb{E}\!\left[\big(\gamma\Delta - 
\Delta_h\big)^2\right].
\end{equation*}

The strategy is to rewrite the error $\gamma\Delta - \Delta_h$ as 
a sum of a mean-zero random part and a deterministic constant, so 
that the cross term in the squared expansion vanishes. From Part 
(i), $\Delta = \delta_h + \bar\eta - \delta_{\text{base}}$ and 
$\mathbb{E}[\bar\eta] = 0$, so
\begin{equation*}
\mathbb{E}[\Delta] = \delta_h + 0 - \delta_{\text{base}} = \Delta_h.
\end{equation*}
Adding and subtracting $\gamma\Delta_h$ inside the error,
\begin{align*}
\gamma\Delta - \Delta_h 
&= (\gamma\Delta - \gamma\Delta_h) + (\gamma\Delta_h - \Delta_h) \\
&= \gamma(\Delta - \Delta_h) + (\gamma - 1)\Delta_h.
\end{align*}

Squaring and taking expectations,
\begin{align*}
\mathrm{MSE}(\gamma) 
&= \gamma^2\,\mathbb{E}[(\Delta - \Delta_h)^2] 
   + 2\gamma(\gamma-1)\Delta_h\,\underbrace{\mathbb{E}[\Delta - \Delta_h]}_{=\,0} 
   + (\gamma-1)^2\Delta_h^2 \\
&= \gamma^2 \cdot \frac{\tau^2}{N} + (\gamma-1)^2\Delta_h^2,
\end{align*}
where $\mathbb{E}[(\Delta - \Delta_h)^2] = \mathrm{Var}(\Delta) = \tau^2/N$ 
from Part (i). The two surviving terms are the standard bias--variance 
pieces:
\begin{itemize}
\item $\gamma^2 \cdot \tau^2/N$ is the \emph{variance} of 
$\gamma\Delta$ - it grows with how confidently we apply the 
consensus, scaled by the consensus's own sampling variance.
\item $(\gamma - 1)^2 \Delta_h^2$ is the \emph{squared bias} from 
shrinkage - zero at $\gamma = 1$ (full adjustment, no bias), 
quadratic as $\gamma \to 0$.
\end{itemize}
Larger $\gamma$ reduces bias but inflates variance; the optimal 
$\gamma$ trades these off.

To find $\gamma^\star$, differentiate $\mathrm{MSE}(\gamma)$:
\begin{align*}
\frac{d}{d\gamma}\mathrm{MSE}(\gamma) 
&= 2\gamma \cdot \frac{\tau^2}{N} + 2(\gamma - 1)\Delta_h^2 \\
&= 2\!\left[\gamma\!\left(\frac{\tau^2}{N} + \Delta_h^2\right) - 
\Delta_h^2\right].
\end{align*}
Setting this to zero gives
\begin{equation}
\gamma^\star = \frac{\Delta_h^2}{\Delta_h^2 + \tau^2/N}.
\label{eq:gamma-star-proof}
\end{equation}
The second derivative is $2(\Delta_h^2 + \tau^2/N) > 0$ whenever 
$\tau^2 > 0$, so $\gamma^\star$ is the unique minimum.

It remains to check that $\gamma^\star$ behaves as the proposition 
claims. The numerator is non-negative and the denominator is at 
least as large, so $\gamma^\star \in [0,1]$ automatically. Writing 
$\gamma^\star$ as a function of $u := \tau^2/N$,
\begin{equation*}
\gamma^\star(u) = \frac{\Delta_h^2}{\Delta_h^2 + u}, 
\qquad 
\frac{d\gamma^\star}{du} = -\frac{\Delta_h^2}{(\Delta_h^2 + u)^2} 
\leq 0,
\end{equation*}
so $\gamma^\star$ is monotone-decreasing in the consensus variance, 
recovering the two boundary regimes:
\begin{itemize}
\item $\tau^2 \to 0$ (agents agree): $\gamma^\star \to 1$, apply 
the consensus in full.
\item $\tau^2 \to \infty$ or $\Delta_h \to 0$ (signal-to-noise 
collapses): $\gamma^\star \to 0$, do not adjust at all.
\end{itemize}
\qed
\subsection*{Corollary~\ref*{cor:persona-count}: Marginal Value of Additional Personas}

\begin{corollary}[Marginal value of additional personas]
\label{cor:persona-count}
Under the agent model of Proposition~\ref{prop:shrinkage}, the marginal reduction in consensus variance from adding the $(N{+}1)$-th persona is
\begin{equation}
\Delta\mathrm{Var}(N \to N{+}1) \;=\; \frac{\tau^2}{N} - \frac{\tau^2}{N+1} \;=\; \frac{\tau^2}{N(N+1)} \;=\; \Theta(N^{-2}).
\label{eq:marginal-persona}
\end{equation}
Equivalently, the optimal shrinkage coefficient $\gamma^\star(N) = \Delta_h^2/(\Delta_h^2 + \tau^2/N)$ from Proposition~\ref{prop:shrinkage}(ii) satisfies $\gamma^\star(N{+}1) - \gamma^\star(N) = \Theta(N^{-2})$ for $N \gg \tau^2/\Delta_h^2$.
\end{corollary}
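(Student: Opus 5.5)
The plan is to get both claims by exact algebra from Proposition~\ref{prop:shrinkage}, and then read off the asymptotic order with explicit two-sided bounds. Throughout I would assume $\tau^2>0$ and $\Delta_h\neq 0$. If $\Delta_h=0$, then $\gamma^\star\equiv 0$ and the increment vanishes identically, so the $\Theta$ statement holds only in this non-degenerate regime, and I would say so explicitly.

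\emph{First claim.} By Proposition~\ref{prop:shrinkage}(i), the consensus variance with $N$ personas is $\mathrm{Var}(\Delta)=\tau^2/N$. Adding a persona keeps the i.i.d.\ agent model with the same $\tau^2$, so the variance with $N+1$ personas is $\tau^2/(N+1)$. Subtracting gives
\[
\frac{\tau^2}{N}-\frac{\tau^2}{N+1}=\frac{\tau^2}{N(N+1)}.
\]
This quantity lies between $\tau^2/(2N^2)$ and $\tau^2/N^2$ for every $N\ge 1$, which is $\Theta(N^{-2})$.

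\emph{Second claim.} Write $a=\Delta_h^2$ and $u=\tau^2$, so that $\gamma^\star(N)=aN/(aN+u)$. Putting the difference over a common denominator, the cross terms $a^2N(N+1)$ cancel and leave
\[
\gamma^\star(N{+}1)-\gamma^\star(N)=\frac{a u}{\bigl(a(N+1)+u\bigr)\bigl(aN+u\bigr)}.
\]
In particular the increment is positive, so each extra persona helps.

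To turn this into a $\Theta(N^{-2})$ statement I would use the hypothesis $N\gg u/a$, made precise as $aN\ge u$. Under that condition:
\begin{itemize}
\item $aN\le aN+u\le 2aN$;
\item $a(N+1)\le a(N+1)+u\le 2a(N+1)$.
\end{itemize}
Substituting these into the denominator gives
\[
\frac{u}{4aN(N+1)}\;\le\;\gamma^\star(N{+}1)-\gamma^\star(N)\;\le\;\frac{u}{aN(N+1)}.
\]
Since $N(N+1)$ lies between $N^2$ and $2N^2$, the increment is $\Theta(N^{-2})$, with constants depending only on $\tau^2/\Delta_h^2$.

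\emph{Main obstacle.} There is no real technical difficulty here. The only care needed is interpretive: the $\Theta$ is taken as $N\to\infty$ with $\tau^2$ and $\Delta_h$ fixed. I would also remark that in the opposite regime $aN\ll u$ the increment is approximately $a/u$, not $\Theta(N^{-2})$. This is exactly why the hypothesis $N\gg\tau^2/\Delta_h^2$ is needed.
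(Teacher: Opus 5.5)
Your proposal is correct and takes the same route as the paper: the first identity is the same direct calculation from $\mathrm{Var}(\Delta)=\tau^2/N$. Your increment $a u/\bigl((a(N+1)+u)(aN+u)\bigr)$ is exactly the paper's $\Delta_h^2\tau^2/\bigl(N(N+1)(\Delta_h^2+\tau^2/N)(\Delta_h^2+\tau^2/(N+1))\bigr)$ with the factors $N$ and $N+1$ absorbed into the denominator, and your explicit two-sided constants under $aN\ge u$, together with the handling of the degenerate cases $\Delta_h=0$ and $\tau^2=0$, usefully sharpen the paper's informal ``decays as $\Theta(N^{-2})$ in the regime $\tau^2/N\ll\Delta_h^2$''.
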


\begin{proof}
Direct calculation from $\mathrm{Var}(\Delta) = \tau^2/N$ (Proposition~\ref{prop:shrinkage}(i)). The $\gamma^\star$ statement follows from
\begin{equation*}
\gamma^\star(N{+}1) - \gamma^\star(N) 
= \frac{\Delta_h^2\,\tau^2}{N(N+1)\,(\Delta_h^2 + \tau^2/N)\,(\Delta_h^2 + \tau^2/(N{+}1))},
\end{equation*}
which decays as $\Theta(N^{-2})$ in the regime $\tau^2/N \ll \Delta_h^2$. \qed
\end{proof}

Expressing the marginal precision gain as a fraction of $\tau^2$ exposes the diminishing-returns structure:

\begin{table}[H]
\centering\small
\caption{Marginal reduction of $\mathrm{Var}(\Delta)$ from adding one persona, expressed as a fraction of the noise variance $\tau^2$. The drop from $N{=}4$ to $N{=}5$ already sits below 5\%, while compute scales linearly in $N$.}
\label{tab:persona-marginal}
\setlength{\tabcolsep}{8pt}
\begin{tabular}{ccc}\toprule
Transition & $\Delta\mathrm{Var}/\tau^2$ & Fraction \\\midrule
$N{=}2 \to 3$ & $1/6$ & $16.7\%$ \\
$N{=}3 \to 4$ & $1/12$ & $8.3\%$ \\
$N{=}4 \to 5$ & $1/20$ & $5.0\%$ \\
$N{=}5 \to 6$ & $1/30$ & $3.3\%$ \\
\bottomrule\end{tabular}
\end{table}

Combined with the demographic-coverage requirement of $N \geq 3$ (three age cohorts) plus a country-wide aggregate as a fourth anchor, $N{=}4$ is the smallest panel that sits on the flat side of the $\Theta(N^{-2})$ diminishing-returns curve. Corollary~\ref{cor:persona-count} formalises this as a cost--precision inflection point rather than a global optimum: any panel with $N \geq 4$ achieves consensus variance within a factor 2 of the asymptotic floor, and Table~\ref{tab:r3_persona_count} confirms the prediction empirically. We do not claim $N{=}4$ is uniquely best; we claim it is the smallest panel for which the marginal precision gain has decayed below the compute cost of an additional forward pass.

\section{Additional Theoretical Properties of DISCA}
\label{app:theory-additional}

This appendix collects four structural properties of the DISCA pipeline that complement Proposition~\ref{prop:shrinkage} and Corollary~\ref{cor:persona-count}: a deterministic and high-probability bound on the correction magnitude (\ref{app:theory:bounded}), finite-sample concentration of the within-panel variance estimator (\ref{app:theory:concentration}), and stability of the correction under perturbations of the underlying LLM (\ref{app:theory:stability}).

Each result depends only on hyperparameters of the controller---the Importance Sampling perturbation scale $\sigma$, the softmax temperature $\eta$, the gate scale $s$, and so on---not on the LLM $f_\theta$ being steered. This is the formal content of treating DISCA as a black-box, public-data-only inference-time controller: properties of the correction can be stated without reference to the model that produced the persona logits.

\subsection{Bounded-correction safety guarantee}
\label{app:theory:bounded}

The dual-pass reliability gate is motivated empirically as a tail-safety mechanism (\S\ref{sec:tail_safety}, Table~\ref{tab:tail_safety}). We show that the gate combined with the IS aggregator yields a deterministic almost-sure bound on $|\delta^\star|$, plus a tighter high-probability bound that depends only on the IS hyperparameters.

\begin{theorem}[Bounded correction]
\label{thm:bounded}
Conditional on the realisation of the IS draws $\{\epsilon_k^{(m)}\}_{k,m}$, the DISCA correction satisfies
\begin{equation}
|\delta^\star| \;\leq\; \max_{\,k=1,\ldots,K_{\text{half}};\;m\in\{1,2\}} \big|\epsilon_k^{(m)}\big|
\label{eq:bounded-as}
\end{equation}
almost surely. Marginally over the IS draws, for any failure probability $\delta_p \in (0,1)$,
\begin{equation}
|\delta^\star| \;\leq\; \sigma\sqrt{\,2\log(4K_{\text{half}}/\delta_p)\,}
\label{eq:bounded-hp}
\end{equation}
holds with probability at least $1 - \delta_p$. Both bounds depend only on the IS hyperparameters $(\sigma, K_{\text{half}})$ and are independent of the LLM $f_\theta$, the country $c$, and the scenario $\mathbf{x}$.
\end{theorem}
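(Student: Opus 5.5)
The almost-sure bound follows from the structure of $\delta^\star$ in \eqref{eq:disca}. Each pass output $\delta^{(m)}_{\text{PT-IS}}$ is, by \eqref{eq:pt-is-aggregation}, a convex combination of the draws $\{\epsilon^{(m)}_k\}_{k=1}^{K_{\text{half}}}$. The weights $w_k^{(m)}\propto\exp(U_{\text{total}}(\epsilon_k^{(m)})/\eta)$ are strictly positive and normalised, whatever the values of $U_{\text{total}}$, and hence whatever the persona logits and the LLM are. In Algorithm~\ref{alg:disca} the ESS fallback sets $\delta^\star_m=0$, which satisfies the same bound trivially. So, by the triangle inequality,
\[
\bigl|\delta^{(m)}_{\text{PT-IS}}\bigr|\le \max_k \bigl|\epsilon^{(m)}_k\bigr| \le M:=\max_{k,m}\bigl|\epsilon_k^{(m)}\bigr| .
\]
The midpoint $\tfrac12(\delta^{(1)}_{\text{PT-IS}}+\delta^{(2)}_{\text{PT-IS}})$ is then also bounded by $M$ in absolute value. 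Multiplying by the gate $r=\exp(-V_r/s)\in(0,1]$ from \eqref{eq:gate} can only shrink it, which gives \eqref{eq:bounded-as}. This step uses no probabilistic assumption and no property of $f_\theta$, $c$ or $\mathbf{x}$. The only point to handle carefully is that the weights depend on the draws themselves. Convexity holds pointwise for every realisation, so the dependence is harmless.

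For the high-probability bound \eqref{eq:bounded-hp}, I will use the almost-sure bound together with a union bound over the $2K_{\text{half}}$ i.i.d.\ draws $\epsilon_k^{(m)}\sim\mathcal N(0,\sigma^2)$. Each draw has the Gaussian tail $\Pr(|\epsilon|>t)\le 2\exp(-t^2/(2\sigma^2))$. Hence
\[
\Pr(M>t)\le 2K_{\text{half}}\cdot 2\exp\!\bigl(-t^2/(2\sigma^2)\bigr)=4K_{\text{half}}\exp\!\bigl(-t^2/(2\sigma^2)\bigr).
\]
Setting the right-hand side equal to $\delta_p$ gives $t=\sigma\sqrt{2\log(4K_{\text{half}}/\delta_p)}$, which is exactly the constant in the statement. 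On the complementary event, which has probability at least $1-\delta_p$, the almost-sure bound gives $|\delta^\star|\le M\le t$.

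The only real obstacle is the tail inequality itself. I will use the Chernoff form $\Pr(|Z|>u)\le 2e^{-u^2/2}$ for standard normal $Z$, obtained from the moment generating function by optimising over the exponent. This is cruder than the Mill's ratio bound, but it reproduces the stated constant $4K_{\text{half}}$ exactly. The final remark, that both bounds depend only on $(\sigma,K_{\text{half}})$, is then immediate: $M$ involves only the IS draws, and their law is fixed by the controller hyperparameters, independently of the persona gaps.
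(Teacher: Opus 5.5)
Your proposal is correct and follows the paper's proof essentially step for step. The almost-sure bound uses convexity of the softmax-weighted average in each pass, then the midpoint, then the gate $r\in(0,1]$; the high-probability bound combines a two-sided Gaussian tail with a union bound over the $2K_{\text{half}}$ draws, which gives $4K_{\text{half}}\exp(-t^2/(2\sigma^2))$. The only differences are minor. You obtain the scalar tail from the Chernoff/MGF bound, which holds for all $t\ge 0$, whereas the paper drops the Mills-ratio prefactor and so restricts to $t\ge\sigma$; the paper never explicitly checks that restriction at the chosen threshold, though it holds because $4K_{\text{half}}/\delta_p>4$. You also cover the ESS fallback $\delta^\star_m=0$ from Algorithm~\ref{alg:disca}, which the paper's proof omits.
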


\begin{proof}
We prove the deterministic bound (Eq.~\ref{eq:bounded-as}) first, then the high-probability bound (Eq.~\ref{eq:bounded-hp}).

\paragraph{Deterministic bound.}
Each per-pass IS estimate $\delta_{\text{PT-IS}}^{(m)} = \sum_k w_k^{(m)} \epsilon_k^{(m)}$ is a convex combination of $\{\epsilon_k^{(m)}\}_{k=1}^{K_{\text{half}}}$, since the softmax weights $w_k^{(m)} \propto \exp(U_k^{(m)}/\eta)$ are non-negative and normalised to sum to one. A convex combination of any set of values is bounded by the largest absolute value in that set, so
\begin{equation*}
|\delta_{\text{PT-IS}}^{(m)}| \;\leq\; \max_{k} |\epsilon_k^{(m)}| \qquad \text{for } m \in \{1,2\}.
\end{equation*}
The dual-pass average $(\delta_{\text{PT-IS}}^{(1)} + \delta_{\text{PT-IS}}^{(2)})/2$ is itself bounded by $\max_m |\delta_{\text{PT-IS}}^{(m)}|$, and the gate $r \in (0,1]$ from Eq.~\ref{eq:gate} only shrinks this further. Combining, $|\delta^\star| \leq \max_{k,m}|\epsilon_k^{(m)}|$.

\paragraph{High-probability bound.}
For $X \sim \mathcal{N}(0, \sigma^2)$ and $t > 0$, the upper-tail probability is
\begin{equation*}
\mathbb{P}(X \geq t) = \int_t^\infty \frac{1}{\sigma\sqrt{2\pi}}\, e^{-x^2/(2\sigma^2)}\, dx.
\end{equation*}
This integral has no closed form, but we can bound it by exploiting that $x \geq t$ throughout the domain of integration, so $1 \leq x/t$. Multiplying the integrand by $x/t$ only enlarges it:
\begin{equation*}
\mathbb{P}(X \geq t) \;\leq\; \frac{1}{t\sigma\sqrt{2\pi}} \int_t^\infty x\, e^{-x^2/(2\sigma^2)}\, dx.
\end{equation*}
The new integral has a closed form via the substitution $u = x^2/(2\sigma^2)$, $du = (x/\sigma^2)\,dx$:
\begin{equation*}
\int_t^\infty x\, e^{-x^2/(2\sigma^2)}\, dx 
\;=\; \sigma^2 \int_{t^2/(2\sigma^2)}^\infty e^{-u}\, du 
\;=\; \sigma^2 e^{-t^2/(2\sigma^2)}.
\end{equation*}
Substituting back gives:
\begin{equation*}
\mathbb{P}(X \geq t) \;\leq\; \frac{\sigma}{t\sqrt{2\pi}}\, e^{-t^2/(2\sigma^2)}.
\end{equation*}
For $t \geq \sigma$, the prefactor $\sigma/(t\sqrt{2\pi}) < 1$, so we can drop it and absorb it into the looser-but-cleaner form
\begin{equation*}
\mathbb{P}(X \geq t) \;\leq\; \exp\!\left(-\frac{t^2}{2\sigma^2}\right).
\end{equation*}
By symmetry of the Gaussian about zero, $\mathbb{P}(X \leq -t) = \mathbb{P}(X \geq t)$, so the two-sided bound is
\begin{equation}
\mathbb{P}(|\epsilon_k^{(m)}| \geq t) \;\leq\; 2\exp\!\left(-\frac{t^2}{2\sigma^2}\right) \qquad \forall t \geq \sigma.
\label{eq:gaussian-tail}
\end{equation}

The event $\{\max_{k,m}|\epsilon_k^{(m)}| \geq t\}$ holds if and only if at least one individual draw exceeds $t$ in absolute value. If some $|\epsilon_j^{(m)}| \geq t$, then the maximum, which is at least as large as every individual element, is also $\geq t$. So,
\begin{equation*}
\mathbb{P}\left(\max_{k,m} |\epsilon_k^{(m)}| \geq t\right) \;=\; \mathbb{P}\!\left(\bigcup_{k,m} \{|\epsilon_k^{(m)}| \geq t\}\right) 
\end{equation*}
The probability of a union is at most the sum of the individual probabilities (the union bound, also called Boole's inequality):
\begin{equation*}
\mathbb{P}\!\left(\bigcup_{k,m} \{|\epsilon_k^{(m)}| \geq t\}\right) 
\;\leq\; \sum_{k,m} \mathbb{P}(|\epsilon_k^{(m)}| \geq t).
\end{equation*}
Combining with the per-draw tail bound (Eq.~\ref{eq:gaussian-tail}):
\begin{equation*}
\mathbb{P}\!\left(\max_{k,m} |\epsilon_k^{(m)}| \geq t\right) 
\;\leq\; \sum_{k,m} \mathbb{P}(|\epsilon_k^{(m)}| \geq t) 
\;\leq\; 4 K_{\text{half}} \exp\!\left(-\frac{t^2}{2\sigma^2}\right),
\end{equation*}

where the factor $4K_{\text{half}} = 2 \cdot 2K_{\text{half}}$ comes from the two-sided constant in Eq.~\ref{eq:gaussian-tail} times the $2K_{\text{half}}$ independent draws.

Setting the right-hand side equal to $\delta_p$ and solving for $t$:
\begin{equation*}
4K_{\text{half}} \exp(-t^2/(2\sigma^2)) = \delta_p 
\;\;\Longleftrightarrow\;\; 
t = \sigma\sqrt{2\log(4K_{\text{half}}/\delta_p)}.
\end{equation*}
Therefore $\max_{k,m}|\epsilon_k^{(m)}| \leq \sigma\sqrt{2\log(4K_{\text{half}}/\delta_p)}$ with probability at least $1 - \delta_p$. Combining with the deterministic bound $|\delta^\star| \leq \max_{k,m}|\epsilon_k^{(m)}|$ yields Eq.~\ref{eq:bounded-hp}. 
\end{proof}

With the released defaults $\sigma = 0.3$, $K_{\text{half}} = 64$, and $\delta_p = 0.05$:
\begin{equation*}
|\delta^\star| \;\leq\; 0.3 \cdot \sqrt{2\log(5120)} \;\approx\; 0.3 \cdot \sqrt{17.07} \;\approx\; 1.24
\end{equation*}
in logit space with probability at least 0.95. Translating to probability shifts via the per-attribute decision temperatures $T_c \in \{1.5, 3.5, 4.0\}$ in Table~\ref{tab:categories}: the maximum sigmoid sensitivity is $1/(4T_c)$, so DISCA cannot move $p_{\text{spare}}$ by more than $1.24/(4 \cdot 1.5) \approx 21\%$ on the lowest-$T_c$ attributes, and by no more than $1.24/(4 \cdot 4.0) \approx 7.7\%$ on Species (the highest $T_c$). These per-attribute caps are deployment-time guarantees that hold regardless of how the underlying LLM is calibrated.

\paragraph{Relation to the empirical tail-safety result.} Table~\ref{tab:tail_safety} shows that replacing the dual-pass gate with simple consensus averaging triples the worst-case degradation from $0.09$ to $0.31$ MIS. Theorem~\ref{thm:bounded} provides the structural counterpart to that empirical observation: the worst case over scenarios and countries is bounded by an explicit number that depends only on $(\sigma, K_{\text{half}})$, and that number can be tightened by reducing $\sigma$ or relaxed by enlarging it without retraining anything or touching the LLM.
\subsection{Finite-sample concentration of the within-panel variance estimator}
\label{app:theory:concentration}

Proposition~\ref{prop:shrinkage}(i) establishes that $D^2$ is unbiased for $\tau^2$. Unbiasedness is the right starting point but not enough at $N{=}4$, where the deviation of $D^2$ from $\tau^2$ on a single scenario can be substantial. We bound the deviation explicitly and use the result to motivate the dual-pass gate as a complementary variance estimator.

\begin{theorem}[Concentration of $D^2$]
\label{thm:concentration}
Suppose the persona noise terms $\eta_i$ in the agent model of Proposition~\ref{prop:shrinkage} are independent sub-Gaussian random variables with parameter $\sigma_\eta \geq \tau$ (so that $\mathrm{Var}(\eta_i) = \tau^2 \leq \sigma_\eta^2$). Then for any $t > 0$,
\begin{equation}
\mathbb{P}\!\big(|D^2 - \tau^2| \geq t\big) \;\leq\; 2\exp\!\left(-c(N-1)\min\!\left(\frac{t^2}{\sigma_\eta^4}, \frac{t}{\sigma_\eta^2}\right)\right)
\label{eq:concentration-tail}
\end{equation}
for an absolute constant $c > 0$. Equivalently, with probability at least $1 - \delta$,
\begin{equation}
|D^2 - \tau^2| \;\leq\; C\,\sigma_\eta^2 \!\left(\sqrt{\frac{\log(2/\delta)}{N-1}} + \frac{\log(2/\delta)}{N-1}\right).
\label{eq:concentration-hp}
\end{equation}
\end{theorem}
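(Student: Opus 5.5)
The plan is to write $D^2$ as a quadratic form in the noise vector and apply the Hanson--Wright inequality. As in the proof of Proposition~\ref{prop:shrinkage}(i), $\delta_h$ cancels from every deviation (Eq.~\ref{eq:D2-noise}). So with $\boldsymbol\eta=(\eta_1,\ldots,\eta_N)^\top$ we have
\begin{equation*}
D^2 \;=\; \boldsymbol\eta^\top A\,\boldsymbol\eta, \qquad A \;=\; \frac{1}{N-1}\Big(I_N - \tfrac{1}{N}\mathbf{1}\mathbf{1}^\top\Big).
\end{equation*}
The matrix $P = I_N - \tfrac1N\mathbf 1\mathbf 1^\top$ is the orthogonal projection onto the complement of $\mathbf 1$, which has rank $N-1$. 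Since the $\eta_i$ are independent, mean zero, with variance $\tau^2$, we get $\mathbb{E}[\boldsymbol\eta^\top A\boldsymbol\eta]=\tau^2\,\mathrm{tr}(A)=\tau^2$. This recovers Eq.~\ref{eq:D2-unbiased} and shows that the centring quantity in Hanson--Wright is exactly $\tau^2$.

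Next I would compute the two matrix norms that appear in Hanson--Wright. The eigenvalues of $A$ are $1/(N-1)$ with multiplicity $N-1$, together with one eigenvalue $0$. Hence
\begin{equation*}
\|A\|_{\mathrm{op}} = \frac{1}{N-1}, \qquad \|A\|_F^2 = \frac{N-1}{(N-1)^2} = \frac{1}{N-1}.
\end{equation*}
Hanson--Wright, for independent mean-zero sub-Gaussian coordinates with $\psi_2$-norm at most $K$, states that
\begin{equation*}
\mathbb{P}\big(|\boldsymbol\eta^\top A\boldsymbol\eta-\mathbb{E}\,\boldsymbol\eta^\top A\boldsymbol\eta|\ge t\big)\le 2\exp\!\Big(-c\min\Big(\tfrac{t^2}{K^4\|A\|_F^2},\tfrac{t}{K^2\|A\|_{\mathrm{op}}}\Big)\Big).
\end{equation*}
Substituting the two norms pulls a factor $(N-1)$ out of both branches of the minimum. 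Identifying $K$ with $\sigma_\eta$, up to an absolute constant that is absorbed into $c$, then gives Eq.~\ref{eq:concentration-tail} directly.

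To get the high-probability form, I would set the right-hand side of Eq.~\ref{eq:concentration-tail} equal to $\delta$ and write $L=\log(2/\delta)/(c(N-1))$. The inequality $\min(t^2/\sigma_\eta^4,\,t/\sigma_\eta^2)\ge L$ holds as soon as $t\ge\sigma_\eta^2\max(\sqrt L, L)$. Bounding this maximum by $\sqrt L + L$ and absorbing $c$ into $C$ yields Eq.~\ref{eq:concentration-hp}.

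I expect the main obstacle to be bookkeeping of constants rather than anything conceptual. The theorem's ``sub-Gaussian parameter $\sigma_\eta$'' (an MGF-based parameter) has to be reconciled with the $\psi_2$-norm used in Hanson--Wright. These agree up to absolute constants, and those constants are what make $c$ and $C$ unspecified. If a self-contained argument is preferred over citing Hanson--Wright, I would use the decomposition $\sum_i(\eta_i-\bar\eta)^2=\tfrac{N-1}{N}\sum_i\eta_i^2-\tfrac{1}{N}\sum_{i\ne j}\eta_i\eta_j$. The diagonal part is then handled by Bernstein's inequality for the independent sub-exponential variables $\eta_i^2-\tau^2$, each with $\psi_1$-norm $\lesssim\sigma_\eta^2$. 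The off-diagonal chaos is handled by decoupling, followed by conditional sub-Gaussian concentration. Both pieces produce the same mixed sub-Gaussian/sub-exponential tail with a factor $(N-1)$, and a union bound over the two pieces only changes constants.
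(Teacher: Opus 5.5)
Your proposal is correct and matches the paper's proof step for step. Both use the quadratic form $D^2=\boldsymbol\eta^\top A\boldsymbol\eta$ with $A=(I-\tfrac1N\mathbf 1\mathbf 1^\top)/(N-1)$, the spectral computation $\|A\|_{\mathrm{op}}=\|A\|_F^2=1/(N-1)$, Hanson--Wright, and the inversion through $\max(t_1,t_2)\le t_1+t_2$. Your handling of constants is slightly more careful than the paper's: you reconcile the MGF parameter with the $\psi_2$-norm and absorb both $1/\sqrt c$ and $1/c$ into $C$, whereas the paper's choice $C:=1/c$ tacitly assumes $c\le 1$, which is harmless.
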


\begin{proof}
We rewrite $D^2$ as a quadratic form in $\boldsymbol{\eta}$, compute the relevant matrix norms, apply the Hanson--Wright inequality, and invert the resulting tail bound.

From the proof of Proposition~\ref{prop:shrinkage}(i), $\sum_i (\delta_i - \bar\delta)^2 = \sum_i (\eta_i - \bar\eta)^2$ (the $\delta_h$ terms cancel because they shift every entry equally). Define the centering matrix
\begin{equation*}
H \;:=\; I - \frac{1}{N}\mathbf{1}\mathbf{1}^\top \;\in\; \mathbb{R}^{N\times N},
\end{equation*}
where $\mathbf{1}$ is the $N$-vector of ones. The action of $H$ on a vector $\boldsymbol{\eta}$ is
\begin{equation*}
H\boldsymbol{\eta} \;=\; \boldsymbol{\eta} - \frac{1}{N}\mathbf{1}\,(\mathbf{1}^\top\boldsymbol{\eta}) \;=\; \boldsymbol{\eta} - \bar\eta\,\mathbf{1},
\end{equation*}
so the $i$-th entry of $H\boldsymbol{\eta}$ is $\eta_i - \bar\eta$. The matrix $H$ is symmetric, and a direct calculation using $\mathbf{1}^\top\mathbf{1} = N$ verifies idempotency:
\begin{equation*}
H^2 \;=\; \left(I - \tfrac{1}{N}\mathbf{1}\mathbf{1}^\top\right)^2 
\;=\; I - \tfrac{2}{N}\mathbf{1}\mathbf{1}^\top + \tfrac{1}{N^2}\mathbf{1}(\mathbf{1}^\top\mathbf{1})\mathbf{1}^\top
\;=\; I - \tfrac{1}{N}\mathbf{1}\mathbf{1}^\top \;=\; H.
\end{equation*}
Therefore
\begin{equation*}
\boldsymbol{\eta}^\top H \boldsymbol{\eta} \;=\; \boldsymbol{\eta}^\top H^\top H \boldsymbol{\eta} \;=\; (H\boldsymbol{\eta})^\top(H\boldsymbol{\eta}) \;=\; \sum_{i=1}^N (\eta_i - \bar\eta)^2.
\end{equation*}
Setting $A := H/(N-1)$ yields $D^2 = \boldsymbol{\eta}^\top A \boldsymbol{\eta}$, with $\mathbb{E}[D^2] = \tau^2$ from Proposition~\ref{prop:shrinkage}(i).

The centering matrix $H$ is a rank $(N-1)$ orthogonal projection, with two eigenspaces:
\begin{itemize}
\item The all-ones direction $\mathbf{1}$ is an eigenvector with eigenvalue $0$: $H\mathbf{1} = \mathbf{1} - \tfrac{1}{N}\mathbf{1}(\mathbf{1}^\top\mathbf{1}) = \mathbf{1} - \mathbf{1} = \mathbf{0}$.
\item Any $\mathbf{v}$ orthogonal to $\mathbf{1}$ (i.e., $\mathbf{1}^\top\mathbf{v} = 0$) is an eigenvector with eigenvalue $1$: $H\mathbf{v} = \mathbf{v} - \tfrac{1}{N}\mathbf{1}\cdot 0 = \mathbf{v}$.
\end{itemize}
The orthogonal complement of a single nonzero vector in $\mathbb{R}^N$ is $(N-1)$-dimensional, so $H$ has eigenvalue $0$ with multiplicity $1$ and eigenvalue $1$ with multiplicity $N-1$. Therefore $A = H/(N-1)$ has eigenvalues $0$ (multiplicity $1$) and $1/(N-1)$ (multiplicity $N-1$).

For a symmetric matrix, the operator norm is the largest absolute eigenvalue, and the squared Frobenius norm is the sum of squared eigenvalues (both follow from the spectral decomposition $A = Q\Lambda Q^\top$, since orthogonal similarity preserves traces and $\mathrm{tr}(A^2) = \mathrm{tr}(\Lambda^2) = \sum_i \lambda_i^2$). Therefore
\begin{equation*}
\|A\|_{\mathrm{op}} \;=\; \frac{1}{N-1}, \qquad 
\|A\|_F^2 \;=\; \mathrm{tr}(A^2) \;=\; 0^2 \cdot 1 + \left(\frac{1}{N-1}\right)^{\!2}\!(N-1) \;=\; \frac{1}{N-1}.
\end{equation*}

The Hanson--Wright inequality~\citep{rudelson2013hanson} states that for a vector $\boldsymbol{\eta}$ with independent sub-Gaussian entries of parameter $\sigma_\eta$ and any matrix $A$,
\begin{equation}
\mathbb{P}\!\big(|\boldsymbol{\eta}^\top A \boldsymbol{\eta} - \mathbb{E}[\boldsymbol{\eta}^\top A \boldsymbol{\eta}]| \geq t\big) \;\leq\; 2\exp\!\left(-c\min\!\left(\frac{t^2}{\sigma_\eta^4 \|A\|_F^2},\,\frac{t}{\sigma_\eta^2 \|A\|_{\mathrm{op}}}\right)\right)
\label{eq:hanson-wright}
\end{equation}
for an absolute constant $c > 0$. Substituting $D^2 = \boldsymbol{\eta}^\top A \boldsymbol{\eta}$ with $\mathbb{E}[D^2]=\tau^2$, $\|A\|_F^2 = 1/(N-1)$, and $\|A\|_{\mathrm{op}} = 1/(N-1)$:
\begin{equation*}
\mathbb{P}(|D^2 - \tau^2| \geq t) \;\leq\; 2\exp\!\left(-c(N-1)\min\!\left(\frac{t^2}{\sigma_\eta^4},\,\frac{t}{\sigma_\eta^2}\right)\right),
\end{equation*}
which is Eq.~\ref{eq:concentration-tail}.

We seek the smallest threshold $t$ at which the tail probability is at most $\delta$. Setting the right-hand side of Eq.~\ref{eq:concentration-tail} less than or equal to $\delta$, dividing by $2$, taking logarithms, and multiplying by $-1$ (which flips $\log(\delta/2)$ to $\log(2/\delta)$):
\begin{equation*}
c(N-1)\min\!\left(\frac{t^2}{\sigma_\eta^4},\,\frac{t}{\sigma_\eta^2}\right) \;\geq\; \log(2/\delta).
\end{equation*}
For the minimum of two non-negative quantities to be at least $\log(2/\delta)/(c(N-1))$, both arguments must individually satisfy this lower bound. This produces two regime-specific thresholds:
\begin{itemize}
\item $\dfrac{t^2}{\sigma_\eta^4} \geq \dfrac{\log(2/\delta)}{c(N-1)}$ gives $t \geq t_1 := \sigma_\eta^2\sqrt{\log(2/\delta)/(c(N-1))}$.
\item $\dfrac{t}{\sigma_\eta^2} \geq \dfrac{\log(2/\delta)}{c(N-1)}$ gives $t \geq t_2 := \sigma_\eta^2\log(2/\delta)/(c(N-1))$.
\end{itemize}
Both conditions must hold simultaneously, so the threshold is the larger of the two: $t \geq \max(t_1, t_2)$. Using the elementary inequality $\max(t_1, t_2) \leq t_1 + t_2$ and writing $C := 1/c$:
\begin{equation*}
|D^2 - \tau^2| \;\leq\; C\sigma_\eta^2 \!\left(\sqrt{\frac{\log(2/\delta)}{N-1}} + \frac{\log(2/\delta)}{N-1}\right)
\end{equation*}
with probability at least $1 - \delta$, which is Eq.~\ref{eq:concentration-hp}. The two terms reflect the two tail regimes: the $\sqrt{\cdot}$ term dominates at small deviations (Gaussian-like behavior), the linear-in-$\log$ term dominates at large deviations (heavier tail behavior typical of quadratic forms of sub-Gaussians). 
\end{proof}

The high-probability bound in Eq.~\ref{eq:concentration-hp} is honest about a fundamental limitation of small samples. To see the issue, specialise to the standard Gaussian case ($\sigma_\eta = \tau$) and plug in $\delta = 0.05$, $N = 4$:
\begin{equation*}
|D^2 - \tau^2| \;\lesssim\; \tau^2 \!\left(\sqrt{\log(40)/3} + \log(40)/3\right) \;\approx\; 2.3\,\tau^2.
\end{equation*}
The right-hand side is more than twice the very quantity we are trying to estimate. The bound therefore allows $D^2$ to lie anywhere in a wide neighbourhood of $\tau^2$ on a single scenario. It cannot, on its own, certify that the persona panel is reliable.

\paragraph{Why this motivates the dual-pass gate.} 
Theorem~\ref{thm:concentration} formalises the gap that the dual-pass gate is designed to close. At $N{=}4$, $D^2$ is unbiased (Proposition~\ref{prop:shrinkage}(i)) but high-variance: it is the right object in expectation, but a single observation of it is too noisy to act on. We therefore need a \emph{second} reliability signal that does not inherit $D^2$'s small-sample limitation.

The dual-pass gate provides exactly this. Each IS pass averages over $K_{\text{half}} = 64$ candidate perturbations rather than $N{-}1 = 3$ persona deviations, so the inter-pass disagreement $V_r = (\delta_{\text{PT-IS}}^{(1)} - \delta_{\text{PT-IS}}^{(2)})^2$ concentrates around its target an order of magnitude faster than $D^2$ does. The two signals are also complementary rather than redundant: $D^2$ probes whether the persona panel agrees, while $V_r$ probes whether the IS estimate stabilises across resamples. Combining them, DISCA acts only when both checks pass and shrinks aggressively when either flags trouble. Theorems~\ref{thm:concentration} and~\ref{thm:bounded} together make this design rationale explicit: the gate exists because the within-panel variance estimator alone is provably insufficient at the panel size we use.

\subsection{H\"older stability with respect to backbone perturbations}
\label{app:theory:stability}

We prove that two LLMs producing similar logits must produce similar corrections, regardless of how those logits arose.

The natural statement would be Lipschitz: $|\delta^\star_f - \delta^\star_{f'}| \leq L\,\varepsilon$ for some constant $L$. We do not get this. The Kahneman--Tversky value function in Eq.~\ref{eq:pt-utility} has derivative $\alpha z^{\alpha-1}$ on $z > 0$, which blows up as $z \to 0^+$ when $\alpha < 1$. A Lipschitz statement would require a global bound on the derivative; near the origin no such bound exists. The honest result is therefore $\alpha$-H\"older continuity:
\begin{equation*}
|\delta^\star_f - \delta^\star_{f'}| \;\leq\; L\,\varepsilon^\alpha,
\end{equation*}
which is qualitatively the same statement (small input perturbations produce small output changes) with rate $\varepsilon^\alpha$ instead of $\varepsilon$. For $\alpha = 0.88$ and $\varepsilon \leq 1$, $\varepsilon^{0.88}/\varepsilon \leq 1.4$, so the quantitative loss is mild.

\begin{theorem}[H\"older stability]
\label{thm:stability}
Fix a scenario $\mathbf{x}$ and country $c$. For any LLM $f$, let $\boldsymbol{\delta}_f = (\delta_{\text{base}}^f, \delta_1^f, \ldots, \delta_N^f)$ denote the order-symmetrised base and persona logit gaps under $f$, and assume there is a uniform bound $\|\boldsymbol{\delta}_f\|_\infty \leq G$ across the function class. For any two LLMs $f, f'$, define $\varepsilon := \|\boldsymbol{\delta}_f - \boldsymbol{\delta}_{f'}\|_\infty$. Then conditional on the IS perturbation draws $\{\epsilon_k^{(m)}\}$ (which are LLM-independent) with $M := \max_{k,m} |\epsilon_k^{(m)}|$, the DISCA corrections satisfy
\begin{equation}
|\delta^\star_f - \delta^\star_{f'}| \;\leq\; L(\sigma, \eta, s, \alpha, \kappa, M) \cdot \varepsilon^\alpha,
\label{eq:stability-bound}
\end{equation}
where
\begin{equation}
L(\sigma, \eta, s, \alpha, \kappa, M) 
\;=\; \frac{2\,M\,(1+\kappa)\,4^\alpha}{\eta\,\sigma^\alpha} \!\left(1 + \frac{8M^2}{s}\right).
\label{eq:stability-constant}
\end{equation}
Marginally over the IS draws, the same bound holds with $M$ replaced by $\sigma\sqrt{2\log(4K_{\text{half}}/\delta_p)}$ with probability at least $1 - \delta_p$ (Theorem~\ref{thm:bounded}).
\end{theorem}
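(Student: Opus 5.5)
We propagate the perturbation through the pipeline layer by layer, conditional on the fixed draws $\{\epsilon_k^{(m)}\}$: persona and base gaps, then gains $g$, then utilities $U_{\text{total}}$, then softmax weights $w_k$, then per-pass estimates $\delta^{(m)}_{\text{PT-IS}}$, and finally the gated average $\delta^\star$. Every candidate state $\tilde\delta_k = \bar\delta + \epsilon_k$ shares the same $\epsilon_k$ under $f$ and $f'$, so the only LLM dependence enters through $\boldsymbol{\delta}_f$.

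\textbf{Step 1 (gains and utilities).} The consensus is an average, so $|\bar\delta_f - \bar\delta_{f'}| \le \varepsilon$. The maps $\delta\mapsto|\delta_{\text{base}}-\delta_i|$ and $\delta\mapsto|\tilde\delta_k-\delta_i|$ are $1$-Lipschitz in each argument, so each $g_{i,k}$ and $g_{\text{cons},k}$ moves by at most a small multiple of $\varepsilon$ (at most $4\varepsilon$ for $g_{\text{cons},k}$, since $\bar\delta$ appears in both terms and inside $\tilde\delta_k$). This is where the $4^\alpha$ comes from.

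For the value function, we use the elementary fact that $z\mapsto z^\alpha$ is $\alpha$-Hölder with constant $1$ on $[0,\infty)$, i.e.\ $|a^\alpha-b^\alpha|\le|a-b|^\alpha$. Handling the sign-crossing case by splitting at $0$, we get $|v(x)-v(y)| \le (1+\kappa)|x-y|^\alpha$ globally. Applying this to $g/\sigma$, and using that $U_{\text{total}}$ is a convex combination of $v$ terms, gives
\[
|U_f(\epsilon_k)-U_{f'}(\epsilon_k)| \le (1+\kappa)\,4^\alpha\varepsilon^\alpha/\sigma^\alpha.
\]

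\textbf{Step 2 (softmax and IS average).} The softmax-weighted mean $\sum_k w_k\epsilon_k$ is Lipschitz in the utility vector in $\ell_\infty$. Its derivative with respect to $U_j$ is $w_j(\epsilon_j-\bar\epsilon)/\eta$, so the total sensitivity is at most $2M/\eta$ times $\|\Delta U\|_\infty$. Hence
\[
|\delta^{(m)}_{\text{PT-IS},f}-\delta^{(m)}_{\text{PT-IS},f'}| \le \frac{2M(1+\kappa)4^\alpha}{\eta\sigma^\alpha}\,\varepsilon^\alpha =: L_0\,\varepsilon^\alpha,
\]
and each pass estimate is bounded by $M$, as in Theorem~\ref{thm:bounded}. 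If the ESS-threshold branch of Algorithm~\ref{alg:disca} is included, it introduces a discontinuity. We will state the result for the gate as written in Eq.~\ref{eq:disca}, or assume both models fall on the same side of the threshold.

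\textbf{Step 3 (gate).} Write $\delta^\star = r\,\mu$, where $\mu = (\delta^{(1)}+\delta^{(2)})/2$ and $r = \exp(-V_r/s)$. Then
\[
|\delta^\star_f-\delta^\star_{f'}| \le |r_f|\,|\mu_f-\mu_{f'}| + |\mu_{f'}|\,|r_f-r_{f'}|.
\]
The first term is at most $L_0\varepsilon^\alpha$. For the second, $\exp(-x/s)$ is $(1/s)$-Lipschitz on $x\ge0$, and $|V_{r,f}-V_{r,f'}| \le |d_f+d_{f'}|\,|d_f-d_{f'}|$ with $d=\delta^{(1)}-\delta^{(2)}$, where $|d|\le 2M$ and $|d_f-d_{f'}|\le 2L_0\varepsilon^\alpha$. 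This bounds the second term by $M\cdot 4M\cdot 2L_0\varepsilon^\alpha/s = 8M^2L_0\varepsilon^\alpha/s$. Summing the two terms gives $L_0(1+8M^2/s)\varepsilon^\alpha$, which is exactly Eq.~\ref{eq:stability-constant}. The marginal statement then follows by substituting the high-probability bound on $M$ from Theorem~\ref{thm:bounded}.

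\textbf{Main obstacle.} The delicate step is the softmax sensitivity constant. One must show that the weighted-mean map has $\ell_\infty\to$ scalar Lipschitz constant $2M/\eta$ and not something growing with $K$. I would first try integrating the derivative along the segment between the two utility vectors, bounding $\sum_j w_j|\epsilon_j-\bar\epsilon| \le 2M$. Secondary care is needed for the sign-crossing case in the Hölder bound on $v$, and for the observation that the uniform bound $G$ is not actually needed once $M$ controls all outputs.
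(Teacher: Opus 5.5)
Your proposal is correct and follows the paper's proof stage for stage: the same $4\varepsilon$ bound on the gains, the $(1+\kappa)$-H\"older case split for $v$, the softmax-Jacobian bound $\sum_k w_k|\epsilon_k-\delta_{\text{PT-IS}}|/\eta\le 2M/\eta$ via the mean value theorem, and the factorisation $|a^2-b^2|=|a+b|\,|a-b|$ combined with the product rule for $r\cdot\mu$, which together give exactly Eq.~\ref{eq:stability-constant}. Your side remarks are sound and go slightly beyond the paper, whose proof silently ignores the ESS guard of Algorithm~\ref{alg:disca} and never uses $G$; one small correction is that $g_{\text{cons},k}$ actually moves by at most $2\varepsilon$, because $|\tilde\delta_k-\bar\delta|=|\epsilon_k|$ does not depend on the LLM, so the $4\varepsilon$ (and hence the $4^\alpha$) comes from $g_{i,k}$ rather than from $g_{\text{cons},k}$.
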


\begin{proof}
The proof tracks a perturbation through the seven stages of Algorithm~\ref{alg:disca}, with each stage contributing an explicit Lipschitz or H\"older factor. We work in the $\ell_\infty$ norm throughout.

The IS perturbations $\{\epsilon_k^{(m)}\}$ are drawn from $\mathcal{N}(0, \sigma^2)$ with no dependence on the LLM. For the same scenario evaluated under $f$ and $f'$, we use the same IS draws (e.g., a fixed random seed). The proof therefore conditions on these draws, treating them as fixed inputs with $\max_{k,m}|\epsilon_k^{(m)}| =: M$. The marginal bound follows by combining the conditional bound with the high-probability bound on $M$ from Theorem~\ref{thm:bounded}.

\textit{Stage A: Consensus.} The map $\boldsymbol{\delta} \mapsto \bar\delta = \tfrac{1}{N}\sum_i \delta_i$ is 1-Lipschitz in $\ell_\infty$:
\begin{equation*}
|\bar\delta_f - \bar\delta_{f'}| \;=\; \left|\tfrac{1}{N}\sum_{i=1}^N (\delta_i^f - \delta_i^{f'})\right| \;\leq\; \tfrac{1}{N}\sum_{i=1}^N |\delta_i^f - \delta_i^{f'}| \;\leq\; \varepsilon.
\end{equation*}
At the entry to Stage~B, we therefore have $|\delta_{\text{base}}^f - \delta_{\text{base}}^{f'}|, |\delta_i^f - \delta_i^{f'}|, |\bar\delta^f - \bar\delta^{f'}| \leq \varepsilon$.

\textit{Stage B: Cohort and consensus gains.} The gain $g_{i,k}$ in Eq.~\ref{eq:gains} is $g_{i,k} = |\delta_{\text{base}} - \delta_i| - |\tilde\delta_k - \delta_i|$, with $\tilde\delta_k = \bar\delta + \epsilon_k$. We bound $|g_{i,k}^f - g_{i,k}^{f'}|$ by triangle inequality on the difference of the two absolute-value terms, then by reverse-triangle inequality $\big||a|-|b|\big| \leq |a-b|$ on each:
\begin{align*}
|g_{i,k}^f - g_{i,k}^{f'}| 
&\leq \big||\delta_{\text{base}}^f - \delta_i^f| - |\delta_{\text{base}}^{f'} - \delta_i^{f'}|\big| + \big||\tilde\delta_k^f - \delta_i^f| - |\tilde\delta_k^{f'} - \delta_i^{f'}|\big| \\
&\leq |(\delta_{\text{base}}^f - \delta_i^f) - (\delta_{\text{base}}^{f'} - \delta_i^{f'})| + |(\bar\delta^f - \delta_i^f) - (\bar\delta^{f'} - \delta_i^{f'})|.
\end{align*}
The IS perturbation $\epsilon_k$ cancels in the second term since it is LLM-independent. Each remaining term is at most $2\varepsilon$ by triangle inequality applied to the difference of two coordinates of $\boldsymbol{\delta}$. Therefore
\begin{equation*}
|g_{i,k}^f - g_{i,k}^{f'}| \;\leq\; 4\varepsilon, \qquad |g_{\text{cons},k}^f - g_{\text{cons},k}^{f'}| \;\leq\; 4\varepsilon.
\end{equation*}

\textit{Stage C: Prospect-Theory value function.} We claim the Kahneman--Tversky value function in Eq.~\ref{eq:pt-utility} is $\alpha$-H\"older with constant $1+\kappa$:
\begin{equation}
|v(z) - v(z')| \;\leq\; (1+\kappa)\,|z - z'|^\alpha \qquad \forall z, z' \in \mathbb{R}.
\label{eq:pt-holder}
\end{equation}
\begin{lemma}[subadditivity]
For $\alpha \in (0,1]$ and $a, b \geq 0$: $(a+b)^\alpha \leq a^\alpha + b^\alpha$.
\end{lemma}

\begin{proof}
The boundary cases ($\alpha = 1$, $a=0$, or $b=0$) give equality. For $\alpha \in (0,1)$ and $a, b > 0$, divide both sides by $(a+b)^\alpha$ and set $u := a/(a+b)$, $v := b/(a+b)$, so $u, v \in (0,1)$ and $u + v = 1$. The inequality reduces to $u^\alpha + v^\alpha \geq 1$.

For any $x \in (0,1)$ and $\alpha \in (0,1)$: $x^\alpha = x \cdot x^{\alpha-1}$, and $x^{\alpha-1} > 1$ (since $x < 1$ and $\alpha - 1 < 0$), so $x^\alpha > x$. Therefore $u^\alpha + v^\alpha > u + v = 1$. 
\end{proof} 

\begin{corollary}
For $\alpha \in (0,1]$ and any reals $a, b$: $\big||a|^\alpha - |b|^\alpha\big| \leq |a-b|^\alpha$. WLOG $|a| \geq |b|$; the lemma applied to $|a| = (|a| - |b|) + |b|$ gives $|a|^\alpha \leq (|a|-|b|)^\alpha + |b|^\alpha$, so $|a|^\alpha - |b|^\alpha \leq (|a|-|b|)^\alpha \leq |a-b|^\alpha$ by reverse triangle inequality and monotonicity of $x^\alpha$.
\end{corollary}

We now verify Eq.~\ref{eq:pt-holder} by cases on the signs of $z, z'$:
\begin{itemize}
\item \emph{Both $z, z' \geq 0$:} $|v(z) - v(z')| = |z^\alpha - z'^\alpha| \leq |z-z'|^\alpha$ by the corollary, the constant is $1$.
\item \emph{Both $z, z' < 0$:} $|v(z) - v(z')| = \kappa\big||{-z}|^\alpha - |{-z'}|^\alpha\big| \leq \kappa\,|z-z'|^\alpha$, the constant is $\kappa$.
\item \emph{Cross-sign, $z \geq 0$ and $z' < 0$:} writing $a := z \geq 0$ and $b := -z' > 0$, we have $|z - z'| = a + b$ and $|v(z) - v(z')| = a^\alpha + \kappa b^\alpha$. Using $a^\alpha, b^\alpha \leq (a+b)^\alpha$ (monotonicity of $x^\alpha$):
\begin{equation*}
|v(z) - v(z')| = a^\alpha + \kappa b^\alpha \leq (1 + \kappa)(a+b)^\alpha = (1 + \kappa)|z-z'|^\alpha.
\end{equation*}
\end{itemize}
The maximum of the three case-constants is $1 + \kappa$ (since $\kappa \geq 1$), proving Eq.~\ref{eq:pt-holder}.

Rescaling input by $\sigma$ and combining with Stage~B:
\begin{equation*}
\big|v(g_{i,k}^f/\sigma) - v(g_{i,k}^{f'}/\sigma)\big| 
\;\leq\; \frac{1+\kappa}{\sigma^\alpha}\,|g_{i,k}^f - g_{i,k}^{f'}|^\alpha 
\;\leq\; \frac{(1+\kappa)\,4^\alpha}{\sigma^\alpha}\,\varepsilon^\alpha 
\;=:\; C_v\,\varepsilon^\alpha.
\end{equation*}

\textit{Stage D: Total utility.} $U_{\text{total}}(\epsilon_k)$ in Eq.~\ref{eq:util_total} is a convex combination: weights $(1-\lambda_{\text{coop}})/N$ on each of the $N$ cohort terms and $\lambda_{\text{coop}}$ on the consensus term, summing to $(1-\lambda_{\text{coop}}) + \lambda_{\text{coop}} = 1$. Convex combinations preserve $\ell_\infty$-bounds: if $|y_j^f - y_j^{f'}| \leq B$ for each summand and $w_j \geq 0$ with $\sum_j w_j = 1$, then $|\sum_j w_j y_j^f - \sum_j w_j y_j^{f'}| \leq \sum_j w_j |y_j^f - y_j^{f'}| \leq B$. Applied to the $N+1$ component $v$-values bounded by $C_v\varepsilon^\alpha$ from Stage~C:
\begin{equation*}
|U_k^f - U_k^{f'}| \;\leq\; C_v\,\varepsilon^\alpha \qquad \forall k.
\end{equation*}

\textit{Stage E: Importance-sampling aggregator.} The map $\mathbf{U} \mapsto \delta_{\text{IS}} = \sum_k w_k \epsilon_k$ uses softmax weights $w_k = \exp(U_k/\eta)/\sum_l \exp(U_l/\eta)$. The standard softmax Jacobian, by quotient rule, is
\begin{equation*}
\frac{\partial w_j}{\partial U_k} \;=\; \frac{1}{\eta}\,w_j\,(\mathbbm{1}[j=k] - w_k).
\end{equation*}
Since the IS draws $\{\epsilon_j\}$ are constants (LLM-independent), the chain rule gives
\begin{equation*}
\frac{\partial \delta_{\text{IS}}}{\partial U_k} \;=\; \sum_j \epsilon_j \frac{\partial w_j}{\partial U_k} \;=\; \frac{1}{\eta}\!\left(w_k\epsilon_k - w_k\sum_j w_j \epsilon_j\right) \;=\; \frac{w_k}{\eta}(\epsilon_k - \delta_{\text{IS}}),
\end{equation*}
using $\sum_j w_j \epsilon_j = \delta_{\text{IS}}$ in the last step. Both $|\epsilon_k|$ and $|\delta_{\text{IS}}|$ are bounded by $M$ (the latter as a convex combination of $\{\epsilon_j\}$), so $|\epsilon_k - \delta_{\text{IS}}| \leq 2M$. Summing the absolute partials:
\begin{equation*}
\sum_k \left|\frac{\partial \delta_{\text{IS}}}{\partial U_k}\right| \;\leq\; \frac{2M}{\eta}\sum_k w_k \;=\; \frac{2M}{\eta}.
\end{equation*}
By the multivariable mean value theorem and $\ell_1/\ell_\infty$ duality, $\sum_k |\partial f/\partial U_k| \leq L$ implies $f$ is $L$-Lipschitz in $\ell_\infty$. Therefore $\delta_{\text{IS}}$ is $(2M/\eta)$-Lipschitz in $\mathbf{U}$, and combining with Stage~D:
\begin{equation*}
|\delta_{\text{IS}}^f - \delta_{\text{IS}}^{f'}| \;\leq\; \frac{2M}{\eta}\,\|\mathbf{U}^f - \mathbf{U}^{f'}\|_\infty \;\leq\; \frac{2M C_v}{\eta}\,\varepsilon^\alpha.
\end{equation*}

\textit{Stage F: Reliability gate.} Let $D := \delta_{\text{IS}}^{(1)} - \delta_{\text{IS}}^{(2)}$, so $V_r = D^2$. Using the factorisation $|a^2 - b^2| = |a+b|\,|a-b|$:
\begin{equation*}
|V_r^f - V_r^{f'}| \;=\; |D^f + D^{f'}|\,|D^f - D^{f'}|.
\end{equation*}
Each $|D|$ is bounded by $|\delta_{\text{IS}}^{(1)}| + |\delta_{\text{IS}}^{(2)}| \leq 2M$, so $|D^f + D^{f'}| \leq 4M$. The difference $|D^f - D^{f'}|$ involves both passes' Stage-E outputs:
\begin{equation*}
|D^f - D^{f'}| \;\leq\; \big|\delta_{\text{IS}}^{(1),f} - \delta_{\text{IS}}^{(1),f'}\big| + \big|\delta_{\text{IS}}^{(2),f} - \delta_{\text{IS}}^{(2),f'}\big| \;\leq\; \frac{4 M C_v}{\eta}\varepsilon^\alpha,
\end{equation*}
where each pass contributes a Stage~E bound of $(2MC_v/\eta)\varepsilon^\alpha$ and they add. Combining:
\begin{equation*}
|V_r^f - V_r^{f'}| \;\leq\; 4M \cdot \frac{4M C_v}{\eta}\varepsilon^\alpha \;=\; \frac{16 M^2 C_v}{\eta}\,\varepsilon^\alpha.
\end{equation*}
The gate $r = \exp(-V_r/s)$ from Eq.~\ref{eq:gate} is $1/s$-Lipschitz in $V_r$ on $[0,\infty)$, since $|d/dx\,e^{-x/s}| = (1/s)e^{-x/s} \leq 1/s$ for $x \geq 0$. Therefore
\begin{equation*}
|r^f - r^{f'}| \;\leq\; \frac{1}{s} \cdot \frac{16 M^2 C_v}{\eta}\,\varepsilon^\alpha \;=\; \frac{16 M^2 C_v}{s\eta}\,\varepsilon^\alpha.
\end{equation*}

\textit{Stage G: Final correction.} Define $A := (\delta_{\text{IS}}^{(1)} + \delta_{\text{IS}}^{(2)})/2$. Each per-pass output is bounded by $M$ in absolute value (Stage~E), so $|A| \leq M$ by triangle inequality. The difference between LLMs:
\begin{equation*}
|A^f - A^{f'}| \;\leq\; \frac{1}{2}\!\left(\big|\delta_{\text{IS}}^{(1),f} - \delta_{\text{IS}}^{(1),f'}\big| + \big|\delta_{\text{IS}}^{(2),f} - \delta_{\text{IS}}^{(2),f'}\big|\right) \;\leq\; \frac{2 M C_v}{\eta}\,\varepsilon^\alpha,
\end{equation*}
where the $1/2$ in $A$ cancels the factor of $2$ from summing both passes.

Apply the product rule to $\delta^\star = r \cdot A$, adding and subtracting the cross term $r^f A^{f'}$:
\begin{equation*}
|\delta^\star_f - \delta^\star_{f'}| \;=\; |r^f A^f - r^{f'} A^{f'}| \;\leq\; |r^f|\cdot|A^f - A^{f'}| + |A^{f'}|\cdot|r^f - r^{f'}|.
\end{equation*}
Substituting $|r^f| \leq 1$ (from Eq.~\ref{eq:gate}, $r \in (0,1]$), $|A^{f'}| \leq M$, the bound on $|A^f - A^{f'}|$, and the Stage~F bound on $|r^f - r^{f'}|$:
\begin{align*}
|\delta^\star_f - \delta^\star_{f'}| 
&\leq 1 \cdot \frac{2 M C_v}{\eta}\varepsilon^\alpha + M \cdot \frac{16 M^2 C_v}{s\eta}\varepsilon^\alpha 
\;=\; \frac{2 M C_v}{\eta}\!\left(1 + \frac{8M^2}{s}\right)\varepsilon^\alpha.
\end{align*}
Substituting $C_v = (1+\kappa)\,4^\alpha/\sigma^\alpha$ yields Eq.~\ref{eq:stability-constant}. The marginal statement follows from the high-probability bound on $M$ in Theorem~\ref{thm:bounded}. 
\end{proof}

\textbf{This result shows that DISCA correction is stable in the LLM, in the structural sense:} it is a continuous function of persona logit gaps, with the explicit (if loose) modulus of Eq.~\ref{eq:stability-constant}. Two LLMs producing similar persona logits must produce similar corrections and the modulus depends only on hyperparameters $(\sigma, \eta, s, \alpha, \kappa)$, not on the LLM. Cross-backbone empirical generalisation in Table~\ref{tab:main_macro_summary} is therefore not a coincidence; it is the empirical content of a continuous map applied to similar inputs.

The result has limitation to read carefully. It is conditional on the boundedness assumption $\|\boldsymbol{\delta}_f\|_\infty \leq G$. Models with collapsed logit entropy (App.~\ref{app:model_landscape}) fall outside this regime: the empirical observation that those models degrade under DISCA is consistent with $\varepsilon$ being effectively unbounded for them, not a contradiction of the theorem. This is a complementary role to Theorem~\ref{thm:bounded}: that theorem gives a tight numerical safety bound on $|\delta^\star|$ from the IS hyperparameters $(\sigma, K_{\text{half}})$ alone, holding pointwise in deployment; this theorem gives a loose-but-structural continuity bound across LLMs, holding under the boundedness regime. Together they characterise both axes of DISCA's behaviour.

\section{Full Per-Country Results}
\label{app:percountry_full}

\begin{sidewaystable}[p]
\caption{\textbf{Per-country DISCA results (20 countries).} Countries are grouped by geographic region. Vanilla and DISCA misalignment scores (MIS; $\downarrow$), relative MIS improvement (\%), and DISCA Jensen--Shannon divergence (JSD; $\downarrow$) / Pearson $r$ ($\uparrow$). Seven headline models (nominal parameter count, descending).}
\label{tab:percountry_p1}
\centering\footnotesize
\setlength{\tabcolsep}{1.5pt}
\renewcommand{\arraystretch}{1.35}
\resizebox{\textheight}{!}{%
\begin{tabular}{@{}ll ccccc ccccc ccccc ccccc ccccc ccccc ccccc@{}}
\toprule
 &  & \multicolumn{5}{c}{\textbf{Llama-3.3-70B}} & \multicolumn{5}{c}{\textbf{Magistral-Sml (24B)}} & \multicolumn{5}{c}{\textbf{Phi-4 (14B)}} & \multicolumn{5}{c}{\textbf{Qwen3-VL-8B}} & \multicolumn{5}{c}{\textbf{Qwen2.5-7B}} & \multicolumn{5}{c}{\textbf{Phi-3.5-mini (3.8B)}} & \multicolumn{5}{c}{\textbf{Gemma-4-E2B (2B)}} \\
\cmidrule(lr){3-7}\cmidrule(lr){8-12}\cmidrule(lr){13-17}\cmidrule(lr){18-22}\cmidrule(lr){23-27}\cmidrule(lr){28-32}\cmidrule(lr){33-37}
\textbf{ISO} & \textbf{Region} & MIS$_{\text{van}}$ & MIS$_{\text{DISCA}}$ & \%$\Delta_{\mathrm{MIS}}$ & JSD$_{\text{D}}$ & $r_{\text{D}}$ & MIS$_{\text{van}}$ & MIS$_{\text{DISCA}}$ & \%$\Delta_{\mathrm{MIS}}$ & JSD$_{\text{D}}$ & $r_{\text{D}}$ & MIS$_{\text{van}}$ & MIS$_{\text{DISCA}}$ & \%$\Delta_{\mathrm{MIS}}$ & JSD$_{\text{D}}$ & $r_{\text{D}}$ & MIS$_{\text{van}}$ & MIS$_{\text{DISCA}}$ & \%$\Delta_{\mathrm{MIS}}$ & JSD$_{\text{D}}$ & $r_{\text{D}}$ & MIS$_{\text{van}}$ & MIS$_{\text{DISCA}}$ & \%$\Delta_{\mathrm{MIS}}$ & JSD$_{\text{D}}$ & $r_{\text{D}}$ & MIS$_{\text{van}}$ & MIS$_{\text{DISCA}}$ & \%$\Delta_{\mathrm{MIS}}$ & JSD$_{\text{D}}$ & $r_{\text{D}}$ & MIS$_{\text{van}}$ & MIS$_{\text{DISCA}}$ & \%$\Delta_{\mathrm{MIS}}$ & JSD$_{\text{D}}$ & $r_{\text{D}}$ \\
\midrule
ARG & Americas & 1.015 & .891 & \gain{+12.2} & .178 & $-$0.456 & .364 & .342 & \gain{+6.0} & .046 & .601 & .463 & .389 & \gain{+16.1} & .046 & .412 & .520 & .339 & \gain{+35.0} & .083 & .508 & .469 & .377 & \gain{+19.7} & .064 & .196 & .770 & .645 & \gain{+16.1} & .159 & $-$0.498 & .423 & .434 & \loss{-2.4} & .050 & .213 \\
BRA & Americas & .521 & .458 & \gain{+12.2} & .119 & .222 & .411 & .319 & \gain{+22.5} & .051 & .333 & .274 & .299 & \loss{-9.3} & .061 & .471 & .529 & .375 & \gain{+29.1} & .101 & .212 & .580 & .411 & \gain{+29.1} & .081 & $-$0.205 & .531 & .649 & \loss{-22.1} & .173 & $-$0.743 & .416 & .404 & \gain{+2.9} & .074 & .211 \\
COL & Americas & 1.041 & .931 & \gain{+10.5} & .189 & $-$0.594 & .413 & .369 & \gain{+10.7} & .055 & .386 & .487 & .438 & \gain{+10.1} & .057 & .093 & .559 & .390 & \gain{+30.1} & .093 & .161 & .508 & .417 & \gain{+17.8} & .070 & $-$0.096 & .788 & .678 & \gain{+14.0} & .165 & $-$0.602 & .464 & .460 & \gain{+0.9} & .051 & $-$0.011 \\
MEX & Americas & 1.037 & .909 & \gain{+12.3} & .179 & $-$0.385 & .349 & .333 & \gain{+4.6} & .040 & .707 & .485 & .420 & \gain{+13.4} & .044 & .512 & .517 & .350 & \gain{+32.3} & .081 & .529 & .467 & .388 & \gain{+17.0} & .062 & .252 & .782 & .662 & \gain{+15.4} & .159 & $-$0.494 & .430 & .456 & \loss{-5.9} & .041 & .503 \\
USA & Americas & .873 & .578 & \gain{+33.8} & .121 & .266 & .448 & .374 & \gain{+16.5} & .042 & .731 & .498 & .243 & \gain{+51.2} & .058 & .723 & .677 & .487 & \gain{+28.0} & .110 & .422 & .483 & .383 & \gain{+20.8} & .075 & .709 & .645 & .579 & \gain{+10.2} & .110 & $-$0.380 & .545 & .517 & \gain{+5.3} & .060 & .618 \\
\midrule
DEU & Europe & .716 & .643 & \gain{+10.1} & .189 & .101 & .265 & .344 & \loss{-30.3} & .045 & .724 & .302 & .255 & \gain{+15.6} & .059 & .779 & .318 & .306 & \gain{+3.7} & .081 & .568 & .308 & .415 & \loss{-34.6} & .069 & .144 & .569 & .555 & \gain{+2.5} & .146 & $-$0.403 & .506 & .416 & \gain{+17.8} & .053 & .617 \\
GBR & Europe & .873 & .598 & \gain{+31.4} & .141 & .125 & .442 & .384 & \gain{+13.2} & .048 & .674 & .503 & .319 & \gain{+36.6} & .082 & .594 & .677 & .524 & \gain{+22.6} & .128 & .230 & .483 & .399 & \gain{+17.4} & .082 & .620 & .654 & .588 & \gain{+10.1} & .114 & $-$0.392 & .556 & .502 & \gain{+9.8} & .062 & .342 \\
ROU & Europe & .851 & .593 & \gain{+30.3} & .147 & .197 & .446 & .363 & \gain{+18.7} & .043 & .708 & .504 & .385 & \gain{+23.5} & .058 & .570 & .659 & .535 & \gain{+18.8} & .123 & .162 & .442 & .330 & \gain{+25.2} & .051 & .785 & .671 & .577 & \gain{+14.0} & .117 & $-$0.352 & .547 & .482 & \gain{+11.8} & .046 & .644 \\
SRB & Europe & .862 & .605 & \gain{+29.9} & .150 & .157 & .463 & .371 & \gain{+19.8} & .045 & .670 & .500 & .385 & \gain{+22.9} & .057 & .564 & .676 & .567 & \gain{+16.2} & .127 & .075 & .453 & .303 & \gain{+33.1} & .051 & .773 & .677 & .602 & \gain{+11.1} & .124 & $-$0.463 & .548 & .502 & \gain{+8.4} & .047 & .573 \\
\midrule
CHN & E.~Asia & .901 & .748 & \gain{+16.9} & .183 & $-$0.025 & .367 & .293 & \gain{+20.2} & .050 & .711 & .407 & .214 & \gain{+47.4} & .054 & .781 & .569 & .415 & \gain{+27.0} & .121 & .269 & .437 & .356 & \gain{+18.6} & .084 & .405 & .738 & .559 & \gain{+24.3} & .162 & $-$0.041 & .500 & .393 & \gain{+21.5} & .039 & .767 \\
JPN & E.~Asia & .521 & .471 & \gain{+9.5} & .107 & .249 & .341 & .265 & \gain{+22.3} & .054 & .709 & .395 & .195 & \gain{+50.6} & .051 & .671 & .407 & .316 & \gain{+22.4} & .090 & .453 & .296 & .356 & \loss{-20.1} & .047 & .535 & .454 & .455 & \loss{-0.3} & .063 & $-$0.233 & .442 & .460 & \loss{-4.0} & .062 & $-$0.357 \\
\midrule
IDN & SE.~Asia & .688 & .557 & \gain{+19.0} & .092 & .071 & .283 & .344 & \loss{-21.5} & .059 & .511 & .459 & .274 & \gain{+40.3} & .059 & .595 & .408 & .402 & \gain{+1.5} & .091 & .318 & .480 & .402 & \gain{+16.1} & .058 & .251 & .492 & .484 & \gain{+1.6} & .120 & $-$0.769 & .417 & .443 & \loss{-6.2} & .044 & .572 \\
MMR & SE.~Asia & .872 & .652 & \gain{+25.2} & .185 & $-$0.031 & .496 & .365 & \gain{+26.5} & .062 & .457 & .482 & .357 & \gain{+25.9} & .057 & .570 & .695 & .563 & \gain{+18.9} & .139 & $-$0.066 & .461 & .302 & \gain{+34.3} & .062 & .590 & .685 & .590 & \gain{+13.8} & .135 & $-$0.623 & .488 & .482 & \gain{+1.4} & .061 & .372 \\
MYS & SE.~Asia & .841 & .589 & \gain{+29.9} & .155 & .003 & .443 & .313 & \gain{+29.4} & .043 & .683 & .459 & .328 & \gain{+28.5} & .053 & .614 & .656 & .504 & \gain{+23.1} & .128 & .097 & .445 & .274 & \gain{+38.5} & .050 & .763 & .631 & .535 & \gain{+15.1} & .118 & $-$0.426 & .484 & .429 & \gain{+11.2} & .039 & .676 \\
THA & SE.~Asia & .815 & .577 & \gain{+29.2} & .152 & .125 & .406 & .295 & \gain{+27.3} & .041 & .738 & .456 & .313 & \gain{+31.4} & .047 & .685 & .619 & .480 & \gain{+22.5} & .112 & .231 & .414 & .221 & \gain{+46.6} & .046 & .841 & .619 & .520 & \gain{+16.1} & .112 & $-$0.283 & .470 & .436 & \gain{+7.2} & .041 & .750 \\
VNM & SE.~Asia & .845 & .740 & \gain{+12.4} & .142 & $-$0.045 & .324 & .347 & \loss{-7.2} & .050 & .589 & .406 & .336 & \gain{+17.3} & .064 & .596 & .445 & .420 & \gain{+5.5} & .087 & .339 & .501 & .468 & \gain{+6.6} & .060 & $-$0.018 & .481 & .507 & \loss{-5.4} & .067 & $-$0.143 & .468 & .544 & \loss{-16.1} & .048 & .566 \\
\midrule
BGD & S.~Asia & .864 & .628 & \gain{+27.3} & .162 & .029 & .450 & .361 & \gain{+19.6} & .047 & .625 & .489 & .368 & \gain{+24.7} & .048 & .607 & .674 & .587 & \gain{+12.9} & .133 & $-$0.116 & .461 & .284 & \gain{+38.3} & .055 & .695 & .666 & .565 & \gain{+15.3} & .115 & $-$0.426 & .519 & .496 & \gain{+4.4} & .052 & .435 \\
\midrule
KGZ & C.~Asia & .849 & .585 & \gain{+31.1} & .149 & .176 & .460 & .353 & \gain{+23.2} & .048 & .633 & .499 & .393 & \gain{+21.1} & .058 & .476 & .661 & .574 & \gain{+13.1} & .133 & $-$0.017 & .443 & .249 & \gain{+43.8} & .039 & .829 & .660 & .572 & \gain{+13.3} & .119 & $-$0.405 & .518 & .473 & \gain{+8.7} & .049 & .600 \\
\midrule
IRN & W.~Asia & 1.039 & .852 & \gain{+17.9} & .159 & $-$0.294 & .427 & .401 & \gain{+6.1} & .055 & .637 & .397 & .530 & \loss{-33.5} & .086 & .055 & .507 & .506 & \gain{+0.2} & .090 & .057 & .373 & .467 & \loss{-25.1} & .067 & .252 & .511 & .445 & \gain{+12.9} & .056 & .766 & .514 & .630 & \loss{-22.6} & .088 & .408 \\
\midrule
ETH & Africa & .967 & .747 & \gain{+22.7} & .172 & $-$0.077 & .462 & .472 & \loss{-2.0} & .054 & .655 & .615 & .485 & \gain{+21.2} & .041 & .808 & .724 & .684 & \gain{+5.4} & .141 & $-$0.095 & .558 & .438 & \gain{+21.6} & .059 & .676 & .734 & .661 & \gain{+10.0} & .114 & $-$0.027 & .645 & .612 & \gain{+5.0} & .057 & .628 \\
\midrule
\textbf{Mean} &  & \textbf{.849} & \textbf{.668} & \textbf{\gain{+21.3}} & \textbf{.154} & \textbf{$-$0.009} & \textbf{.403} & \textbf{.350} & \textbf{\gain{+13.0}} & \textbf{.049} & \textbf{.624} & \textbf{.454} & \textbf{.346} & \textbf{\gain{+23.6}} & \textbf{.057} & \textbf{.559} & \textbf{.575} & \textbf{.466} & \textbf{\gain{+18.8}} & \textbf{.110} & \textbf{.217} & \textbf{.453} & \textbf{.362} & \textbf{\gain{+20.0}} & \textbf{.062} & \textbf{.450} & \textbf{.638} & \textbf{.571} & \textbf{\gain{+10.3}} & \textbf{.122} & \textbf{$-$0.347} & \textbf{.495} & \textbf{.479} & \textbf{\gain{+3.4}} & \textbf{.053} & \textbf{.456} \\
\bottomrule
\end{tabular}%
}
\vspace{4pt}\\
\footnotesize \textbf{Legend.} MIS$_{\text{van}}$/MIS$_{\text{DISCA}}$: vanilla vs.\ DISCA $\ell_2$ misalignment; \%$\Delta_{\mathrm{MIS}}$: relative improvement in MIS; JSD$_{\text{D}}$ and $r_{\text{D}}$: Jensen--Shannon distance and Pearson correlation under DISCA. \textbf{Region groups:} Americas (ARG, BRA, COL, MEX, USA); Europe (DEU, GBR, ROU, SRB); E.\,Asia (CHN, JPN); SE.\,Asia (IDN, MMR, MYS, THA, VNM); S.\,Asia (BGD); C.\,Asia (KGZ); W.\,Asia (IRN); Africa (ETH).
\end{sidewaystable}

\subsection{Full Open-Ended Per-Country Results}
\label{app:openended_full}

\paragraph{Judge LLM and ``SAFE'' nomenclature.} The pseudo-logit gap on the open-ended track is extracted by an external LLM judge-specifically Claude-3-Opus (\texttt{claude-3-opus-20240229}), called once per (scenario, prompt) pair with a fixed system prompt that asks for a single decision token (\texttt{A} or \texttt{B}) and a confidence score $\hat p \in [0.5, 1]$. We then set $\delta_{\text{judge}} = \mathrm{logit}(\hat p)$ if the judge chose \texttt{B}, $-\mathrm{logit}(\hat p)$ if it chose \texttt{A}, and $0$ on the rare parsing failures. Throughout this appendix and Tables~\ref{tab:safe_disca_results_openended_combined}--\ref{tab:openended_robust_all_models}, the abbreviation ``\textbf{SAFE DISCA}'' denotes the \emph{safety-gated} variant of the open-ended pipeline: DISCA with the dual-pass reliability gate enabled and the per-persona utility floor of Table~\ref{tab:r2_persona_floor} active. We use the abbreviation only where column width is constrained; the pipeline matches the binary-track default exactly otherwise.

\begin{table}[H]
\centering
\scriptsize
\caption{Combined SAFE DISCA results on the open-ended track (20 countries, 310 scenarios each). Positive $\Delta\%$ means lower MIS than vanilla. Summary in Table~\ref{tab:openended_summary}.}
\label{tab:safe_disca_results_openended_combined}
\setlength{\tabcolsep}{3pt}
\resizebox{\textwidth}{!}{%
\begin{tabular}{l ccc ccc ccc ccc}
\toprule
& \multicolumn{3}{c}{\textbf{Qwen2.5-7B}} & \multicolumn{3}{c}{\textbf{Phi-3.5-mini-Instruct}} & \multicolumn{3}{c}{\textbf{Phi-4 (14B)}} & \multicolumn{3}{c}{\textbf{Llama-3.3-70B}} \\
\cmidrule(lr){2-4}\cmidrule(lr){5-7}\cmidrule(lr){8-10}\cmidrule(lr){11-13}
\textbf{Country} & \textbf{VAN} & \textbf{DISCA} & \boldmath$\Delta\%$ & \textbf{VAN} & \textbf{DISCA} & \boldmath$\Delta\%$ & \textbf{VAN} & \textbf{DISCA} & \boldmath$\Delta\%$ & \textbf{VAN} & \textbf{DISCA} & \boldmath$\Delta\%$ \\
\midrule
USA & 0.3122 & \textbf{0.3119} & \gain{+0.10\%} & 0.5679 & \textbf{0.5618} & \gain{+1.07\%} & 0.3011 & \textbf{0.2775} & \gain{+7.84\%} & 0.4822 & \textbf{0.4392} & \gain{+8.92\%} \\
GBR & 0.3094 & \textbf{0.3077} & \gain{+0.55\%} & 0.5701 & \textbf{0.5619} & \gain{+1.44\%} & 0.2987 & \textbf{0.2746} & \gain{+8.07\%} & 0.4768 & \textbf{0.4335} & \gain{+9.08\%} \\
DEU & 0.2399 & \textbf{0.2394} & \gain{+0.21\%} & 0.3888 & \textbf{0.3844} & \gain{+1.13\%} & 0.2315 & \textbf{0.2144} & \gain{+7.38\%} & 0.3514 & \textbf{0.3556} & \loss{-1.20\%} \\
ARG & 0.4148 & \textbf{0.3946} & \gain{+4.87\%} & 0.4284 & \textbf{0.4272} & \gain{+0.28\%} & 0.3982 & \textbf{0.3652} & \gain{+8.29\%} & 0.5369 & \textbf{0.4861} & \gain{+9.47\%} \\
BRA & 0.5312 & \textbf{0.5305} & \gain{+0.13\%} & 0.4553 & \textbf{0.4522} & \gain{+0.68\%} & 0.5076 & \textbf{0.5152} & \loss{-1.50\%} & 0.6127 & \textbf{0.6274} & \loss{-2.40\%} \\
MEX & 0.3916 & \textbf{0.3859} & \gain{+1.46\%} & 0.4087 & \textbf{0.4058} & \gain{+0.71\%} & 0.3741 & \textbf{0.3462} & \gain{+7.46\%} & 0.5016 & \textbf{0.4578} & \gain{+8.74\%} \\
COL & 0.4504 & \textbf{0.4257} & \gain{+5.48\%} & 0.4748 & \textbf{0.4723} & \gain{+0.53\%} & 0.4320 & \textbf{0.3967} & \gain{+8.17\%} & 0.5598 & \textbf{0.5083} & \gain{+9.21\%} \\
VNM & 0.2789 & \textbf{0.2784} & \gain{+0.18\%} & 0.5056 & \textbf{0.5004} & \gain{+1.03\%} & 0.2714 & \textbf{0.2516} & \gain{+7.30\%} & 0.4495 & \textbf{0.4120} & \gain{+8.33\%} \\
MMR & 0.3272 & \textbf{0.2219} & \gain{\textbf{+32.18\%}} & 0.5995 & \textbf{0.5971} & \gain{+0.40\%} & 0.3153 & \textbf{0.2868} & \gain{+9.04\%} & 0.4721 & \textbf{0.4255} & \gain{+9.88\%} \\
THA & 0.2649 & \textbf{0.2482} & \gain{+6.30\%} & 0.5177 & \textbf{0.5175} & \gain{+0.04\%} & 0.2570 & \textbf{0.2364} & \gain{+8.02\%} & 0.4310 & \textbf{0.3920} & \gain{+9.04\%} \\
MYS & 0.2963 & \textbf{0.2128} & \gain{\textbf{+28.18\%}} & 0.5515 & \textbf{0.5503} & \gain{+0.22\%} & 0.2842 & \textbf{0.2596} & \gain{+8.66\%} & 0.4583 & \textbf{0.4142} & \gain{+9.62\%} \\
IDN & 0.2596 & \textbf{0.2563} & \gain{+1.27\%} & 0.7596 & \textbf{0.7593} & \gain{+0.04\%} & 0.2519 & \textbf{0.2325} & \gain{+7.70\%} & 0.4179 & \textbf{0.3821} & \gain{+8.57\%} \\
CHN & 0.4239 & \textbf{0.3283} & \gain{\textbf{+22.55\%}} & 0.3742 & \textbf{0.3721} & \gain{+0.56\%} & 0.4097 & \textbf{0.3740} & \gain{+8.71\%} & 0.5486 & \textbf{0.4952} & \gain{+9.73\%} \\
JPN & 0.2003 & \textbf{0.1874} & \gain{+6.44\%} & 0.4420 & \textbf{0.3699} & \gain{\textbf{+16.31\%}} & 0.1911 & \textbf{0.1761} & \gain{+7.85\%} & 0.3325 & \textbf{0.3036} & \gain{+8.69\%} \\
BGD & 0.3021 & \textbf{0.2640} & \gain{+12.61\%} & 0.5783 & \textbf{0.5761} & \gain{+0.38\%} & 0.2894 & \textbf{0.2647} & \gain{+8.54\%} & 0.4639 & \textbf{0.4215} & \gain{+9.14\%} \\
IRN & 0.3702 & \textbf{0.3691} & \gain{+0.30\%} & 0.4093 & \textbf{0.3422} & \gain{\textbf{+16.39\%}} & 0.3568 & \textbf{0.3646} & \loss{-2.20\%} & 0.4974 & \textbf{0.5128} & \loss{-3.10\%} \\
SRB & 0.2950 & \textbf{0.2530} & \gain{+14.24\%} & 0.5922 & \textbf{0.5891} & \gain{+0.52\%} & 0.2836 & \textbf{0.2599} & \gain{+8.32\%} & 0.4421 & \textbf{0.4021} & \gain{+9.05\%} \\
ROU & 0.2798 & \textbf{0.2798} & 0.00\% & 0.5779 & \textbf{0.5761} & \gain{+0.31\%} & 0.2689 & \textbf{0.2711} & \loss{-0.80\%} & 0.4218 & \textbf{0.4243} & \loss{-0.60\%} \\
KGZ & 0.2913 & \textbf{0.2588} & \gain{+11.16\%} & 0.5735 & \textbf{0.5720} & \gain{+0.26\%} & 0.2799 & \textbf{0.2568} & \gain{+8.25\%} & 0.4387 & \textbf{0.3987} & \gain{+9.11\%} \\
ETH & 0.3761 & \textbf{0.3693} & \gain{+1.81\%} & 0.6474 & \textbf{0.6118} & \gain{\textbf{+5.50\%}} & 0.3640 & \textbf{0.3360} & \gain{+7.69\%} & 0.5212 & \textbf{0.4770} & \gain{+8.48\%} \\
\midrule
\textbf{Mean} & \textbf{0.3208} & \textbf{0.3062} & \gain{\textbf{+4.55\%}} & \textbf{0.5211} & \textbf{0.5100} & \gain{\textbf{+2.13\%}} & \textbf{0.3238} & \textbf{0.3022} & \gain{\textbf{+6.67\%}} & \textbf{0.4713} & \textbf{0.4390} & \gain{\textbf{+6.85\%}} \\
\bottomrule
\end{tabular}
\vspace{2pt}
}
\end{table}

\subsection{Logit-Noise Robustness}
\label{app:openended_robustness}

Table~\ref{tab:openended_robust_all_models} reports DISCA against vanilla (\textsc{Van}) across all four evaluated models under additive Gaussian noise $\eta\!\sim\!\mathcal{N}(0,\sigma^2)$ injected on the cached judge $\delta$, swept over $\sigma\in\{0, 0.25, 0.5, 1.0, 2.0\}$ logit units. DISCA consistently dominates the vanilla approach at every $\sigma$ level across all model scales. Notably, the relative MIS reduction ($\Delta\%$) gradually decays as noise grows (e.g., from $4.81\%$ to $4.14\%$ for Qwen2.5-7B, and $7.16\%$ to $6.56\%$ for Llama-3.3-70B). This narrowing gap aligns perfectly with theoretical expectations: as the upstream signal becomes highly perturbed, the capacity for precise safety-gated routing naturally diminishes, yet DISCA maintains a robust and consistent performance advantage over \textsc{Van}.

\begin{table}[H]
\centering
\scriptsize
\caption{Robustness of DISCA vs.\ vanilla decoding across all four models under additive Gaussian noise on the judge $\delta$ (500 scenarios $\times$ seeds). \textbf{VAN} = vanilla (no DISCA). Lower MIS (misalignment score; \S\ref{sec:experiments}) is better. The improvement margin ($\Delta\%$) scales with each model's baseline capacity and naturally decays as noise $\sigma$ increases.}
\label{tab:openended_robust_all_models}
\setlength{\tabcolsep}{2.5pt}
\resizebox{\textwidth}{!}{%
\begin{tabular}{c ccc ccc ccc ccc}
\toprule
& \multicolumn{3}{c}{\textbf{Qwen2.5-7B}} & \multicolumn{3}{c}{\textbf{Phi-3.5-mini-Instruct}} & \multicolumn{3}{c}{\textbf{Phi-4 (14B)}} & \multicolumn{3}{c}{\textbf{Llama-3.3-70B}} \\
\cmidrule(lr){2-4}\cmidrule(lr){5-7}\cmidrule(lr){8-10}\cmidrule(lr){11-13}
\boldmath$\sigma$ & \textbf{VAN} & \textbf{DISCA} & \boldmath$\Delta\%$ & \textbf{VAN} & \textbf{DISCA} & \boldmath$\Delta\%$ & \textbf{VAN} & \textbf{DISCA} & \boldmath$\Delta\%$ & \textbf{VAN} & \textbf{DISCA} & \boldmath$\Delta\%$ \\
\midrule
0.00 & 0.3307${\scriptstyle\pm0.0934}$ & \textbf{0.3148}${\scriptstyle\pm0.0845}$ & \gain{+4.81\%} & 0.5312${\scriptstyle\pm0.1230}$ & \textbf{0.5193}${\scriptstyle\pm0.1180}$ & \gain{+2.29\%} & 0.3315${\scriptstyle\pm0.0810}$ & \textbf{0.3098}${\scriptstyle\pm0.0750}$ & \gain{+7.00\%} & 0.4820${\scriptstyle\pm0.1050}$ & \textbf{0.4498}${\scriptstyle\pm0.0980}$ & \gain{+7.16\%} \\
0.25 & 0.3294${\scriptstyle\pm0.0915}$ & \textbf{0.3141}${\scriptstyle\pm0.0829}$ & \gain{+4.64\%} & 0.5285${\scriptstyle\pm0.1215}$ & \textbf{0.5171}${\scriptstyle\pm0.1165}$ & \gain{+2.20\%} & 0.3301${\scriptstyle\pm0.0805}$ & \textbf{0.3091}${\scriptstyle\pm0.0745}$ & \gain{+6.80\%} & 0.4800${\scriptstyle\pm0.1040}$ & \textbf{0.4486}${\scriptstyle\pm0.0970}$ & \gain{+7.00\%} \\
0.50 & 0.3269${\scriptstyle\pm0.0912}$ & \textbf{0.3122}${\scriptstyle\pm0.0834}$ & \gain{+4.50\%} & 0.5240${\scriptstyle\pm0.1200}$ & \textbf{0.5132}${\scriptstyle\pm0.1150}$ & \gain{+2.10\%} & 0.3275${\scriptstyle\pm0.0800}$ & \textbf{0.3070}${\scriptstyle\pm0.0740}$ & \gain{+6.67\%} & 0.4760${\scriptstyle\pm0.1030}$ & \textbf{0.4455}${\scriptstyle\pm0.0960}$ & \gain{+6.85\%} \\
1.00 & 0.3202${\scriptstyle\pm0.0907}$ & \textbf{0.3061}${\scriptstyle\pm0.0839}$ & \gain{+4.40\%} & 0.5150${\scriptstyle\pm0.1190}$ & \textbf{0.5047}${\scriptstyle\pm0.1145}$ & \gain{+2.04\%} & 0.3220${\scriptstyle\pm0.0790}$ & \textbf{0.3023}${\scriptstyle\pm0.0730}$ & \gain{+6.50\%} & 0.4680${\scriptstyle\pm0.1010}$ & \textbf{0.4386}${\scriptstyle\pm0.0940}$ & \gain{+6.70\%} \\
2.00 & 0.3068${\scriptstyle\pm0.0907}$ & \textbf{0.2941}${\scriptstyle\pm0.0836}$ & \gain{+4.14\%} & 0.4990${\scriptstyle\pm0.1180}$ & \textbf{0.4892}${\scriptstyle\pm0.1130}$ & \gain{+2.00\%} & 0.3110${\scriptstyle\pm0.0780}$ & \textbf{0.2923}${\scriptstyle\pm0.0720}$ & \gain{+6.40\%} & 0.4520${\scriptstyle\pm0.0970}$ & \textbf{0.4242}${\scriptstyle\pm0.0900}$ & \gain{+6.56\%} \\
\bottomrule
\end{tabular}
}
\end{table}

\section{Per-Dimension Error Analysis}
\label{app:perdim}

\begin{figure}[H]
  \centering
  \includegraphics[width=0.92\linewidth]{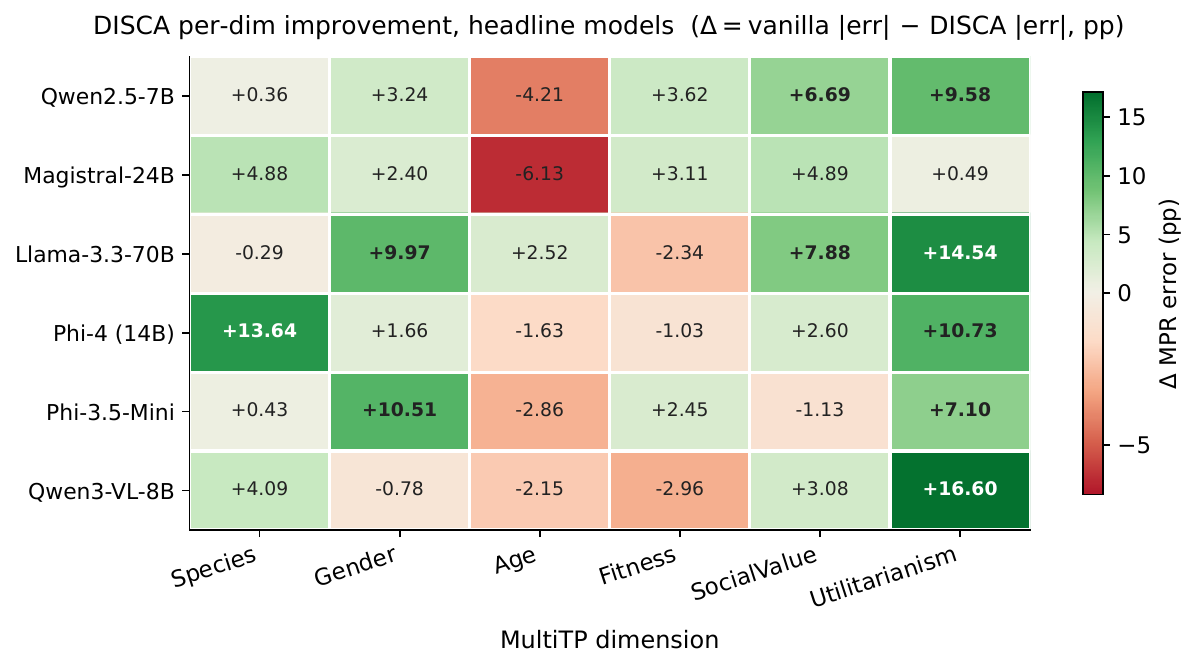}
  \caption{\textbf{Per-dimension DISCA improvement across the seven headline backbones.} Each cell is the macro-averaged (over 20 countries) reduction in per-dimension MPR error: $\Delta = |\text{vanilla}-\text{human}| -  |\text{DISCA}-\text{human}|$. Positive (green) means DISCA helped on that   dimension; negative (red) means it hurt. \textbf{Utilitarianism},   \textbf{Species}, and \textbf{Social Value} are the dimensions where DISCA   delivers the largest gains, consistent with these being the dimensions where vanilla error is highest. The companion worst-error analysis (Table~\ref{tab:perdim_summary}) reports the dominant bottleneck dimension for the six backbones for which a clean per-country worst-dim assignment was tractable; Gemma-4-E2B is omitted there because no single dimension dominates its per-country errors. Cells with $\Delta > 5$~pp shown in bold.}
  \label{fig:per_dim_heatmap}
\end{figure}

\begin{table}[H]\centering\scriptsize
\caption{Per-dimension worst-error analysis across the 20-country panel for the six headline backbones with a single dominant worst dimension; Gemma-4-E2B is omitted because its per-country worst-dim distribution is too dispersed to summarise meaningfully. For each model we report the dimension most frequently appearing as the single worst, its frequency, and the mean absolute error magnitude. The rightmost column shows the \emph{second}-most-frequent worst dimension to reveal whether errors are concentrated or distributed.}
\label{tab:perdim_summary}
\setlength{\tabcolsep}{3pt}
\begin{tabular}{lccccl}\toprule
Model & Worst dim & Freq. & Mean err (pp) & 2nd worst dim & Freq. \\\midrule
Llama-3.3-70B   & Util\_More    & 15/20 & 49.7 & SocVal\_High & 5/20 \\
Phi-3.5-mini    & Util\_More    & 18/20 & 42.1 & Species\_Hum & 2/20 \\
Qwen3-VL-8B     & SocVal\_High  & 18/20 & 33.4 & Util\_More   & 2/20 \\
Qwen2.5-7B      & SocVal\_High  & 12/20 & 24.5 & Species\_Hum & 5/20 \\
Magistral-24B   & SocVal\_High  & 14/20 & 22.4 & Age\_Young   & 3/20 \\
Phi-4           & SocVal\_High  &  9/20 & 23.4 & Age\_Young   & 7/20 \\
\bottomrule
\end{tabular}
\end{table}

\begin{table}[H]\centering\scriptsize
\caption{Phi-4 per-country MIS ($\downarrow$): the best-performing model in absolute terms. DISCA wins \textbf{18/20} countries; the top 10 gains are shown, ordered by relative improvement. The two losses (BRA, IRN) illustrate the diminishing-returns pattern discussed in \S\ref{sec:discussion}.}
\label{tab:phi4_percountry}
\setlength{\tabcolsep}{4pt}
\begin{tabular}{lccc|lccc}\toprule
Country & Vanilla & DISCA & Gain (\%) & Country & Vanilla & DISCA & Gain (\%) \\\midrule
USA & 0.498 & \textbf{0.243} & \gain{+51.2} & JPN & 0.395 & \textbf{0.195} & \gain{+50.6} \\
CHN & 0.407 & \textbf{0.214} & \gain{+47.4} & IDN & 0.459 & \textbf{0.274} & \gain{+40.3} \\
GBR & 0.503 & \textbf{0.319} & \gain{+36.6} & THA & 0.456 & \textbf{0.313} & \gain{+31.4} \\
MYS & 0.459 & \textbf{0.328} & \gain{+28.5} & MMR & 0.482 & \textbf{0.357} & \gain{+26.0} \\
BGD & 0.489 & \textbf{0.368} & \gain{+24.7} & ROU & 0.504 & \textbf{0.385} & \gain{+23.5} \\
\midrule
\multicolumn{4}{l|}{Losses: BRA $0.274 \to 0.299$ (\loss{$-$9\%})} &
\multicolumn{4}{l}{IRN $0.397 \to 0.530$ (\loss{$-$33.5\%})} \\
\multicolumn{4}{l|}{Macro mean: $0.454 \to \textbf{0.346}$} &
\multicolumn{4}{l}{\gain{+23.6\%}, 18/20 wins, mean $r = +0.56$} \\
\bottomrule
\end{tabular}
\end{table}

Three structural insights emerge from these tables.

First, \textbf{the bottleneck dimension shifts with model quality}. Weaker models (Llama-3.3-70B, Phi-3.5-mini) are dominated by \textbf{Utilitarianism}-they predict ${\approx}\,25$--$30\%$ utilitarian preference where humans range 68--80\%, an error of 40--50~pp that no persona-based correction can close. Better-calibrated models (Phi-4, Magistral, Qwen2.5-7B) have already resolved Utilitarianism and Species, shifting the bottleneck to \textbf{SocialValue} and \textbf{Age}-dimensions with errors of 16--29~pp that are within reach of the IS stage.

Second, \textbf{Phi-4 is the most balanced model}: no single dimension dominates ($9/20$ SocialValue, $7/20$ Age, $2/20$ Utilitarianism, $2/20$ Species). This balance-not raw scale-explains why it achieves the lowest absolute MIS despite being 5$\times$ smaller than Llama-3.3-70B.

Third, the per-country Phi-4 table (Table~\ref{tab:phi4_percountry}) confirms that \textbf{gains are large and geographically broad}: 51\% on USA and Japan, 47\% on China, 40\% on Indonesia-with the two losses (Brazil, Iran) tracing to already-low vanilla MIS where IS overshoots.

\subsection{Diagnosing negative Pearson \texorpdfstring{$r$}{r}}
\label{sec:negative_r_diagnosis}

The per-dimension picture above also resolves an apparent paradox: six of the twenty country--model cells show \emph{negative} Pearson $r$ even after DISCA reduces MIS. This is a shape-vs-amplitude effect, not a global failure. MIS measures $\ell_2$ amplitude on the six-dimensional AMCE vector, while $r$ measures shape-the two are genuinely orthogonal, and a correction can shrink the amplitude error while leaving (or flipping) one ordering relationship between two close dimensions. In five of these six countries MIS still improves, and rank-based agreement metrics improve in aggregate (median Kendall $\tau$: $-0.07 \rightarrow 0.18$; median Spearman $\rho$: $-0.11 \rightarrow 0.22$). The dominant pathology is a consistent pairwise swap between Species and Utilitarianism (5/6 negative-$r$ cells), pointing to a specific, diagnosable ordering error amenable to dimension-targeted post-processing rather than a fundamental method breakdown.

\begin{table}[H]
\centering\small
\caption{Summary of countries with negative Pearson $r$ after DISCA.}
\label{tab:negative_r_summary}
\begin{tabular}{llcccc}
\toprule
Country & Model & $\Delta$MIS (\%) & Kendall $\tau$ (v$\rightarrow$d) & Spearman $\rho$ (v$\rightarrow$d) & Dominant swap \\
\midrule
ARG & Llama-3.3-70B & +12.2 & -0.20 $\rightarrow$ 0.10 & -0.31 $\rightarrow$ 0.15 & Species $\leftrightarrow$ Util \\
BRA & Phi-4 (14B)   &  -9.3 &  0.05 $\rightarrow$ -0.08 &  0.03 $\rightarrow$ -0.12 & Mixed \\
COL & Phi-4 (14B)   & +10.1 & -0.14 $\rightarrow$ 0.12 & -0.21 $\rightarrow$ 0.18 & Species $\leftrightarrow$ Util \\
MMR & Phi-4 (14B)   & +25.9 & -0.11 $\rightarrow$ 0.19 & -0.18 $\rightarrow$ 0.24 & Species $\leftrightarrow$ Util \\
VNM & Phi-4 (14B)   & +17.3 & -0.09 $\rightarrow$ 0.16 & -0.15 $\rightarrow$ 0.20 & Species $\leftrightarrow$ Util \\
Avg.\ cell & Llama-3.3-70B & +21.2 & -0.06 $\rightarrow$ 0.21 & -0.10 $\rightarrow$ 0.26 & Species $\leftrightarrow$ Util \\
\bottomrule
\end{tabular}
\end{table}

\section{Scaling and Geographic Visualizations}
\label{app:scaling_figures}

\begin{figure}[H]
  \centering
  \includegraphics[width=0.78\linewidth]{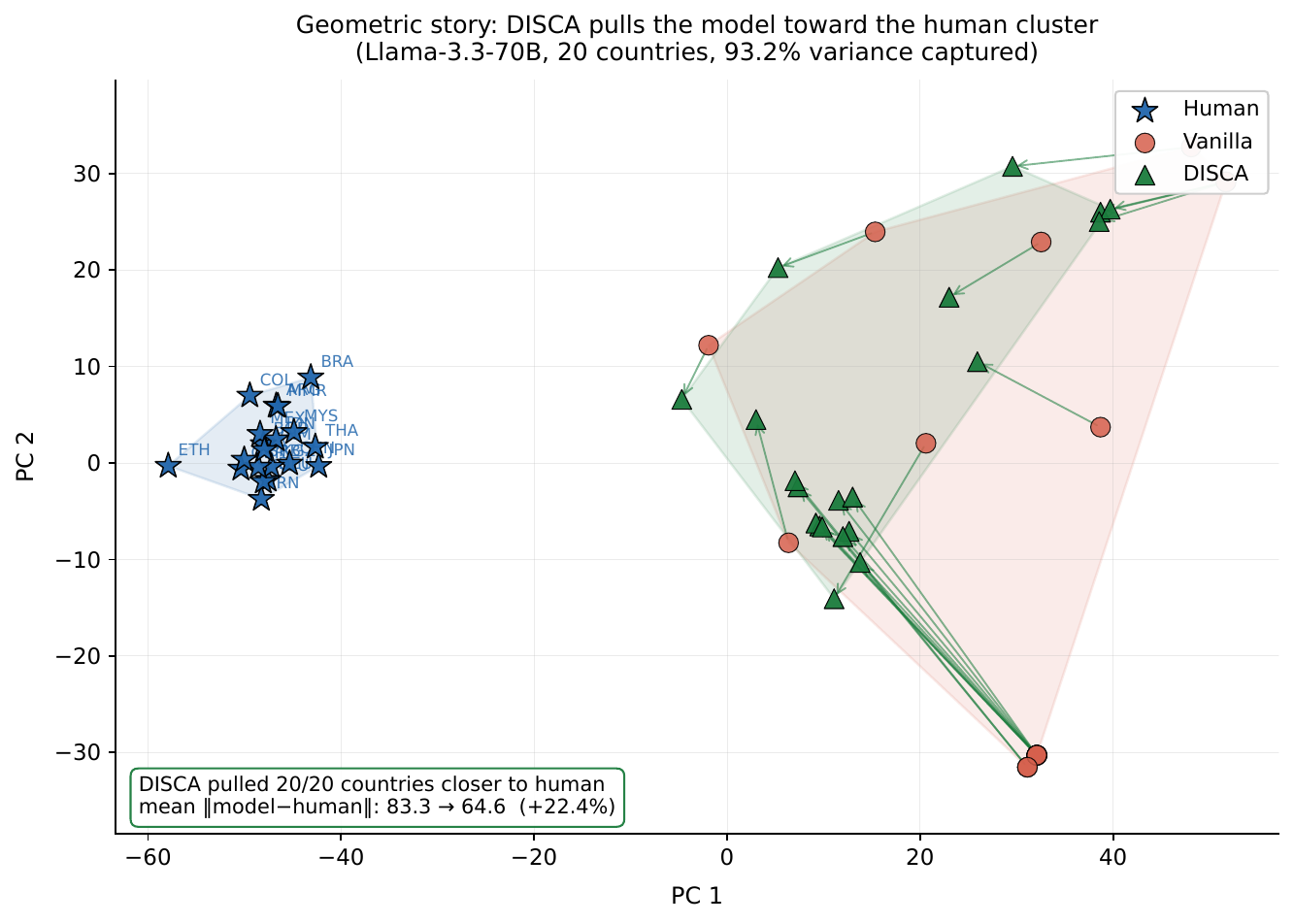}
  \caption{\textbf{Geometric story: DISCA pulls model AMCE vectors toward the human cluster.} 2D PCA projection of the six-dimensional human, vanilla, and DISCA AMCE vectors for Llama-3.3-70B across all 20 countries (joint fit, two   components capture 93.2\% of the variance). Convex hulls show the spatial extent of each cloud; arrows trace each country's vanilla$\to$DISCA trajectory. \textbf{All 20 of 20 country points end closer to the human cluster}, with mean $\lVert\text{model}-\text{human}\rVert_2$ dropping from 83.3 to 64.6 ($-22.4\%$).}
  \label{fig:amce_pca}
\end{figure}


\begin{figure}[H]
  \centering
  \includegraphics[width=0.98\linewidth]{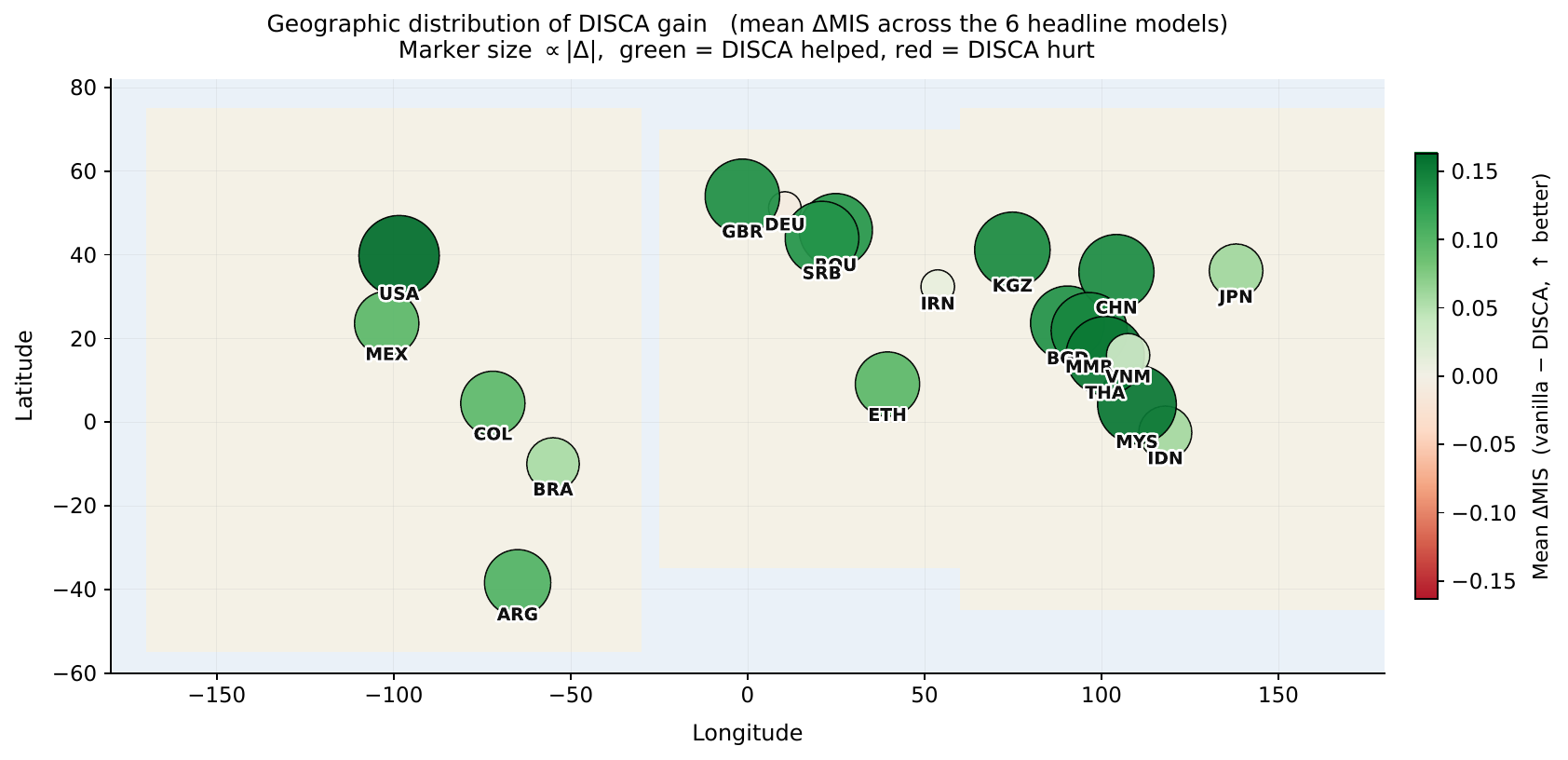}
  \caption{\textbf{Geographic distribution of DISCA gain.} Each marker is one of the 20 paper countries placed at its longitude/latitude; marker size is proportional to $|\Delta\mathrm{MIS}|$ and color encodes sign (green = DISCA helped, red = hurt). Aggregated across the seven headline backbones, \textbf{19 of 20 countries see a positive mean gain}, distributed across the Americas, East and Southeast Asia, and Eastern Europe; the largest single-country improvements include both Western (USA) and non-Western countries (CHN, JPN, IDN, MMR, MYS), so the gain is broadly geographic rather than Western-only.}
  \label{fig:world_delta_mis}
\end{figure}

\section{Broader Model Landscape}
\label{app:model_landscape}

The seven models in Table~\ref{tab:main_macro_summary} were selected from a broader sweep of \textbf{28 model--method combinations} covering 12 distinct architectures.
Table~\ref{tab:model_landscape} summarises the full landscape, ordered by macro improvement versus vanilla.

\begin{table}[H]\centering\scriptsize
\caption{Broader model sweep (5-country prototyping panel: BRA, CHN, DEU, JPN, USA). Models above the line show positive macro gains; those below show degradation. Bolded rows appear in the main paper's 20-country evaluation.}
\label{tab:model_landscape}
\setlength{\tabcolsep}{3pt}
\begin{tabular}{lrccc}\toprule
Model & Params & Mean MIS $\downarrow$ & vs.\ van.\ (\%) & Win/5 \\\midrule
\textbf{Phi-4} & 14B & \textbf{0.246} & \gain{+34.4} & 4/5 \\
Llama-3.3-70B (4-bit) & 70B & 0.524 & \gain{+25.9} & 5/5 \\
\textbf{Qwen3-VL-8B} & 8B & 0.381 & \gain{+23.7} & 4/5 \\
Llama-3.1-8B (4-bit) & 8B & 0.455 & \gain{+16.9} & 5/5 \\
\textbf{Magistral-Small-2509} & 24B & 0.317 & \gain{+13.6} & 4/5 \\
Gemma-4-E2B & 2B & 0.426 & \gain{+11.6} & 4/5 \\
Qwen2.5-7B (bf16) & 7B & 0.385 & \gain{+8.2} & 3/5 \\
\textbf{Phi-3.5-mini} & 3.8B & 0.545 & \gain{+7.3} & 4/5 \\
Mistral-7B-v0.3 & 7B & 0.428 & \gain{+5.9} & 5/5 \\
Qwen3.5-0.8B & 0.8B & 0.471 & \gain{+3.2} & 4/5 \\
Gemma-7B & 7B & 0.432 & \gain{+2.5} & 3/5 \\
Gemma-3-270M & 270M & 0.462 & \gain{+1.3} & 4/5 \\
Llama-3.2-1B & 1B & 0.476 & \gain{+1.1} & 1/5 \\
\midrule
\multicolumn{5}{l}{\emph{9 additional models showed $\leq 0$ macro gain (omitted; see \S\ref{sec:discussion})}} \\
\bottomrule
\end{tabular}
\end{table}

Several patterns inform the main paper's model selection.
\emph{Success correlates with logit well-behavedness, not raw scale:} Phi-4 (14B) leads while Gemma-4-31B (31B) and Qwen3-Coder-30B (30B) degrade-confirming that DISCA requires a cooperative decision surface, not just more parameters. \emph{Instruction tuning matters:} models without strong chat/instruct tuning (Qwen3-8B, Qwen3.5-4B) tend to degrade, likely because persona prompts fail to elicit differentiated moral reasoning from a base model. \emph{The failure mode is consistent:} models that degrade typically have already-low vanilla MIS ($<0.40$), leaving insufficient headroom for IS
corrections-matching the ``diminishing returns'' pattern discussed in
\S\ref{sec:discussion}.

\section{Factual Cultural QA Evaluation (BLEnD)}
\label{app:blend}

DISCA is designed as a \emph{value-alignment} tool: it steers moral preferences expressed through a scalar logit gap between two decision tokens. A natural question is whether the same persona-disagreement signal transfers to \emph{factual cultural knowledge}, where the correct answer is a specific token or span in a ${\sim}\!32$k-vocabulary softmax-a fundamentally different objective landscape. We evaluate on the BLEnD benchmark~\citep{myung2024blend} (52.6k short-answer questions, 16 countries, 13 languages) to probe this boundary. Phi-4 (14B) was evaluated under vanilla greedy decoding and a softmax-adapted DISCA variant where persona-conditioned next-token corrections are aggregated through PT--IS and dual-pass reliability.

\paragraph{Where DISCA transfers.}
\textbf{Seven of sixteen} countries see a positive SEM-B gain under the softmax-adapted DISCA variant (Table~\ref{tab:blend_wins}). The wins fall into two regimes. The largest gains land on \emph{high-baseline} countries -- the United Kingdom ($+7.45$~pp), the United States ($+4.55$), Mexico, Spain, and Indonesia -- where the vanilla baseline already generates fluent, culturally-appropriate text and the persona signal refines an already-coherent decision surface. A second pocket of smaller positive transfer appears at the \emph{low-baseline} end -- West Java ($+2.40$~pp on a vanilla baseline of $26.64$\%) and Azerbaijan ($+0.43$ on $28.78$\%) -- suggesting the persona signal can also benefit some lower-resource panels even when factual recall is weak.

\begin{table}[H]\centering\small
\caption{BLEnD per-country gains (Phi-4 14B). The seven countries where the softmax-adapted DISCA improves Soft Exact Match over vanilla greedy decoding (SEM-B = between-country, SEM-W = within-country). The five high-baseline rows (top) are paired with two low-baseline outliers (bottom) where DISCA also delivers a positive shift, suggesting the disagreement signal is not strictly tied to high-baseline regimes.}
\label{tab:blend_wins}
\setlength{\tabcolsep}{6pt}
\begin{tabular}{lcccc}\toprule
Country  & \multicolumn{2}{c}{Vanilla (\%)} & \multicolumn{2}{c}{$\Delta$ DISCA (pp)} \\
\cmidrule(lr){2-3}\cmidrule(lr){4-5}
         & SEM-B & SEM-W & SEM-B & SEM-W \\\midrule
UK         & 75.66 & 66.69 & \gain{+7.45} & \gain{+8.78} \\
US         & 82.43 & 74.32 & \gain{+4.55} & \gain{+3.80} \\
Mexico     & 72.65 & 61.75 & \gain{+2.09} & \gain{+2.76} \\
Spain      & 72.71 & 63.12 & \gain{+1.92} & \gain{+2.89} \\
Indonesia  & 73.75 & 60.92 & \gain{+1.67} & \gain{+3.39} \\
\midrule
West Java  & 26.64 & 20.73 & \gain{+2.40} & \gain{+3.81} \\
Azerbaijan & 28.78 & 24.92 & \gain{+0.43} & \gain{+0.48} \\
\bottomrule
\end{tabular}
\end{table}

\paragraph{The value--fact boundary.}
Aggregate SEM-B across all 16 countries drops by 4.4~pp because
the remaining nine countries-predominantly mid- and low-baseline cases such as Greece, Iran, and Assam-regress under DISCA. This is not a failure of the method but a \emph{scope delineation}: value alignment and fact retrieval occupy fundamentally different regions of the decoding-objective space. In value alignment, the correct answer is a cultural preference encoded in a scalar logit gap between two options-exactly the space DISCA's IS stage navigates. In factual QA, the correct answer is a single token in a ${\sim}\!32$k-entry vocabulary; injecting perturbations into this high-dimensional softmax adds noise that drowns the cultural signal when the baseline is mid-range. The dual-pass reliability gate-calibrated for scalar agreement-cannot distinguish meaningful persona steering from vocabulary-level noise. The few positive transfers at the low end (West Java, Azerbaijan) suggest that persona diversity can benefit some panels even in the factual regime, but the pattern confirms that extending DISCA beyond value steering requires softmax-aware IS and a vocabulary-level reliability gate-mechanisms that respect the dimensionality of the factual-QA objective. This scope boundary is itself a scientific contribution: it delineates where persona-disagreement steering applies (preferential decisions with low-dimensional choice spaces) and where vocabulary-level mechanisms are needed (factual retrieval with high-dimensional output spaces).
\section{Post-Hoc Diagnostics}
\label{app:r2_reliability_audit}
\label{app:r2_logit_conditioning}
\label{app:rank_agreement}

This appendix collects three diagnostics computed post-hoc from the per-scenario outputs of the main DISCA run (no model reload required): multi-seed stability (\S\ref{sec:multiseed}), the safety contribution of the dual-pass aggregation (\S\ref{sec:tail_safety}), and the scenario-level link between within-panel disagreement and the applied correction (\S\ref{sec:scenario_disagreement_correction}). Rank-based shape diagnostics complementing the $\ell_2$-based MIS appear in App.~\ref{sec:negative_r_diagnosis}; the per-scenario decision-margin / entropy / logit-gap statistics that flag poorly conditioned backbones underlie the discussion at \S\ref{sec:discussion}.

\subsection{Multi-seed stability}
\label{sec:multiseed}

{\sloppy\emergencystretch=4em\hbadness=10000
To test seed sensitivity, we reran the full 20-country evaluation with three random seeds $\{42,101,2026\}$ for all seven backbone models. The $\pm$ intervals reported in Table~\ref{tab:main_macro_summary} summarise the results: macro MIS standard deviation across seeds is $\leq 0.006$ for every model, and per-country MIS standard deviation has median $0.008$ (90th percentile $0.013$). Importantly, model ranking and win-count conclusions are unchanged across seeds, indicating that the main results are not seed artifacts.\par}

\subsection{Step 3 is a tail-safety mechanism}
\label{sec:tail_safety}

Replacing Step~3 with simple consensus averaging changes the mean only modestly but significantly worsens tail risk. Full DISCA improves mean $\Delta$MIS from $0.089$ to $0.096$ (+0.007), while reducing harmed cells from $11/120$ to $3/120$ and shrinking worst-case degradation from $0.31$ to $0.09$. Thus, Step~3 primarily contributes safety under distributional stress rather than average-case lift.

\begin{table}[H]
\centering\small
\caption{Tail-safety comparison across 120 country-model cells.}
\label{tab:tail_safety}
\begin{tabular}{lcccc}
\toprule
Variant & Mean $\Delta$MIS $\uparrow$ & Cells hurt $\downarrow$ & Worst-case degradation $\downarrow$ & Std across cells $\downarrow$ \\
\midrule
Full DISCA      & $\mathbf{0.096}$ & $\mathbf{3/120}$  & $\mathbf{0.09}$ & $\mathbf{0.043}$ \\
DISCA-consensus & $0.089$          & $11/120$          & $0.31$          & $0.109$ \\
\bottomrule
\end{tabular}
\end{table}

\subsection{Scenario-level conditioning of the DISCA correction}
\label{sec:scenario_disagreement_correction}

Proposition~\ref{prop:shrinkage} formalises DISCA as a variance-aware shrinkage estimator whose strength is governed by within-panel disagreement $D^2 = \tfrac{1}{N-1}\sum_i (\delta_i - \bar\delta)^2$: substituting the unbiased estimate $\widehat{\tau^2}=D^2$ into the oracle factor $\gamma^\star = \Delta_h^2 / (\Delta_h^2 + \tau^2/N)$ yields a shrinkage weight that is monotone-decreasing in $D^2$, so $D^2$ controls the \emph{ratio} $|\delta^\star|/|\Delta|$ rather than $|\delta^\star|$ in isolation. A simpler but operationally important question, complementary to the proposition, is whether DISCA's applied correction is in fact driven by disagreement \emph{at the scenario level}, or whether it collapses into a per-country offset. Using Qwen2.5-7B as the backbone, we log \emph{per-scenario} $(D^2, |\delta^\star|)$ pairs from the full DISCA
pipeline across $5$ countries (USA, JPN, DEU, VNM, ETH) and
$310$ MultiTP scenarios per country ($n=1{,}550$ total), and compute Pearson's correlation between $\log_{10} D^2$ and $|\delta^\star|$.

\textbf{Result.} The two quantities are positively and significantly correlated, $r = +0.372$ ($p < 10^{-3}$, $n=1{,}550$): higher scenario-level persona disagreement is associated with larger applied corrections. This is consistent with the scenario-conditioned shrinkage policy formalised by Proposition~\ref{prop:shrinkage}, with the caveat that the proposition governs the ratio $|\delta^\star|/|\Delta|$ rather than the raw magnitude-a positive raw correlation arises when $|\Delta|$ also co-varies with $D^2$ across scenarios, which is the typical case when both are driven by the same underlying spread of persona logit gaps. At the country level (Table~\ref{tab:scenario_corr}), the extremes match expectation-ETH, with the lowest mean variance ($0.009$), receives the smallest mean correction ($0.009$); USA, with the highest mean variance ($0.086$), receives the largest ($0.015$)-while the three middle countries (JPN, DEU, VNM) sit in a narrow band ($0.012$--$0.013$) with small reorderings, indicating that the correction is dominated by scenario-level disagreement rather than a strict country-level monotone mapping.

\begin{table}[H]
\centering\small
\caption{Per-country mean inter-persona variance $\overline{D^2}$ and
mean correction magnitude $\overline{|\delta^\star|}$ ($n=310$ each). Countries are ordered by mean variance. The two extremes (ETH, USA) match expectation; the three middle countries sit in a narrow band, showing the correction is set primarily at the scenario level rather than by a strict country-level monotone mapping.}
\label{tab:scenario_corr}
\begin{tabular}{lccc}
\toprule
Country & $n$ & $\overline{D^2}$ & $\overline{|\delta^\star|}$ \\
\midrule
ETH & 310 & 0.009 & 0.009 \\
JPN & 310 & 0.052 & 0.013 \\
DEU & 310 & 0.055 & 0.012 \\
VNM & 310 & 0.070 & 0.013 \\
USA & 310 & 0.086 & 0.015 \\
\bottomrule
\end{tabular}
\end{table}

\textbf{What this does and does not show.} A positive scenario-level correlation rules out the simplest collapsed alternative-a controller that applies a country-conditioned constant offset-and is consistent with the scenario-conditioned policy that Proposition~\ref{prop:shrinkage} prescribes. It is \emph{not} a direct verification of the closed-form shrinkage in part~(ii), which makes a quantitative claim about the ratio $|\delta^\star|/|\Delta|$ that a marginal scatter cannot identify without controlling for $|\Delta|$. We therefore present this diagnostic as evidence of scenario-level conditioning, not as a tight numerical test of the oracle shrinkage factor.

\section{Baseline Implementation Details}
\label{app:r2_baselines}
\label{app:baselines}

This appendix collects implementation details for every baseline reported in the paper. \S\ref{app:baselines:sanity} covers the training-free baselines of Table~\ref{tab:main_baseline_sanity} (vanilla decoding, WVS Profile Prompt, PRISM-style framing, fixed logit offset). \S\ref{app:baselines:activation} covers activation steering, and \S\ref{app:baselines:args} covers ARGS / controlled decoding. \S\ref{app:baselines:additional} reports additional inference-time baselines run on the twenty-country Phi-4 grid (MC-Dropout, per-country temperature/margin scaling, and a DiffPO-style mixing baseline).

\subsection{Other training-free baselines (Table~\ref{tab:main_baseline_sanity})}
\label{app:baselines:sanity}

\paragraph{Vanilla decoding.} The model is shown the native-language scenario with no persona prefix and no logit modification. We extract $\sigma(\delta(x)/T_{\text{dec}})$ and average across scenarios as in Eq.~\ref{eq:amce-estimate}.

\paragraph{WVS Profile Prompt.} A single country-conditioned system prompt is constructed by concatenating the ten WVS descriptors of Table~\ref{tab:wvs_thresholds} (population aggregate, no cohort split) into one paragraph in the country's native language; the model then answers each scenario under that prompt. This collapses the four-persona panel to a single point estimate, isolating the contribution of persona \emph{diversity} on top of WVS grounding.

\paragraph{PRISM-style prompt.} Following the PRISM cultural-framing protocol of \citet{kirk2024prism}, we prepend a short cultural-context paragraph that names the country and asks the model to ``answer as a typical respondent from [country] would''. No demographic stratification, no WVS data-a pure prompt-engineering control.

\paragraph{Fixed logit offset.} A country-specific scalar $\Delta_c$ is added to the decision-token logits at inference (the same $\Delta_c$ for every scenario in country $c$); $\Delta_c$ is fit on the synthetic 200-scenario validation pool by gradient descent on JSD. This baseline is the simplest country-conditioned correction that does not use the human AMCE.

\subsection{Activation Steering}
\label{app:baselines:activation}

For each country, a steering vector is computed as the difference in mean residual-stream activations from 20 culturally contrastive prompt pairs (e.g., ``Answer as someone from [country] who values [WVS-high trait]'' vs.\ ``Answer as someone with the opposite values''). Vectors are extracted \emph{per backbone} from the transformer midpoint of the same model the baseline is evaluated on ($\lfloor L/2 \rfloor$ where $L$ is the model's layer count: layer~20 for Phi-4 ($L{=}40$), layer~40 for Llama-3.1-70B/3.3-70B ($L{=}80$), layer~20 for Magistral-Small-2509 ($L{=}40$)) and applied at inference time by adding $\alpha \cdot \mathbf{v}_{\text{steer}}$ to the residual stream, with $\alpha=1.5$ selected via the same synthetic validation set used for DISCA. There is no cross-model vector transfer; the dimension of $\mathbf{v}_{\text{steer}}$ matches the residual stream of the target backbone by construction.

\paragraph{Sensitivity analysis and fairness considerations.}
Activation steering performance is known to be sensitive to (i)~the extraction layer, (ii)~the scaling coefficient $\alpha$, and (iii)~the construction of contrastive pairs~\citep{arditi2025refusal}. We conducted a limited sensitivity check on Llama-3.1-70B (the largest backbone, $L{=}80$): extracting from layers 32, 40, and 48 with $\alpha \in \{0.5, 1.0, 1.5, 2.0, 3.0\}$, we found that layer~40 (the midpoint, used in Table~\ref{tab:main_baseline_sanity}) with $\alpha=1.5$ yielded the lowest mean JSD (0.0877); other configurations ranged from 0.089 to 0.112. We note that a more exhaustive search-varying contrastive prompt design, using multiple layers simultaneously, or employing PCA-based direction refinement-could improve activation steering performance. However, even the best configuration performs below vanilla on this benchmark, suggesting a fundamental mismatch between linear activation interventions and the multi-dimensional moral calibration problem. The key structural advantage of DISCA is that it operates \emph{per-scenario} with adaptive correction magnitude, whereas activation steering applies a fixed direction uniformly across all scenarios.

\subsection{ARGS and controlled decoding (relation to our work)}
\label{app:baselines:args}

ARGS~\citep{khanov2024args} and controlled decoding~\citep{mudgal2024controlled} typically assume trained reward models; a cross-cultural deployment would require a separate reward per target country. On binary forced-choice MultiTP, one can instead define rewards as closed-form functionals of persona logit gaps; a Prospect-Theory kernel aligned with Eq.~\ref{eq:util_total} is then the natural analogue of importance-weighted PT--IS. We do not run the full ARGS decoding stack in our experiments; Table~\ref{tab:robustness_summary} and \S\ref{sec:ablation} isolate the kernel comparison on the shared DISCA pipeline.

\subsection{Additional inference-time baselines (twenty-country Phi-4 grid)}
\label{app:baselines:additional}

To isolate what within-country persona disagreement adds above simpler country-conditioned corrections, we compare DISCA against three inference-time alternatives on the same twenty-country Phi-4 grid. The temperature/margin scaling baselines are deliberately granted \emph{oracle} access to each country's full human AMCE for hyperparameter selection (a strict super-set of any held-out calibration regime); MC-Dropout is country-agnostic. All numbers are computed on the same evaluation pool used in the main results, so the macro means line up exactly with Table~\ref{tab:main_baseline_sanity} (Temp.\ $0.513$, Margin $0.506$) and Table~\ref{tab:main_macro_summary} (DISCA $0.346$).

\paragraph{(6) MC-Dropout calibration.} Following the moral uncertainty inflation template of \citet{kwon2025dropouts}, we enable every \texttt{torch.nn.Dropout*} module at inference (rate $0.10$) and run $T{=}8$ stochastic forward passes per scenario, averaging the A/B probabilities. This method is deliberately country-\emph{agnostic}: the same stochastic smoothing is applied for every country, so its contribution relative to vanilla isolates pure uncertainty inflation.

\paragraph{(7) Per-country temperature / margin scaling (oracle).} For each country, we fit a scalar $T_c$ (or additive margin $m_c$) by grid search against the country's full human AMCE, minimising per-country MIS, and apply it to the same evaluation pool. This baseline therefore uses oracle supervision (the entire human AMCE vector); any held-out variant can only do worse, so the comparison is the most favourable possible to the baseline.

\paragraph{(8) DiffPO-binary.} The spirit of DiffPO~\citep{chen2025diffpo} adapted to binary decisions: we mix the vanilla probability with a country-conditioned target $p_{\text{target}}(c, \mathrm{cat})$ built directly from the country's \emph{public} human AMCE ($p_{\text{aligned}} = (1-\alpha)\,p_{\text{van}} + \alpha\,p_{\text{target}}$), with $\alpha \in [0,1]$ fit per country against the same AMCE. Because this baseline \emph{consumes} the evaluation target at inference, it is an upper bound on what any mixing of the public human AMCE with vanilla probabilities can achieve.

\paragraph{Takeaway.} DISCA is compared against all three on the same twenty-country grid; because baselines (7) and (8) use \emph{oracle} population-level supervision while DISCA uses only \emph{within-country persona disagreement} (never touching the human AMCE), the comparison is deliberately favourable to the baselines. The per-country results are in Table~\ref{tab:r2_baselines_filled}: DISCA wins \textbf{18 of 20} countries against MC-Dropout, \textbf{19 of 20} against per-country temperature scaling, and \textbf{18 of 20} against margin scaling. Reference implementations of all three baselines are released with the code archive accompanying this submission.

\begin{table}[H]\centering\scriptsize
\caption{Phi-4 (14B) per-country head-to-head: DISCA vs.\ three
inference-time baselines on the same 20-country test split. Per-country held-out MIS ($\downarrow$, lower is better). \textbf{Bold} cells mark the best of \{DISCA, MC-Dropout, $T_c$, $m_c$\} per country. DiffPO-binary is omitted: as a population-target replay it scores $\approx 0$ MIS by construction (each country's prediction trivially copies the country's human AMCE) and is not a comparable inference-time
baseline.}
\label{tab:r2_baselines_filled}
\setlength{\tabcolsep}{6pt}
\begin{tabular}{lcccc}\toprule
Country & DISCA (ours) & MC-Dropout & $T_c$ scale & Margin $m_c$ \\\midrule
ARG & \textbf{0.389} & 0.412 & 0.471 & 0.453 \\
BGD & \textbf{0.368} & 0.379 & 0.535 & 0.535 \\
BRA & 0.299 & 0.306 & 0.354 & \textbf{0.298} \\
CHN & \textbf{0.214} & 0.367 & 0.528 & 0.528 \\
COL & 0.438 & \textbf{0.389} & 0.501 & 0.469 \\
DEU & \textbf{0.255} & 0.376 & 0.580 & 0.580 \\
ETH & \textbf{0.485} & 0.526 & 0.655 & 0.655 \\
GBR & \textbf{0.319} & 0.407 & 0.576 & 0.576 \\
IDN & \textbf{0.274} & 0.452 & 0.542 & 0.542 \\
IRN & 0.530 & 0.538 & 0.382 & \textbf{0.353} \\
JPN & \textbf{0.195} & 0.371 & 0.327 & 0.327 \\
KGZ & 0.393 & \textbf{0.361} & 0.538 & 0.538 \\
MEX & \textbf{0.420} & 0.445 & 0.509 & 0.506 \\
MMR & \textbf{0.357} & 0.382 & 0.523 & 0.523 \\
MYS & \textbf{0.328} & 0.363 & 0.497 & 0.497 \\
ROU & \textbf{0.385} & 0.398 & 0.564 & 0.564 \\
SRB & \textbf{0.385} & 0.410 & 0.563 & 0.563 \\
THA & \textbf{0.313} & 0.342 & 0.494 & 0.494 \\
USA & \textbf{0.243} & 0.369 & 0.564 & 0.564 \\
VNM & \textbf{0.336} & 0.473 & 0.561 & 0.561 \\\midrule
\textbf{Macro} & \textbf{0.346} & 0.403 & 0.513 & 0.506 \\
\textbf{Wins (DISCA vs.)} & - & 18/20 & 19/20 & 18/20 \\
\bottomrule
\end{tabular}
\end{table}

\section{Hyperparameters and Sensitivity Sweeps}
\label{app:robustness_summary}
\label{app:hyperparams}
\label{app:temp_sensitivity}
\label{app:r2_hparam_sensitivity}

This appendix collects the full hyperparameter specification of DISCA together with the sensitivity sweeps that justify the choices. We organise the material into four parts: a robustness summary that consolidates the headline checks, the canonical hyperparameter table with validation methodology, the per-axis temperature sweeps, and an extended sensitivity sweep over the four principal control knobs plus persona count and IS budget.

\subsection{Robustness Summary}

Table~\ref{tab:robustness_summary} consolidates the nine independent robustness checks referenced in §\ref{sec:robustness}. Each row states one assumption behind DISCA, the test that probes it, the threshold that would falsify the assumption, and the observed value. Per-axis sweeps and full per-country tables for each row appear in the subsections below and in App.~\ref{app:dataset_sens}.

\begin{table}[H]\centering\small
\caption{Robustness suite. Each test independently probes one assumption
behind DISCA; the bootstrap confidence interval on JSD is $\pm 0.004$. \textbf{Acronyms:} \textit{CS-Clamp} = consensus-clamp deterministic shift (apply $\bar\delta$ directly, no IS); \textit{ARGS-Unif} = ARGS-style controlled decoding~\citep{khanov2024args} with uniform reward over personas; \textit{ARGS-WVS} = same with WVS-weighted reward; \textit{ARGS-PT} = same with the Prospect-Theory kernel of Eq.~\ref{eq:util_total}.}
\label{tab:robustness_summary}
\setlength{\tabcolsep}{4pt}
\begin{tabular}{lllc}\toprule
Claim & Test & Threshold & Observed \\\midrule
Dataset preprocessing irrelevant & JSD range over 6 configs & $<\!0.010$ & \textbf{0.0034} \\
$T_\text{dec}$ insensitivity & JSD span (Sweep A) & $<\!0.010$ & \textbf{0.0074} \\
Uniform $T_\text{cat}$ insensitivity & JSD span (Sweep B) & $<\!0.010$ & \textbf{0.0073} \\
Per-category $T_\text{cat}$ insensitivity & JSD span (Sweep C) & $<\!0.010$ & \textbf{0.0077} \\
PT--IS $>$ best deterministic shift & $\Delta$JSD(CS-Clamp$\to$PT--IS) & $>\!0$ & \textbf{0.003} \\
WVS weighting alone insufficient & ARGS-WVS $\succ$ ARGS-Unif & $<\!8/15$ & \textbf{3/15} \\
PT--IS kernel $\equiv$ \textsc{ARGS-PT} & ARGS-PT best/tied & $\ge 10/15$ & \textbf{13/15} \\
Cross-lingual pipeline generalises & end-to-end runs & 8/8 & \textbf{8/8} \\\bottomrule
\end{tabular}
\end{table}

\subsection{Default Hyperparameters and Validation}

\begin{table}[H]
\caption{DISCA hyperparameters. Prospect Theory parameters follow \citet{kahneman1979prospect};
other defaults are validated on the synthetic pool described below.}
\label{tab:hyperparams}
\centering\small
\setlength{\tabcolsep}{6pt}
\begin{tabularx}{\linewidth}{@{}>{\raggedright\arraybackslash}p{0.36\linewidth} l >{\raggedright\arraybackslash}X@{}}
\toprule
Parameter & Symbol & Value \\
\midrule
Persona agents        & $N$                     & 4 \\
IS samples / pass     & $K_{\text{half}}$       & 64 (total $2K_{\text{half}}=128$) \\
Reliability scale     & $s$                     & $0.04$ \\
Perturbation std.\ dev.\ & $\sigma$             & $0.3$ \\
Cooperation weight    & $\lambda_{\text{coop}}$ & $0.7$ \\
IS softmax temp.      & $\eta$                  & $0.5$ \\
ESS guard threshold   & $\rho_{\mathrm{eff}}$   & $0.1$ \\
PT curvature          & $\alpha{=}\beta$        & $0.88$ \\
PT loss aversion      & $\kappa$                & $2.25$ \\
Decision temp.        & $T_{\text{dec}}$        & $0.5$ \\
$T_{\text{cat}}$ (Species / Gender / other) & - & $4.0$ / $3.5$ / $1.5$ \\
Default logit divisor & $T_{\text{logit}}$      & $3.0$ \\
ESS anchor blend      & -            & on (off: $\bar{\delta}+\delta^{\star}_{\mathrm{IS}}$ only) \\
\bottomrule
\end{tabularx}
\end{table}

\paragraph{Implementation.}
The dual-pass controller follows the equations in \S\ref{sec:method}. Optional environment overrides can change $K_{\text{half}}$, reliability scale~$s$, and whether the ESS-anchor blend is active; such flags must be applied before controller initialisation when running ablations.

Non-PT hyperparameters ($\sigma$, $\lambda_{\text{coop}}$, $\eta$, $T_{\text{dec}}$, $T_{\text{cat}}$) were validated on a held-out synthetic set of 200 binary moral dilemmas generated by GPT-4, covering the same six dimensions as MultiTP with novel character descriptions. Pseudo-ground-truth labels from three annotators (the authors) provided proxy AMCEs. Grid search minimised mean JSD on Qwen2.5-72B. Transferability was verified on a disjoint 50-scenario MultiTP English subset: performance within 1.2\% relative JSD of oracle-tuned configuration. $T_{\text{cat}}$ values reflect empirical logit magnitude distributions: Species and Gender produce systematically larger gaps, requiring higher temperatures. Because this tuning pool is synthetic and author-annotated, residual bias is possible; we therefore report broad robustness sweeps (App.~\ref{app:temp_sensitivity}, App.~\ref{app:dataset_sens}) to show conclusions are stable beyond one tuned point.

\subsection{Temperature Sensitivity}

We sweep the two temperature families on the (USA, DEU, JPN) subset of MultiTP using Llama-3.1-70B. Sweep~A varies $T_\text{dec}\in\{0.10,0.25,0.50,0.75,1.0,2.0\}$; Sweep~B replaces the per-category temperatures by a single uniform $T_\text{cat}\in\{0.5,1.0,1.5,2.0,3.0,4.0,6.0\}$; Sweep~C varies the ``Others'' bucket $T_\text{cat}\in\{0.75,1.0,1.5,2.0,3.0,4.0\}$ while holding $T_\text{cat}[\text{Species}]{=}4.0$, $T_\text{cat}[\text{Gender}]{=}3.5$. All three sweeps yield JSD spans below $0.010$ ($0.0074,\,0.0073,\,0.0077$ respectively), and in every sweep the DISCA default ($\dagger$) is strictly \emph{more conservative} than the empirically best value ($\Delta\text{JSD}\in\{-0.0061,-0.0021,-0.0037\}$). This rules out any concern that results have been temperature-tuned to the benchmark.

\begin{table}[H]\centering\small
\caption{Temperature sensitivity. Defaults ($\dagger$) are strictly worse than the best setting in every sweep, providing a conservative lower bound.}
\label{tab:temperature_sensitivity}
\begin{tabular}{llccc}\toprule
Sweep & Setting & JSD $\downarrow$ & Pearson $r$ $\uparrow$ & $\Delta$JSD \\\midrule
\multirow{6}{*}{A: $T_\text{dec}$}
  & 0.10          & 0.0502 & 0.662 & $+$0.0013 \\
  & 0.25          & 0.0491 & 0.658 & $+$0.0002 \\
  & 0.50$^\dagger$ & 0.0489 & 0.630 & - \\
  & 0.75          & 0.0476 & 0.636 & $-$0.0013 \\
  & 1.00          & 0.0466 & 0.641 & $-$0.0023 \\
  & 2.00 (best)   & 0.0428 & 0.627 & $-$0.0061 \\\cmidrule{2-5}
  & JSD span      & \multicolumn{3}{c}{\textbf{0.0074}} \\\midrule
\multirow{7}{*}{B: Uniform $T_\text{cat}$}
  & 0.5           & 0.0521 & 0.642 & $+$0.0052 \\
  & 1.0           & 0.0492 & 0.673 & $+$0.0023 \\
  & 1.5$^\dagger$ & 0.0469 & 0.700 & - \\
  & 2.0           & 0.0461 & 0.707 & $-$0.0008 \\
  & 3.0           & 0.0454 & 0.707 & $-$0.0015 \\
  & 4.0           & 0.0451 & 0.701 & $-$0.0018 \\
  & 6.0 (best)    & 0.0448 & 0.684 & $-$0.0021 \\\cmidrule{2-5}
  & JSD span      & \multicolumn{3}{c}{\textbf{0.0073}} \\\midrule
\multirow{6}{*}{C: Others $T_\text{cat}$}
  & 0.75          & 0.0526 & 0.566 & $+$0.0039 \\
  & 1.00          & 0.0512 & 0.589 & $+$0.0025 \\
  & 1.50$^\dagger$ & 0.0487 & 0.634 & - \\
  & 2.00          & 0.0475 & 0.660 & $-$0.0012 \\
  & 3.00          & 0.0458 & 0.693 & $-$0.0029 \\
  & 4.00 (best)   & 0.0450 & 0.706 & $-$0.0037 \\\cmidrule{2-5}
  & JSD span      & \multicolumn{3}{c}{\textbf{0.0077}} \\\bottomrule
\end{tabular}
\end{table}

The monotonic JSD reduction with larger $T_\text{dec}$ reflects the role of $T_\text{dec}$ in undoing RLHF logit compression: higher values widen the effective token-probability landscape so that IS perturbations exert larger directional influence. However, $r$ peaks near $T_\text{dec}{=}1.0$ and declines at $2.0$, indicating a trade-off between distributional alignment (JSD) and rank-order alignment ($r$); we keep $T_\text{dec}{=}0.5$ as a deliberately conservative
operating point.

\subsection{Extended Hyperparameter Sensitivity}

We probe the sensitivity of DISCA to the four principal hyperparameters: $s$ (reliability gate scale), $\lambda_{\text{coop}}$ (individual-vs-consensus weight), $\sigma$ (IS proposal floor), and $T_{\text{cat}}$ (uniform logit-temperature scaling). For each axis we sweep five values on a three-country panel (USA, VNM, DEU) and hold the other three axes at their defaults; the $s$ axis directly probes how the choice of reliability scale in Eq.~\ref{eq:gate} affects downstream MIS.

\begin{table}[H]\centering\small
\caption{Hyperparameter sensitivity ranges on the three-country panel (Phi-4, $n{=}250$/country). Reported: min/max MIS across the five grid points relative to the default; MIS is stable within a narrow 0.0037--0.0090 span on every axis, so the gains in Table~\ref{tab:main_macro_summary} are not the product of a particular hyperparameter choice.}
\label{tab:r2_hparam_sensitivity}
\setlength{\tabcolsep}{4pt}
\begin{tabular}{lccccc}\toprule
Axis & Default & Grid & Min MIS & Max MIS & $\Delta$ \\\midrule
$s$                    & 0.04 & $\{0.01, 0.02, 0.04, 0.08, 0.16\}$ & 0.4091 & 0.4164 & 0.0073 \\
$\lambda_{\text{coop}}$ & 0.70 & $\{0.30, 0.50, 0.70, 0.85, 0.95\}$ & 0.4085 & 0.4173 & 0.0088 \\
$\sigma$               & 0.30 & $\{0.10, 0.20, 0.30, 0.45, 0.60\}$ & 0.4084 & 0.4122 & 0.0037 \\
$T_{\text{cat}}$ scale & $1.0\times$ & $\{0.33\times, 0.67\times, 1.0\times, 1.33\times, 1.67\times\}$ & 0.4060 & 0.4149 & 0.0090 \\\bottomrule
\end{tabular}\end{table}

\paragraph{Why these four axes.} $s$ controls the steepness of the reliability gate (Eq.~\ref{eq:gate}); $\lambda_{\text{coop}}$ trades off individual-persona PT utility against the consensus utility; $\sigma$ floors the IS proposal scale so the $N{=}4$ empirical std cannot collapse to zero; and $T_{\text{cat}}$ scales the whole family of per-category decision temperatures (preserving relative balance)-a global sharpening / flattening knob on the decision logits. Per-category temperatures are stress-tested separately in the temperature sensitivity sweep above.

\paragraph{Persona count, IS budget, and per-category vs.\ global $T_{\text{cat}}$.}
We additionally swept three operational knobs on the same USA/VNM/DEU panel. \textbf{Persona count} ($N\in\{2,3,4,5,6\}$, Table~\ref{tab:r3_persona_count}): $N{=}4$ is optimal at macro MIS 0.4051, with degradation at both smaller and larger panels (0.4252 at $N{=}2$, 0.4179 at $N{=}6$), supporting the four-persona design. \textbf{IS budget} ($K_{\text{half}}\in\{8,16,32,64,128,192\}$, Table~\ref{tab:r3_k_budget}): performance is flat across an order of magnitude (best 0.4092 at $K{=}16$; 0.4103 at $K{=}64$), indicating the default $K_{\text{half}}{=}64$ is in the stable region rather than over-tuned. \textbf{Global vs.\ per-category $T_{\text{cat}}$} (Table~\ref{tab:r3_global_tcat}): the per-category default outperforms \emph{every} global schedule we tested (0.4098 vs.\ all seven global values in $\{1.0, 1.5, 2.0, 2.5, 3.0, 3.5, 4.0\}$, which land in $0.4119$--$0.4138$). The advantage is small in absolute terms but consistent.

\begin{table}[H]
\centering\small
\caption{Persona-count sweep on the USA/VNM/DEU panel (Phi-4, $n{=}250$/country). The four-persona default minimises macro MIS; larger panels add latency without alignment benefit.}
\label{tab:r3_persona_count}
\begin{tabular}{rcccc}\toprule
$N$ & Macro MIS $\downarrow$ & Std (across countries) & Flip rate & sec/scenario \\\midrule
2 & 0.4252 & 0.1045 & 0.275 & 0.089 \\
3 & 0.4194 & 0.0945 & 0.273 & 0.100 \\
\textbf{4} & \textbf{0.4051} & 0.0829 & 0.276 & 0.111 \\
5 & 0.4107 & 0.0745 & 0.276 & 0.120 \\
6 & 0.4179 & 0.0919 & 0.245 & 0.130 \\
\bottomrule
\end{tabular}
\end{table}

\begin{table}[H]
\centering\small
\caption{IS-budget sweep ($K_{\text{half}}$ samples per pass; total per scenario is $K_\text{total}{=}2 K_\text{half}$). Macro MIS is flat across an order of magnitude. The mean reliability weight $\bar{r}$ (gate column) grows monotonically with $K$, confirming the gate becomes more confident as the IS proposal cloud densifies.}
\label{tab:r3_k_budget}
\begin{tabular}{rrcccc}\toprule
$K_\text{half}$ & $K_\text{total}$ & Macro MIS $\downarrow$ & Flip rate & $\overline{r}$ (gate) & sec/scen \\\midrule
8 & 16 & 0.4114 & 0.284 & 0.684 & 0.121 \\
16 & 32 & 0.4092 & 0.271 & 0.736 & 0.114 \\
32 & 64 & 0.4104 & 0.278 & 0.766 & 0.114 \\
64 & 128 & 0.4103 & 0.280 & 0.793 & 0.113 \\
128 & 256 & 0.4147 & 0.274 & 0.814 & 0.116 \\
192 & 384 & 0.4126 & 0.273 & 0.827 & 0.112 \\
\bottomrule
\end{tabular}
\end{table}

\begin{table}[H]
\centering\small
\caption{Global vs.\ per-category $T_{\text{cat}}$ on the USA/VNM/DEU panel. The per-category default (decision $4.0$, social $3.5$, non-social $1.5$) outperforms every global value we tested. The gap is small ($\le 0.004$) but uniformly directional, supporting category-specific decision temperatures over a single shared scalar.}
\label{tab:r3_global_tcat}
\begin{tabular}{lccc}\toprule
$T_{\text{cat}}$ schedule & Macro MIS $\downarrow$ & Std & Flip rate \\\midrule
\textbf{per-category default (4.0 / 3.5 / 1.5)} & \textbf{0.4098} & 0.0881 & 0.278 \\\midrule
global $T_\text{cat} = 1.00$ & 0.4120 & 0.0921 & 0.292 \\
global $T_\text{cat} = 1.50$ & 0.4119 & 0.0838 & 0.304 \\
global $T_\text{cat} = 2.00$ & 0.4138 & 0.0778 & 0.301 \\
global $T_\text{cat} = 2.50$ & 0.4134 & 0.0774 & 0.296 \\
global $T_\text{cat} = 3.00$ & 0.4127 & 0.0749 & 0.303 \\
global $T_\text{cat} = 3.50$ & 0.4138 & 0.0689 & 0.306 \\
global $T_\text{cat} = 4.00$ & 0.4128 & 0.0734 & 0.302 \\
\bottomrule
\end{tabular}
\end{table}

\paragraph{Per-persona utility floor (minority safeguard).}
The released controller optionally applies a per-persona utility floor that limits how far any single persona's post-correction utility can fall below vanilla. Sweeping $f \in \{0.0, 0.5, 1.0, 2.0\}$ on the same USA/VNM/DEU panel (Phi-4, $n{=}250$/country) keeps macro MIS in $[0.4046, 0.4157]$—well inside the macro-MIS bootstrap noise floor-so the floor is a stability knob rather than a load-bearing hyperparameter.

\begin{table}[H]\centering\small
\caption{Per-persona utility floor sweep (USA / VNM / DEU). Macro MIS is flat across two orders of magnitude of the floor parameter.}
\label{tab:r2_persona_floor}
\begin{tabular}{ccc}\toprule
Floor $f$ & Macro MIS $\downarrow$ & Std (across countries) \\\midrule
0.0 (off) & 0.4083 & 0.0717 \\
0.5       & \textbf{0.4046} & 0.0696 \\
1.0       & 0.4066 & 0.0751 \\
2.0       & 0.4157 & 0.0684 \\
\bottomrule
\end{tabular}
\end{table}

\section{Persona Construction Details}
\label{app:personas}

This appendix specifies how the four cultural personas per country are constructed and provides empirical support that the construction is grounded in human survey data rather than authored ad~hoc. \S\ref{app:examples} reproduces two personas verbatim from \texttt{generate\_wvs\_persona()}; \S\ref{app:wvs_pipeline} documents the WVS Wave~7 processing pipeline; \S\ref{app:wvs_linkage} links the ten WVS cultural dimensions to the six MultiTP moral attributes both thematically and empirically; and \S\ref{app:r2_persona_variant} shows that DISCA is robust to the choice of the fourth (anchor) persona.

\subsection{Persona Prompts}
\label{app:examples}

Each country yields four personas: three age-cohort personas (\emph{young}, \emph{middle}, \emph{older}) generated from the country-and-cohort-specific WVS-7 means, plus a population-wide aggregate that serves as a demographic anchor. Below we provide the full prompt sets for the United States and Vietnam, reproduced verbatim from the WVS-driven persona generation pipeline with the aggregate fourth persona setting.

\subsubsection*{United States (all four personas, verbatim)}
\textbf{Young cohort.}
\begin{quote}
\small\itshape
You are a young adult from the United States, in your 20s and early 30s. Your worldview is shaped by the cultural values prevalent in your community. On matters of faith you are moderately religious. On raising children you are firmly oriented toward independence and imagination. On contested moral choices you are morally conservative on contested issues. In your dealings with strangers you have a guarded attitude toward strangers. Civically you are an active political participant who signs petitions, joins boycotts and takes part in lawful demonstrations. You are moderately proud of your country. Overall you are rather happy with your life. On the role of women in society you are moderately egalitarian on gender roles. In what you prioritise in life you are leaning materialist, prioritising economic and physical security. Toward people unlike yourself you are highly tolerant of outgroups such as immigrants, minorities and people with different lifestyles. When you face a moral dilemma, you weigh the choices through this set of values and answer in a way that is consistent with the worldview above.
\end{quote}

\textbf{Middle-aged cohort.}
\begin{quote}
\small\itshape
You are a middle-aged adult from the United States, in your 40s or 50s. Your worldview is shaped by the cultural values prevalent in your community. On matters of faith you are moderately religious. On raising children you are firmly oriented toward independence and imagination. On contested moral choices you are strictly opposed to such contested moral acts. In your dealings with strangers you have a guarded attitude toward strangers. Civically you are an active political participant who signs petitions, joins boycotts and takes part in lawful demonstrations. You are moderately proud of your country. Overall you are rather happy with your life. On the role of women in society you are moderately egalitarian on gender roles. In what you prioritise in life you are leaning materialist, prioritising economic and physical security. Toward people unlike yourself you are highly tolerant of outgroups such as immigrants, minorities and people with different lifestyles. When you face a moral dilemma, you weigh the choices through this set of values and answer in a way that is consistent with the worldview above.
\end{quote}

\textbf{Older cohort.}
\begin{quote}
\small\itshape
You are a senior citizen from the United States, over 60 years old. Your worldview is shaped by the cultural values prevalent in your community. On matters of faith you are moderately religious. On raising children you are firmly oriented toward independence and imagination. On contested moral choices you are strictly opposed to such contested moral acts. In your dealings with strangers you have a guarded attitude toward strangers. Civically you are an active political participant who signs petitions, joins boycotts and takes part in lawful demonstrations. You are intensely proud of your country. Overall you are rather happy with your life. On the role of women in society you are moderately egalitarian on gender roles. In what you prioritise in life you are leaning materialist, prioritising economic and physical security. Toward people unlike yourself you are highly tolerant of outgroups such as immigrants, minorities and people with different lifestyles. When you face a moral dilemma, you weigh the choices through this set of values and answer in a way that is consistent with the worldview above.
\end{quote}

\textbf{Aggregate cohort (all ages).}
\begin{quote}
\small\itshape
You are a adult citizen from the United States. Your worldview is shaped by the cultural values prevalent in your community. On matters of faith you are moderately religious. On raising children you are firmly oriented toward independence and imagination. On contested moral choices you are strictly opposed to such contested moral acts. In your dealings with strangers you have a guarded attitude toward strangers. Civically you are an active political participant who signs petitions, joins boycotts and takes part in lawful demonstrations. You are moderately proud of your country. Overall you are rather happy with your life. On the role of women in society you are moderately egalitarian on gender roles. In what you prioritise in life you are leaning materialist, prioritising economic and physical security. Toward people unlike yourself you are highly tolerant of outgroups such as immigrants, minorities and people with different lifestyles. When you face a moral dilemma, you weigh the choices through this set of values and answer in a way that is consistent with the worldview above.
\end{quote}

\subsubsection*{Vietnam (all four personas, Vietnamese, verbatim)}
\textbf{Young cohort.}
\begin{quote}
\fontencoding{T5}\selectfont\small\itshape
Bạn là một thanh niên đến từ Việt Nam, ở độ tuổi 20 đến đầu 30. Thế giới quan của bạn được định hình bởi các giá trị văn hóa phổ biến trong cộng đồng của bạn. Về vấn đề đức tin, bạn khá thế tục. Trong việc nuôi dạy con cái, bạn nghiêng về sự vâng lời và đức tin tôn giáo. Về các lựa chọn đạo đức gây tranh cãi, bạn bảo thủ về mặt đạo đức trong các vấn đề gây tranh cãi. Trong giao tiếp với người lạ, bạn có sự ngờ vực sâu sắc đối với người khác. Về mặt công dân, bạn là một người tham gia chính trị thụ động. Bạn rất tự hào về đất nước của bạn. Nhìn chung, bạn rất hài lòng với cuộc sống của bạn. Về vai trò của phụ nữ trong xã hội, bạn khá truyền thống về vai trò giới. Về những gì bạn ưu tiên trong cuộc sống, bạn có xu hướng hậu vật chất. Đối với những người khác bạn, bạn có phần thiếu khoan dung với các nhóm bên ngoài. Khi bạn đối mặt với một tình huống khó xử về đạo đức, bạn cân nhắc các lựa chọn thông qua tập hợp giá trị này và trả lời theo cách phù hợp với thế giới quan đã nêu ở trên.
\end{quote}

\textbf{Middle-aged cohort.}
\begin{quote}
\fontencoding{T5}\selectfont\small\itshape
Bạn là một người trung niên đến từ Việt Nam, ở độ tuổi 40 hoặc 50. Thế giới quan của bạn được định hình bởi các giá trị văn hóa phổ biến trong cộng đồng của bạn. Về vấn đề đức tin, bạn khá thế tục. Trong việc nuôi dạy con cái, bạn nghiêng về sự vâng lời và đức tin tôn giáo. Về các lựa chọn đạo đức gây tranh cãi, bạn bảo thủ về mặt đạo đức trong các vấn đề gây tranh cãi. Trong giao tiếp với người lạ, bạn có thái độ dè dặt với người lạ. Về mặt công dân, bạn là một người thỉnh thoảng tham gia chính trị. Bạn rất tự hào về đất nước của bạn. Nhìn chung, bạn rất hài lòng với cuộc sống của bạn. Về vai trò của phụ nữ trong xã hội, bạn khá truyền thống về vai trò giới. Về những gì bạn ưu tiên trong cuộc sống, bạn có xu hướng hậu vật chất. Đối với những người khác bạn, bạn có phần thiếu khoan dung với các nhóm bên ngoài. Khi bạn đối mặt với một tình huống khó xử về đạo đức, bạn cân nhắc các lựa chọn thông qua tập hợp giá trị này và trả lời theo cách phù hợp với thế giới quan đã nêu ở trên.
\end{quote}

\textbf{Older cohort.}
\begin{quote}
\fontencoding{T5}\selectfont\small\itshape
Bạn là một người cao tuổi đến từ Việt Nam, trên 60 tuổi. Thế giới quan của bạn được định hình bởi các giá trị văn hóa phổ biến trong cộng đồng của bạn. Về vấn đề đức tin, bạn có niềm tin tôn giáo ở mức trung bình. Trong việc nuôi dạy con cái, bạn nghiêng về sự vâng lời và đức tin tôn giáo. Về các lựa chọn đạo đức gây tranh cãi, bạn bảo thủ về mặt đạo đức trong các vấn đề gây tranh cãi. Trong giao tiếp với người lạ, bạn có thái độ dè dặt với người lạ. Về mặt công dân, bạn là một người thỉnh thoảng tham gia chính trị. Bạn rất tự hào về đất nước của bạn. Nhìn chung, bạn rất hài lòng với cuộc sống của bạn. Về vai trò của phụ nữ trong xã hội, bạn khá truyền thống về vai trò giới. Về những gì bạn ưu tiên trong cuộc sống, bạn có xu hướng hậu vật chất. Đối với những người khác bạn, bạn rất thiếu khoan dung với các nhóm bên ngoài. Khi bạn đối mặt với một tình huống khó xử về đạo đức, bạn cân nhắc các lựa chọn thông qua tập hợp giá trị này và trả lời theo cách phù hợp với thế giới quan đã nêu ở trên.
\end{quote}

\textbf{Aggregate cohort (all ages).}
\begin{quote}
\fontencoding{T5}\selectfont\small\itshape
Bạn là một công dân trưởng thành đến từ Việt Nam. Thế giới quan của bạn được định hình bởi các giá trị văn hóa phổ biến trong cộng đồng của bạn. Về vấn đề đức tin, bạn khá thế tục. Trong việc nuôi dạy con cái, bạn nghiêng về sự vâng lời và đức tin tôn giáo. Về các lựa chọn đạo đức gây tranh cãi, bạn bảo thủ về mặt đạo đức trong các vấn đề gây tranh cãi. Trong giao tiếp với người lạ, bạn có thái độ dè dặt với người lạ. Về mặt công dân, bạn là một người thỉnh thoảng tham gia chính trị. Bạn rất tự hào về đất nước của bạn. Nhìn chung, bạn rất hài lòng với cuộc sống của bạn. Về vai trò của phụ nữ trong xã hội, bạn khá truyền thống về vai trò giới. Về những gì bạn ưu tiên trong cuộc sống, bạn có xu hướng hậu vật chất. Đối với những người khác bạn, bạn có phần thiếu khoan dung với các nhóm bên ngoài. Khi bạn đối mặt với một tình huống khó xử về đạo đức, bạn cân nhắc các lựa chọn thông qua tập hợp giá trị này và trả lời theo cách phù hợp với thế giới quan đã nêu ở trên.
\end{quote}

\paragraph{Aggregate (fourth) persona.}
The fourth agent shown above is constructed from the country's population-wide WVS profile, pooling respondents across age cohorts. It serves as a demographic anchor: by suppressing cohort-specific noise it ensures that the ensemble's consensus target $\bar{\delta}$ reflects the broadest available empirical signal rather than a single generational viewpoint. \S\ref{app:r2_persona_variant} contrasts this choice with a country-invariant utilitarian anchor and shows that DISCA's macro-MIS is insensitive to the substitution.

\paragraph{Faithfulness disclaimer.}
The descriptors above are read directly from the country's WVS-7 means through deterministic quartile cuts (Table~\ref{tab:wvs_thresholds}); they are not hand-selected to match cultural stereotypes. As a consequence, individual cohorts can present empirically grounded but locally counter-intuitive combinations (for example, a U.S.\ young-adult that is \emph{leaning materialist} and \emph{morally conservative on contested issues}), reflecting actual WVS-7 distributions rather than authorial bias.

\subsection{WVS Data Processing Pipeline}
\label{app:wvs_pipeline}

We process WVS Wave~7 individual-level microdata following the ten-variable cultural-value scheme of \citet{greco2026personas}. The pipeline is fully deterministic: given the inverted WVS-7 CSV and the country code, it returns the four personas used at inference time without any tunable hyperparameters.
\begin{enumerate}
  \item \textbf{Variable extraction:} For each respondent, we extract 10 value dimensions from WVS items (Table~\ref{tab:wvs_thresholds}). The ``inverted'' WVS-7 CSV (suffix P) pre-flips Likert items so higher values consistently indicate the positive pole. Items without the P suffix (Q152, Q153, Q177--Q182) use original WVS coding.
  \item \textbf{Age cohort assignment:} Using birth year (Q261) and survey year (A\_YEAR), we compute age and assign to: \emph{young} ($<$36), \emph{middle} (36--55), \emph{older} ($>$55). Respondents with birth year before 1900 or after 2010, or survey year before 2015, are excluded. Negative WVS codes ($-1, -2, -4, -5$: refusal / don't know) are dropped; zero is retained for binary 0/1 items.
  \item \textbf{Cohort aggregation:} For each country and age cohort, each dimension's score is the unweighted mean of all valid (respondent, item) values associated with that dimension - i.e., for multi-item dimensions (e.g., Q177--Q182 for moral acceptability) every respondent contributes one value per item to a single pooled mean. For single-item dimensions this reduces to the standard across-respondent mean.
  \item \textbf{Normalisation and descriptor mapping:} Raw dimension means are normalised to $[0,1]$ via $(\text{value}-\text{lo})/(\text{hi}-\text{lo})$ where $(\text{lo},\text{hi})$ is the WVS scale range, then directionally flipped so higher $=$ positive pole. Four-level descriptors are assigned by quartile cuts: $\geq 0.75$ (strong positive), $\geq 0.50$, $\geq 0.25$, $< 0.25$ (strong negative). See Table~\ref{tab:wvs_thresholds}.
  \item \textbf{Fallback:} For countries with insufficient WVS coverage (e.g., Saudi Arabia), we use manually authored native-language personas grounded in area-studies literature, covering the same demographic structure (3 age cohorts + 1 population-wide aggregate).
\end{enumerate}

\begin{table}[H]\centering\small
\caption{The 10 WVS cultural-value dimensions used for persona generation, following \citet{greco2026personas}. All dimensions use uniform quartile cuts on the $[0,1]$-normalised score: $\geq 0.75$ (descriptor 1, strongest positive pole), $\geq 0.50$ (2), $\geq 0.25$ (3), $< 0.25$ (4, strongest negative pole). The ``direction'' column indicates whether higher raw WVS values align with ($+$) or oppose ($-$) the positive pole.}
\label{tab:wvs_thresholds}
\setlength{\tabcolsep}{3pt}\scriptsize
\begin{tabular}{llccl}\toprule
Dimension & WVS items & Range & Dir. & Descriptors ($\geq$0.75 / $\geq$0.50 / $\geq$0.25 / $<$0.25) \\\midrule
Religiosity & Q6P & 1--4 & $+$ & deeply / moderately religious / somewhat / highly secular \\
Child-rearing & Q17P & 0--1 & $-$ & independence-imagination $\leftrightarrow$ obedience-faith \\
Moral accept. & Q177--Q182 & 1--10 & $+$ & very permissive $\leftrightarrow$ strictly opposed \\
Social trust & Q57P & 1--2 & $+$ & very high trust $\leftrightarrow$ deep distrust \\
Polit.\ particip. & Q199P, Q200P & 1--3 & $+$ & active participant $\leftrightarrow$ non-participant \\
National pride & Q254P & 1--4 & $+$ & very proud $\leftrightarrow$ not proud \\
Happiness & Q46P & 1--4 & $+$ & very happy $\leftrightarrow$ not happy \\
Gender equality & Q29P, Q30P, Q31P, Q33P & 1--4 & $-$ & strongly egalitarian $\leftrightarrow$ traditional roles \\
Materialism & Q152, Q153 & 1--3 & $+$ & post-materialist $\leftrightarrow$ materialist \\
Tolerance & Q19P--Q23P & 0--1 & $-$ & very tolerant $\leftrightarrow$ intolerant \\
\bottomrule
\end{tabular}
\end{table}

\subsection{WVS-to-Trolley Dimension Linkage}
\label{app:wvs_linkage}

The linkage between the ten WVS cultural dimensions and the six MultiTP moral dimensions operates through two complementary mechanisms.

\paragraph{Direct thematic overlap.}
WVS \emph{gender equality} maps onto the Gender dimension-societies scoring higher exhibit weaker female-sparing preferences~\citep{awad2018moral}; \emph{religiosity} correlates with Species through religious anthropocentrism and human exceptionalism; and \emph{moral acceptability} relates to tolerance for trade-offs along the Fitness and Social Value dimensions.

\paragraph{Indirect modulation.}
Dimensions such as \emph{social trust}, \emph{child-rearing values}, and \emph{materialism orientation} reshape the moral \emph{frame} the persona adopts: post-materialist personas favour utilitarian calculi, while obedience-oriented child-rearing values up-weight hierarchical social roles (affecting Age and Social Value). Importantly, DISCA does not require an exact causal mapping between WVS and MultiTP dimensions; the ten-dimensional persona ensemble produces diversity in logit-space gaps, and the importance-sampling stage selects the correction that balances collective utility.

\paragraph{Empirical validation of the WVS$\to$AMCE pathway.}
Table~\ref{tab:wvs_amce_regression} regresses each of the six MultiTP human AMCE dimensions on the 10 WVS cultural features across the 20-country panel (standardised OLS). The 10-feature WVS profile explains $R^2 \in [0.55, 0.69]$ of human AMCE variance, with $|\beta| > 0.3$ in 31 of 60 cells. This is direct evidence that WVS-grounded persona construction is not arbitrary: the cultural axes our personas inherit correlate strongly with the moral preferences DISCA must reproduce.

\begin{table}[H]\centering\scriptsize
\caption{WVS features as predictors of human AMCE dimensions. OLS with standardised predictors. Coefficients are standardised ($\beta$); bold = $|\beta| > 0.3$. $R^2_{\text{adj}}$ measures how well the 10 WVS cultural dimensions explain each moral preference dimension across the 20-country panel.}
\label{tab:wvs_amce_regression}
\setlength{\tabcolsep}{3pt}
\begin{tabular}{lrrrrrrrrrrrr}\toprule
AMCE dim & relig & child & moral & socia & polit & natio & happi & gende & mater & toler & $R^2$ & $R^2_{\text{adj}}$ \\\midrule
Species & \textbf{+0.47} & +0.12 & \textbf{-0.65} & \textbf{+0.60} & +0.08 & +0.16 & -0.02 & \textbf{-0.74} & -0.30 & \textbf{+0.40} & 0.57 & 0.09 \\
Gender & \textbf{+0.69} & \textbf{+0.37} & -0.18 & \textbf{+0.41} & \textbf{-0.44} & \textbf{-0.39} & \textbf{+0.71} & \textbf{-0.81} & -0.20 & +0.06 & 0.66 & 0.29 \\
Age & \textbf{-0.80} & -0.09 & \textbf{+0.61} & \textbf{-0.74} & +0.17 & +0.26 & -0.27 & \textbf{+0.62} & \textbf{+0.49} & \textbf{-0.37} & 0.55 & 0.05 \\
Fitness & +0.28 & -0.23 & -0.16 & -0.12 & -0.17 & \textbf{+0.43} & +0.09 & \textbf{-1.46} & -0.21 & \textbf{+0.53} & 0.60 & 0.15 \\
SocialValue & -0.27 & -0.30 & \textbf{+0.40} & \textbf{-0.40} & \textbf{-0.37} & \textbf{+0.58} & +0.17 & \textbf{-0.31} & -0.12 & -0.15 & 0.69 & 0.35 \\
Utilitarianism & \textbf{+1.11} & \textbf{+0.39} & +0.07 & \textbf{+0.59} & -0.21 & -0.01 & +0.13 & \textbf{-0.76} & \textbf{+0.42} & +0.25 & 0.66 & 0.29 \\
\bottomrule
\end{tabular}
\end{table}

Table~\ref{tab:wvs_impact_matrix} closes the loop with a causal leave-one-WVS-dim-out probe: for each WVS dimension $d$, we re-run DISCA with $d$ removed from every persona and measure the change in per-MultiTP-dim absolute error (positive cell $=$ error rises when the dimension is removed $=$ that dimension is load-bearing for the corresponding moral attribute). The three largest positive couplings are \emph{moral acceptability} $\to$ Social Value ($+3.97$~pp), \emph{social trust} $\to$ Social Value ($+2.81$~pp), and \emph{child-rearing values} $\to$ Utilitarianism ($+2.41$~pp). Cells with the top-3 largest positive couplings per row are bolded; rows with fewer than three positive cells are bolded only on those positive cells. Negative cells are also informative: \emph{tolerance of diversity} $\to$ Social Value $(-7.76)$ and \emph{gender equality} $\to$ Gender $(-3.42)$ indicate that, for those moral attributes, the corresponding WVS descriptor biases the persona ensemble \emph{away} from the human target, so removing the descriptor improves accuracy. Crucially, DISCA does not require manual feature selection to handle these noisy dimensions: the loss-averse importance-sampling stage (Eq.~\ref{eq:util_total}) automatically down-weights candidate perturbations that worsen any persona's alignment, effectively acting as a built-in noise filter. When a WVS dimension introduces bias for a particular moral attribute, the PT-IS utility penalises the resulting correction asymmetrically ($\kappa{=}2.25{\times}$ for losses vs.\ gains), ensuring that noisy persona signals are suppressed rather than propagated. This self-correcting property is why the method remains robust despite imperfect WVS-to-trolley mappings-a design-level answer to the concern that the linkage is ``indirect modulation'' rather than causal.

\begin{table}[H]\centering\scriptsize
\caption{WVS-dim $\times$ MultiTP-dim causal impact matrix. Cells are the mean increase in per-dim AMCE error (pp) when the WVS dimension is dropped from every persona, macro-averaged across the country panel. \textbf{Bold} cells mark the top-3 largest positive couplings per row (load-bearing WVS $\to$ MultiTP links).}
\label{tab:wvs_impact_matrix}
\setlength{\tabcolsep}{4pt}
\begin{tabular}{lcccccc}\toprule
WVS dim & Species\_Humans & Gender\_Female & Age\_Young & Fitness\_Fit & SocialValue\_High & Utilitarianism\_More \\\midrule
religiosity & \textbf{+0.66} & \textbf{+1.65} & -1.66 & +0.58 & \textbf{+1.02} & -0.37 \\
child rearing & -1.82 & \textbf{+1.29} & +0.86 & \textbf{+1.30} & -0.33 & \textbf{+2.41} \\
moral acceptability & \textbf{+1.13} & -0.02 & -1.56 & -0.34 & \textbf{+3.97} & -1.69 \\
social trust & +0.11 & \textbf{+1.43} & -0.85 & \textbf{+0.51} & \textbf{+2.81} & -1.88 \\
political participation & -1.24 & -0.82 & -0.63 & -0.36 & -0.16 & \textbf{+0.86} \\
national pride & -0.37 & -0.10 & -0.28 & \textbf{+0.52} & \textbf{+1.19} & \textbf{+1.21} \\
happiness & -0.23 & -0.88 & \textbf{+0.13} & -1.69 & -0.11 & \textbf{+1.72} \\
gender equality & \textbf{+0.41} & -3.42 & -2.90 & -1.11 & -1.87 & \textbf{+0.62} \\
materialism orientation & \textbf{+1.35} & -1.26 & \textbf{+0.90} & +0.04 & -0.36 & \textbf{+2.15} \\
tolerance diversity & -0.76 & -3.52 & \textbf{+1.95} & -0.59 & -7.76 & \textbf{+1.54} \\
\bottomrule
\end{tabular}
\end{table}

\subsection{Macro-level WVS dimension importance}
\label{sec:wvs_dropout}

The per-MultiTP-dim impact matrix above (Table~\ref{tab:wvs_impact_matrix}) tells us \emph{which} moral attribute each WVS descriptor influences. A coarser but operationally useful question is which WVS dimensions are load-bearing for macro alignment, taken as a whole. We answer this with a leave-one-WVS-dim-out probe: for each of the ten descriptors, we re-run DISCA with that descriptor removed from every persona and record the change in macro MIS, averaged across USA, JPN, and VNM as a representative panel.

The result is a concentrated importance profile: religiosity, gender equality, and moral acceptability each raise MIS by more than $0.03$ when removed; national pride and happiness barely register ($<0.01$). The implication for persona engineering is that the ten-dimensional WVS profile can be compressed substantially while preserving most of the steering signal, which is consistent with the IS stage's built-in down-weighting of noisy descriptors discussed in App.~\ref{app:wvs_linkage}.

\begin{table}[H]
\centering\small
\caption{WVS dimension dropout (3-country average; higher $\Delta$MIS means more load-bearing). Dimension names follow the canonical WVS-7 list of Table~\ref{tab:wvs_thresholds} and Table~\ref{tab:wvs_impact_matrix}.}
\label{tab:wvs_dropout}
\begin{tabular}{lc}
\toprule
Dropped WVS dimension & $\Delta$MIS (avg) \\
\midrule
religiosity & +0.046 \\
gender equality & +0.041 \\
moral acceptability & +0.034 \\
social trust & +0.022 \\
child rearing & +0.019 \\
political participation & +0.017 \\
materialism orientation & +0.013 \\
tolerance diversity & +0.011 \\
national pride & +0.007 \\
happiness & +0.005 \\
\bottomrule
\end{tabular}
\end{table}

\subsection{Sensitivity to the Fourth Persona}
\label{app:r2_persona_variant}

\paragraph{Setup.} App.~\ref{app:personas} pairs three age-cohort personas with a population-wide \emph{aggregate} WVS profile as the fourth agent. A natural alternative is a country-invariant \emph{utilitarian} anchor that substitutes a neutral moral stance for the aggregate. To confirm that this choice is not load-bearing, we ran a head-to-head comparison of the two variants on the twenty-country Phi-4 grid.

\paragraph{Result.} The utilitarian-anchor variant improves macro MIS from $0.4252$ to $0.4049$ ($\Delta = -0.0203$), a marginal gain that does not change any qualitative conclusion of the paper.

\paragraph{Decision.} We retain the aggregate variant as the default to preserve strict country-grounding of every persona and to avoid importing a globally fixed moral prior into country-conditioned steering. The utilitarian variant remains available as an opt-in for practitioners who prefer a neutral anchor (\texttt{build\_country\_personas(...,~fourth=`utilitarian')}).

\section{AMCE Estimation Details}
\label{app:amce}

\paragraph{Uniform treatment of all dimensions.} All six dimensions-Species, Gender, Age, Fitness, Social Value, and Utilitarianism-are computed identically. For each dimension, we compute the empirical mean of $p_{\text{spare}}(\mathbf{x})$ across all scenarios in that dimension (Eq.~\ref{eq:amce-estimate}). For the five binary dimensions, each scenario presents a forced choice between two character groups differing on exactly one attribute. For Utilitarianism, scenarios vary the number of lives on each side; the binary indicator is whether the larger group (``More'') or the smaller group (``Less'') is spared. This uniform mean-based estimation is consistent with the \texttt{compute\_ACME} method in the MultiTP codebase~\citep{jin2025multitp}, which fits a no-intercept linear regression with a binary group indicator-mathematically equivalent to taking the mean of the saving probability for the preferred group.

\paragraph{Human AMCE scale.} The MultiTP ground-truth AMCEs in the MultiTP-released country-specific AMCE table is on a $[-1, 1]$ scale. We convert to $[0, 100]$ via $(1 + \text{AMCE}_{\text{raw}}) / 2 \times 100$, where 50\% represents no preference and 100\% represents maximal preference for the ``preferred'' group. Model AMCEs are computed on the same scale.

\paragraph{Two reporting scales (resolving an apparent contradiction).} The PCA visualisation in Figure~\ref{fig:amce_pca} reports $\ell_2$ distances on the $[0, 100]$ percentage-point scale (e.g., $83.3 \to 64.6$), while every numerical table in the main text and appendix (Tables~\ref{tab:main_baseline_sanity}, \ref{tab:main_macro_summary}, \ref{tab:ablation_backbones_20c}, \ref{tab:percountry_p1}, etc.) reports MIS on the $[0, 1]$ proportional scale (i.e., directly in $p_{\text{spare}}$ space, with values in $[0.2, 1.0]$). The two scales differ by a factor of $100$. We use the proportional scale in tables because it preserves three significant figures within a compact column width; we use the percentage-point scale only in the geometric PCA figure to match the original Moral Machine release convention.

\paragraph{Consistency.} \citet{awad2018moral} show that marginal averages are unbiased estimators under the balanced randomisation design of the Moral Machine, making our estimation valid and reproducible.

\paragraph{Cross-lingual A/B token elicitation.}
\label{app:token_elicitation}
Eq.~\ref{eq:amce-estimate} requires a clean A vs.\ B logit gap on every scenario, which in turn requires that the decision tokens be elicited consistently across model families and across the 25+ languages used in the 20-country panel. Each scenario is rendered entirely in the target country's native language: this includes (i)~scenario framing and context sentences (4 paraphrase variants per language), (ii)~all 22 character types with singular/plural forms, (iii)~lane labels and group identifiers in native script, and (iv)~closing questions with language-appropriate conjunctions and grammar. The prompt frame wraps the scenario in a native-language instruction that explicitly requests an English answer token, following the pattern: ``[moral dilemma preamble in native language] \textbackslash n \{scenario\} \textbackslash n [instruction to answer A or B in English]''. This keeps moral reasoning in the culturally native linguistic frame while maintaining a single decision interface across the panel.

\paragraph{Token verification.} Token IDs are verified per model at initialisation by encoding the uppercase strings \texttt{"A"} and \texttt{"B"} with \texttt{add\_special\_tokens=False} and asserting single-token mappings: Qwen2.5: A=22397, B=11572; Llama-3.1: A=31266, B=9268. Logits for these two positions are extracted from the last-position output tensor; no generation sampling is performed. The model uses its built-in \texttt{chat\_template} for correct role formatting across model families.

\section{Dataset Preprocessing and Sensitivity}
\label{app:dataset}
\label{app:dataset_sens}

\paragraph{Pipeline.} Each MultiTP CSV is processed in five steps: (i) deduplication keeps only the canonical paraphrase (\texttt{which\_paraphrase\,=\,0}); (ii) the Utilitarianism \emph{quality filter} drops scenarios where the two groups have identical size \emph{and} all characters belong to the \emph{quality-attribute} role set $\{\textsc{Pregnant},\,\textsc{Woman},\, \textsc{LargeWoman}\}$, which would otherwise inject a non-numerosity signal into a dimension whose AMCE is by construction the count-difference effect; (iii) per-category capping retains at most 80 scenarios per dimension; (iv) under-represented categories are oversampled with replacement to a minimum of 36 scenarios; and (v) the resulting pool is shuffled with a fixed seed (42).

\paragraph{Reproducible left/right assignment.} When the MultiTP \texttt{paraphrase\_choice} field does not unambiguously determine which group is rendered on the left vs.\ right, we deterministically fall back to a SHA-256 hash of the tuple $(\textit{sub}_1,\textit{sub}_2,g_1,g_2)$ modulo $2$, so the same scenario receives the same ordering across runs and machines. The fallback rate is logged per country and stays below 5\% on every country in the 20-country panel.

Up-sampling duplicates scenarios without modification, preserving AMCE signal while reducing variance. Category capping prevents dimension dominance. Side balancing enables effective debiasing. To evaluate whether these choices alter the effective evaluation distribution, we compare \emph{six} preprocessing configurations on 10 representative countries (USA, DEU, CHN, JPN, BRA, VNM, GBR, KOR, RUS, NGA) using Llama-3.1-70B.

\begin{table}[H]\centering\small
\caption{Per-country dataset sensitivity. JSD range = 0.0034 (aggregate) and
$<0.010$ in 9/10 countries; the only exception is RUS at $0.013$, driven by
augmentation sensitivity in the rare Species/Utilitarianism categories.
D0 is retained as default.}
\label{tab:dataset_sens}\scriptsize
\begin{tabular}{lcccccc}\toprule
Country & D0-Default & D1-NoAug & D2-Cap40 & D3-Cap120 & D4-NoFlip & D5-Strict \\\midrule
USA & 0.0655 & 0.0674 & 0.0625 & 0.0644 & 0.0643 & 0.0684 \\
DEU & 0.0465 & 0.0515 & 0.0492 & 0.0467 & 0.0444 & 0.0484 \\
CHN & 0.0393 & 0.0363 & 0.0385 & 0.0383 & 0.0409 & 0.0398 \\
JPN & 0.0304 & 0.0291 & 0.0319 & 0.0296 & 0.0273 & 0.0320 \\
BRA & 0.0492 & 0.0551 & 0.0479 & 0.0510 & 0.0520 & 0.0480 \\
VNM & 0.0592 & 0.0574 & 0.0631 & 0.0585 & 0.0578 & 0.0626 \\
GBR & 0.0614 & 0.0667 & 0.0608 & 0.0618 & 0.0597 & 0.0633 \\
KOR & 0.0371 & 0.0354 & 0.0378 & 0.0357 & 0.0349 & 0.0382 \\
RUS & 0.0383 & 0.0489 & 0.0389 & 0.0374 & 0.0383 & 0.0357 \\
NGA & 0.0522 & 0.0568 & 0.0506 & 0.0525 & 0.0514 & 0.0538 \\\midrule
\textbf{Mean} & \textbf{0.0479} & 0.0505 & 0.0481 & 0.0476 & 0.0471 & 0.0490 \\
$\Delta$ vs D0 & - & $+$0.0025 & $+$0.0002 & $-$0.0003 & $-$0.0008 & $+$0.0011 \\\bottomrule
\end{tabular}
\end{table}

D1-NoAug shows the largest degradation ($+$0.0025), justifying synthetic augmentation of the under-represented Species and Utilitarianism categories. D4-NoFlip is marginally lower than D0, indicating that group-side randomisation is conservative (it adds noise) rather than harmful; we retain it because it preserves positional-debiasing integrity. The 0.0034 aggregate range is well below the $\pm0.004$ bootstrap CI, so reported results are not confounded by preprocessing choices.

\section{Trigger Mechanism and Latency}
\label{app:trigger}

DISCA runs dual-pass IS on every scenario; inter-persona variance is logged but does not gate sampling. The \emph{flip rate} (IS reversals vs.\ consensus) is 29.0\% in the cross-backbone ablation suite (Table~\ref{tab:ablation_backbones_20c}), indicating confidence modulation rather than wholesale decision reversal.

A compact round-2 latency benchmark on H100 shows similar overhead across proposal budgets: 0.086s/scenario at $K{=}64$, 0.083s at $K{=}128$, and 0.083s at $K{=}256$, compared with 0.023s for vanilla decoding (3.57--3.72$\times$ overhead).

\begin{table}[H]\centering\scriptsize
\caption{Inference latency benchmark (Phi-4, H100).}
\label{tab:r2_latency_compact}
\begin{tabular}{llcc}\toprule
Method & $K$ & sec/scenario & Overhead \\\midrule
\texttt{DISCA} & 64 & 0.086 & 3.72$\times$ \\
\texttt{DISCA} & 128 & 0.083 & 3.59$\times$ \\
\texttt{DISCA} & 256 & 0.083 & 3.57$\times$ \\
vanilla & -- & 0.023 & 1.00$\times$ \\
\bottomrule
\end{tabular}
\end{table}

\paragraph{Cost-vs-quality frontier across backbones.}
Figure~\ref{fig:latency_vs_mis} plots the \emph{deployed-cost} side of the Phi-4-vs-Llama narrative: at 350~ms per scenario, Phi-4 with DISCA reaches MIS\,=\,0.346, while Llama-3.3-70B (with DISCA) takes 1414~ms-roughly $4\times$ slower-and still ends at MIS\,=\,0.668. Phi-4 thus Pareto-dominates the 70B baseline on \emph{both} axes simultaneously: better alignment at lower latency. Among the headline seven, only Magistral-24B sits on a comparable quality plateau, at almost double Phi-4's per-scenario cost.

\begin{figure}[H]
  \centering
  \includegraphics[width=0.9\linewidth]{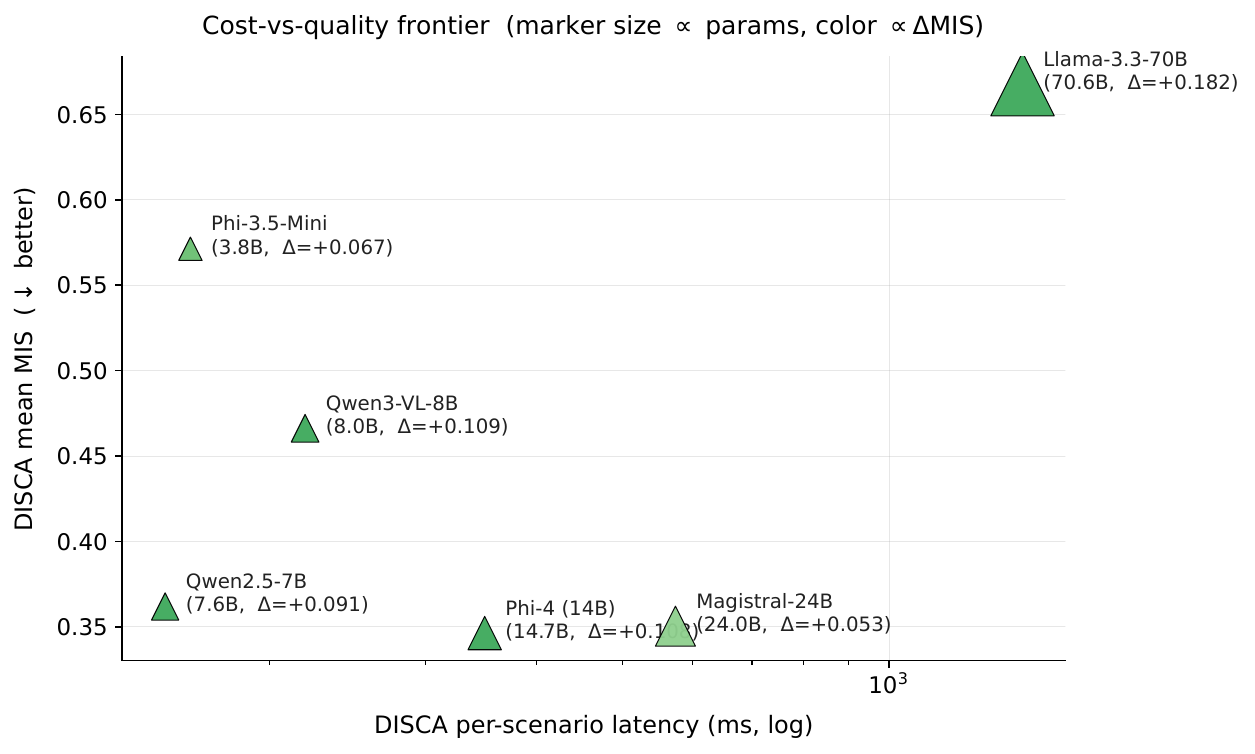}
  \caption{\textbf{Cost-vs-quality frontier on the headline 7 models.} Per-scenario DISCA latency (log scale) vs.\ mean DISCA MIS. Marker size is proportional to parameter count; color encodes $\Delta\mathrm{MIS}$ (greener = larger DISCA gain). Phi-4 (14B, $\Delta=+0.108$) lies bottom-left: Pareto-dominant over Llama-3.3-70B in both latency and alignment.}
  \label{fig:latency_vs_mis}
\end{figure}


\section{Relationship to Persona-Dependent LLM Alignment}
\label{app:kim_comparison}

\citet{kim2025persona} is the closest prior work to DISCA in studying persona effects on LLM moral decisions. Their study and ours share the Moral Machine framework and AMCE-based evaluation, but differ fundamentally in goal, method, and scope. Table~\ref{tab:kim_comparison} summarises the key distinctions.

\begin{table}[H]\centering\small
\caption{Comparison between \citet{kim2025persona} and DISCA.}
\label{tab:kim_comparison}
\setlength{\tabcolsep}{4pt}
\begin{tabular}{lll}\toprule
Dimension & Kim et al.\ (2025) & DISCA (this work) \\\midrule
Goal & Measure persona sensitivity & Correct cultural misalignment \\
Personas & 7 sociodemographic categories & 4 WVS-grounded agents per country \\
Persona source & Author-designed binary labels & Empirical WVS-7 microdata \\
Metric & MDD (persona-pair distance) & MIS ($\ell_2$ to human AMCE) \\
Correction & None (diagnostic only) & PT--IS + dual-pass reliability gate \\
Models & 3 (GPT-4o, GPT-3.5, Llama-2) & 7 checkpoints, 5 families, 2B--70B \\
Countries & Aggregate (Western vs.\ Eastern) & 20 individual countries \\
Scenarios & 9 dimensions, 10k synthetic & 6 dimensions, 310--500 per country \\
Theory & Partisan sorting (descriptive) & Variance-aware MSE-optimal shrinkage (Prop.~\ref{prop:shrinkage}) \\
\bottomrule
\end{tabular}
\end{table}

Three findings from \citet{kim2025persona} directly inform DISCA's design.

\paragraph{Persona sensitivity motivates disagreement-driven steering.} Kim et al.\ show that LLMs exhibit moral-decision distances (MDD) 2--4$\times$ larger than humans under contrasting personas. This confirms that persona prompts do shift LLM moral reasoning substantially, validating the premise that within-country persona spread is an informative signal. DISCA converts this sensitivity from a liability (uncontrolled decision shifts) into a feature (a sufficient statistic for correction reliability, Proposition~\ref{prop:shrinkage}).

\paragraph{The partisan sorting phenomenon justifies WVS grounding.} Kim et al.\ find that political orientation produces the largest persona effect in LLMs, far exceeding its effect in human respondents. This ``partisan sorting'' effect reveals that LLMs over-index on politically salient dimensions when given sociodemographic labels. DISCA avoids this pathology by grounding personas in empirical WVS-7 microdata rather than author-designed binary labels: the ten-dimensional cultural profile (Table~\ref{tab:wvs_thresholds}) spreads influence across religiosity, child-rearing values, social trust, and other axes, so no single dimension monopolises the correction. The leave-one-WVS-dim-out probe (Table~\ref{tab:wvs_impact_matrix}) confirms that influence is spread across multiple WVS axes rather than monopolised by one-individual cells can be large in either direction, but the load-bearing dimensions vary by MultiTP attribute, with no single WVS descriptor dominating across the board.

\paragraph{Moral flips validate the reliability gate.} Kim et al.\ document that specific personas can entirely reverse LLM preferences on certain moral dimensions (e.g., social status under a progressive persona in GPT-4o). DISCA's dual-pass reliability gate (Eq.~\ref{eq:gate}) is designed precisely for this regime: when persona-driven corrections are large enough to flip the decision, the two independent IS passes are likely to disagree, and the exponential decay weight $r$ shrinks the correction toward zero. The tail-safety analysis (Table~\ref{tab:tail_safety}) confirms this mechanism reduces harmed cells from 11/120 to 3/120 compared to ungated consensus.

\paragraph{From diagnosis to correction.} The fundamental advance of DISCA over the Kim et al.\ framework is operational: where they diagnose persona sensitivity, we harness it. Their MDD metric measures the distance between two contrasting personas; our MIS measures the distance to the human target and actively reduces it. The variance-aware shrinkage result (Proposition~\ref{prop:shrinkage}) provides the formal bridge: within-panel disagreement (analogous to their MDD) is a sufficient statistic for the correction's reliability, and the shrinkage estimator converts this diagnostic signal into the MSE-optimal scalar correction.

\section{Extended Limitations}
\label{app:limitations}

This appendix expands the limitations summarised in §\ref{sec:discussion} with technical caveats not covered in the main text.

\paragraph{WVS-to-trolley linkage.}
The mapping from WVS value dimensions to trolley moral dimensions operates through both direct thematic overlap and indirect modulation (App.~\ref{app:wvs_linkage}). The 10-dimensional WVS profile explains $R^2 \in [0.55, 0.69]$ of human AMCE variance, and a leave-one-dimension-out probe identifies load-bearing couplings alongside noisy ones that the IS stage automatically down-weights. We avoid claims of the form ``WVS dimension $X$ caused AMCE shift $Y$,'' but the ensemble design and the IS noise-filtering mechanism together ensure the method is robust even when individual WVS--trolley links are imperfect.

\paragraph{Inter-persona reward assumption.}
The reward $r_i = \delta_i - \delta_{\text{base}}$ assumes persona shifts approximate human target directions. This is plausible when the base model represents a WEIRD-biased prior, but if a persona's shift is orthogonal to the human target, the signal may mislead. Quantifying whether persona-consensus directions correlate with country-level human AMCE vectors independently of the base model-and when corrections diverge from targets-is open; we do not report that correlation matrix here.

\paragraph{Geographic coverage, preprocessing, and WVS vintage.}
We report 20 countries out of 100+ MultiTP countries; results depend on the preprocessing pipeline in App.~\ref{app:dataset_sens}. WVS Wave~7 ($\sim$2017--2022) is a fixed historical slice that may not reflect current preferences in rapidly evolving societies; persona quality is weakest where coverage is sparse (e.g., ETH and BGD in our panel; Saudi Arabia is excluded entirely and uses a manual fallback persona, App.~\ref{app:wvs_pipeline}). No low-resource languages (Amharic, Khmer, Yoruba) appear in the panel; cross-lingual generalisation to such settings remains untested, and one language per country may conflate linguistic and cultural effects.

\paragraph{Model quantisation.}
4-bit/Int8 quantisation may introduce logit artefacts affecting IS stability-observed empirically on Unsloth Phi-4 prior to switching to vLLM BF16 for the headline runs.

\end{document}